\def\eqref#1{equation~\ref{#1}}
\def\1{\bm{1}}
\DeclareMathAlphabet{\mathsfit}{\encodingdefault}{\sfdefault}{m}{sl}
\SetMathAlphabet{\mathsfit}{bold}{\encodingdefault}{\sfdefault}{bx}{n}
\newcommand{\softmax}{\mathrm{softmax}}
\newtheorem{assumption}{Assumption}
\newtheorem{corollary}{Corollary}
\newtheorem{lemma}{Lemma}
\newcommand{\CL}{\mathrm{CL}}
\newcommand{\NS}{\mathrm{NSCL}}
\newcommand{\CKA}{\mathrm{CKA}}
\newcommand{\RSA}{\mathrm{RSA}}
\newcommand{\Corr}{\mathrm{Corr}}
\newcommand{\positive}{\mathrm{pos}}
\newcommand{\negative}{\mathrm{neg}}
\newcommand{\one}{\mathbf{1}}
\newcommand{\RDM}{\mathrm{RDM}}
\newcommand{\expnum}[2]{#1\mathrm{e}^{-#2}}
\title{On the Alignment Between Supervised and Self-Supervised Contrastive Learning}
\author{%
  Achleshwar Luthra
  \And
  Priyadarsi Mishra
  \And
  Tomer Galanti
  \AND
  \vspace{-1.5em} \\
  \texttt{\{luthra,priyadarsimishra,galanti\}@tamu.edu} \\
  Department of Computer Science and Engineering \\
  Texas A\&M University
}
\begin{document}

\maketitle

\begin{abstract}
Self-supervised contrastive learning (CL) has achieved remarkable empirical success, often producing representations that rival supervised pre-training on downstream tasks. Recent theory explains this by showing that the CL loss closely approximates a supervised surrogate, Negatives-Only Supervised Contrastive Learning (NSCL) loss, as the number of classes grows. Yet this loss-level similarity leaves an open question: {\em Do CL and NSCL also remain aligned at the representation level throughout training, not just in their objectives?}

We address this by analyzing the representation alignment of CL and NSCL models trained under shared randomness (same initialization, batches, and augmentations). First, we show that their induced representations remain similar: specifically, we prove that the similarity matrices of CL and NSCL stay close under realistic conditions. Our bounds provide high-probability guarantees on alignment metrics such as centered kernel alignment (CKA) and representational similarity analysis (RSA), and they clarify how alignment improves with more classes, higher temperatures, and its dependence on batch size. In contrast, we demonstrate that parameter-space coupling is inherently unstable: divergence between CL and NSCL weights can grow exponentially with training time.

Finally, we validate these predictions empirically, showing that CL–NSCL alignment strengthens with scale and temperature, and that NSCL tracks CL more closely than other supervised objectives. This positions NSCL as a principled bridge between self-supervised and supervised learning. Our code and project page are available at [\href{https://github.com/DLFundamentals/understanding_ssl_v2}{code}, \href{https://dlfundamentals.github.io/cl-nscl-representation-alignment/}{project page}].
\end{abstract}


\section{Introduction}

Self-supervised learning (SSL) has become the dominant approach for extracting transferable representations from large-scale unlabeled data. By leveraging training signals derived directly from the data, SSL methods avoid costly annotation while producing features that generalize across modalities, from vision~\citep{pmlr-v119-chen20j,He_2020_CVPR,zbontar2021barlow,he2022masked,oquab2024dinov2learningrobustvisual} 
to language~\citep{gao-etal-2021-simcse,reimers-gurevych-2019-sentence}, 
speech~\citep{schneider2019wav2vecunsupervisedpretrainingspeech,10.5555/3495724.3496768,10.1109/TASLP.2021.3122291,pmlr-v162-baevski22a}, 
and vision–language~\citep{pmlr-v139-radford21a,pmlr-v139-jia21b,Zhai_2023_ICCV,tschannen2025siglip2multilingualvisionlanguage}. Among SSL approaches, \emph{contrastive learning (CL)} has been particularly successful: methods such as SimCLR~\citep{pmlr-v119-chen20j}, MoCo~\citep{He_2020_CVPR,Chen_2021_ICCV}, and CPC~\citep{oord2019representationlearningcontrastivepredictive} train encoders by pulling together augmented views of the same input while pushing apart other samples. This simple principle has yielded state-of-the-art performance, often rivaling or surpassing supervised pre-training.

Despite this empirical success, a central puzzle remains: why does CL recover features so well aligned with semantic class boundaries? CL models often support nearly supervised-level downstream performance~\citep{amir2021deep,shaul2023reverse,weng2025clusteringpropertiesselfsupervisedlearning}, suggesting that supervision is somehow implicit in the objective. Recent theoretical progress sheds light on this: \citet{luthra2025selfsupervisedcontrastivelearningapproximately} showed that the CL objective closely approximates a supervised variant, \emph{Negatives-Only Supervised Contrastive Learning (NSCL)}, where same-class samples are excluded from the denominator. Their analysis established that the CL–NSCL \emph{losses} converge as the number of classes grows, and further characterized the geometry of NSCL minimizers and their linear probe performance. These results indicate that CL implicitly carries a supervised signal at the \emph{loss level}.

Yet this view leaves a crucial question unresolved:
\begin{tcolorbox}[colback=blue!5!white, colframe=blue!10!black, arc=1pt]
\centering
\textbf{\em Do contrastive and supervised contrastive models remain \\ aligned throughout training, not just at the level of their objectives?}
\end{tcolorbox}
\begin{wrapfigure}[16]{r}{0.7\textwidth} 
\vspace{-\baselineskip} 
\centering
\setlength{\tabcolsep}{5pt}
\begin{tabular}{ccc}
\includegraphics[width=0.298\linewidth]{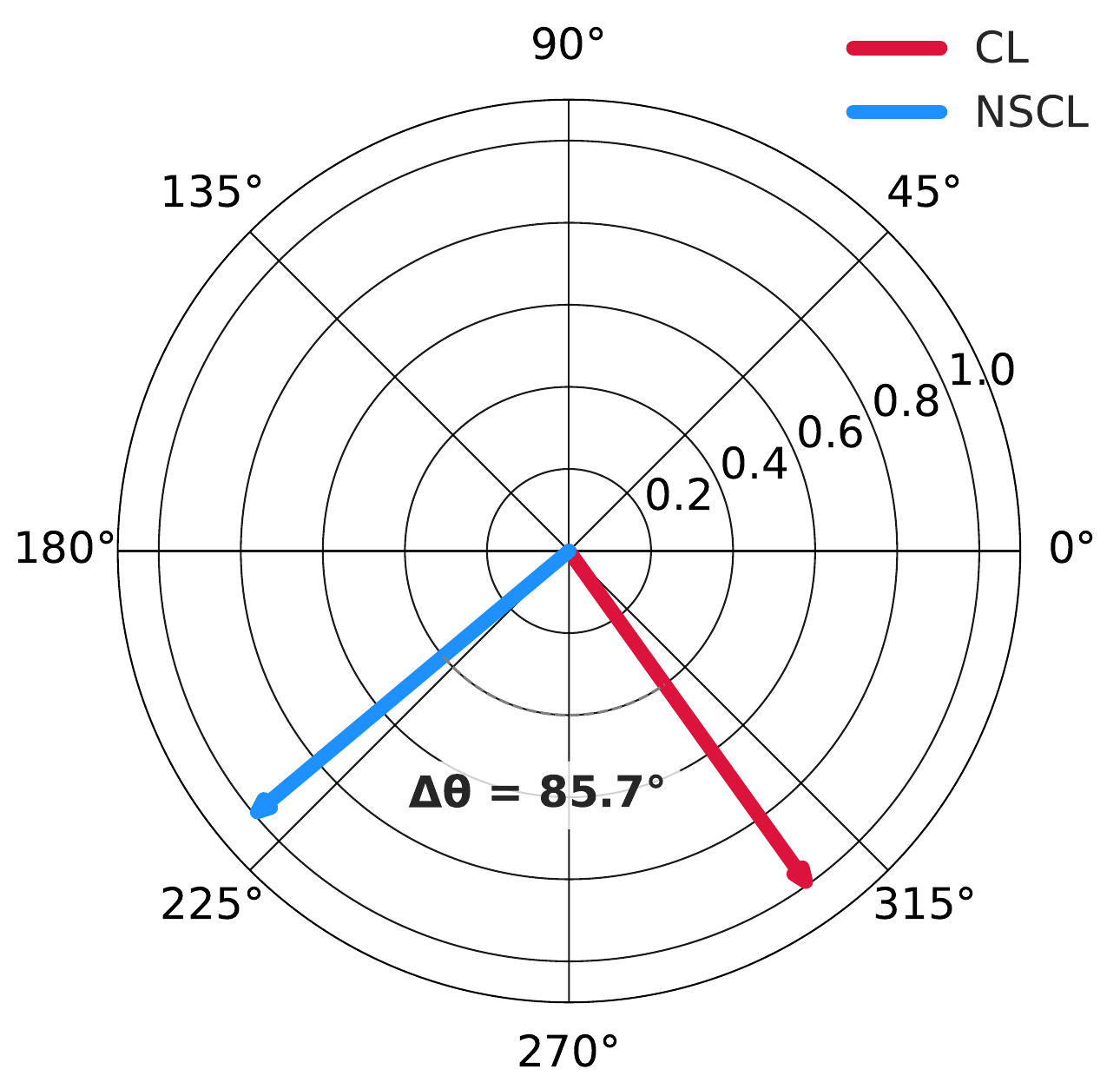} &
\includegraphics[width=0.298\linewidth]{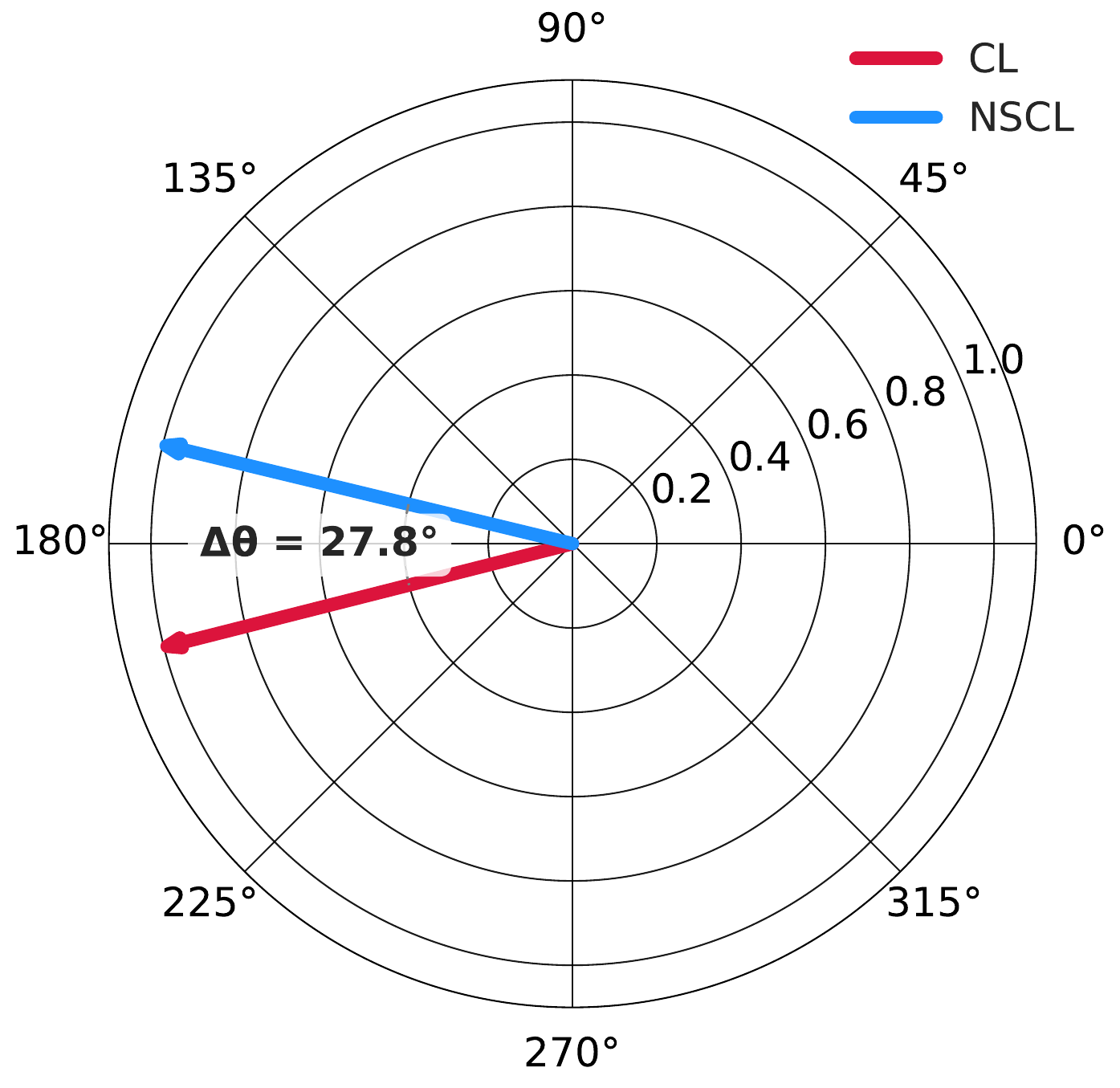} &
\includegraphics[width=0.298\linewidth]{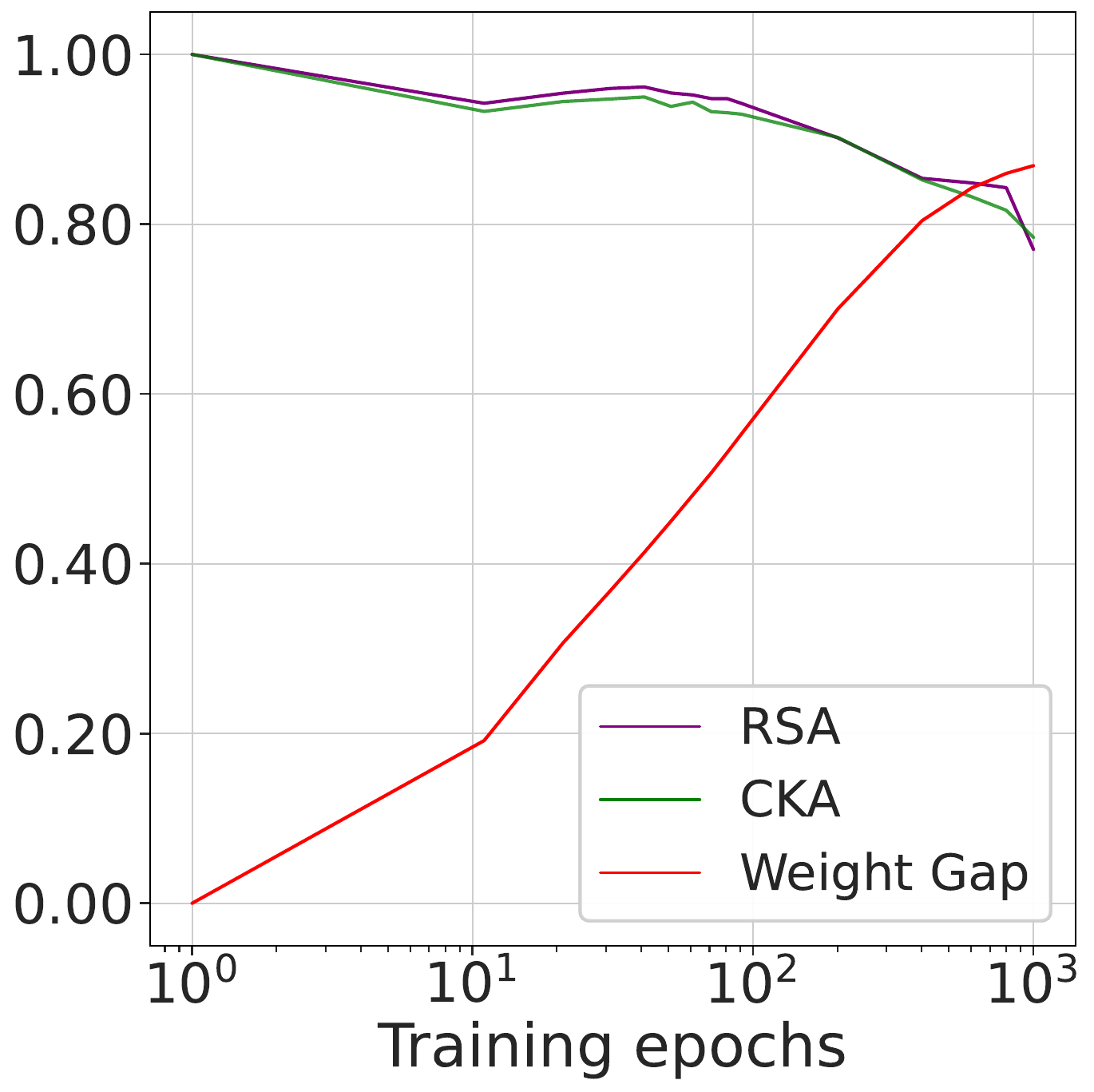} \\
\small (a) Weight Space &
\small (b) Representation Space &
\small (c) Similarity Metrics
\end{tabular}
\caption{\textbf{Comparison of learning dynamics for CL and NSCL models.}
(a) Weight space vectors show divergent paths ($85.7^\circ$ apart).
(b) In contrast, representation space vectors for a target class show high alignment ($27.8^\circ$ apart).
(c) This is confirmed over training epochs, where representational similarity (CKA, RSA) remains high while the weight gap increases (see figure details in App.~\ref{app:experiments}).}
\label{fig:teaser}
\vspace{-0.5\baselineskip}
\end{wrapfigure}

Loss-level similarity does not guarantee that optimization paths coincide. In principle, differences in curvature, gradient noise, or learning rate schedules could amplify small loss discrepancies, causing stochastic gradient descent (SGD) trajectories to diverge. Thus, it remains unclear whether CL merely converges to a solution \emph{similar} to NSCL, or whether their parameter and representations remain coupled across training.

\textbf{Our contributions.\enspace} We study the alignment between CL and NSCL under shared randomness (same initialization, mini-batches, and augmentations):

\begin{itemize}[leftmargin=10pt]

\item \textbf{From drift to metrics.} The similarity control yields explicit, high-probability \emph{lower bounds} on linear CKA and RSA at every epoch, showing that CL and NSCL representations remain non-trivially aligned and that the certified alignment tightens as $C$ and $\tau$ grow, and it may increase with $B$ depending on learning-rate scaling (Cors.~\ref{cor:cka-main}--\ref{cor:rsa-main}). For completeness, we also bound parameter drift under $\beta$-smoothness (Thm.~\ref{thm:nonconvex}), which can grow exponentially even when representations remain aligned.
\item \textbf{Empirical validation.} We validate our theory with experiments on CIFAR-10/100, Tiny-ImageNet, mini-ImageNet, and ImageNet-1K. We find that (i) CL–NSCL alignment strengthens with more classes and higher temperatures as well as correlates with the bound's dependence on the batch size; and (ii) NSCL aligns with CL more strongly than other supervised learning methods (such as cross-entropy minimization and supervised contrastive learning (SCL)~\citep{NEURIPS2020_d89a66c7}).
\end{itemize}


\section{Related Work}\label{sec:related}

A large body of work has sought to explain the success of contrastive learning (CL) from 
different perspectives. Early accounts linked CL to mutual information maximization between 
views of the same input~\citep{NEURIPS2019_ddf35421}, though subsequent analyses showed 
that enforcing mutual information constraints too strongly can degrade downstream 
performance~\citep{pmlr-v108-mcallester20a,Tschannen2020On}. A different line of work 
formalizes CL in terms of \emph{alignment} and \emph{uniformity} properties of the 
representation space~\citep{wang2020understanding,wang2021understanding,chen2021intriguing}, 
capturing how positives concentrate while negatives spread across the sphere. These 
geometric criteria, while intuitive, do not fully explain how samples from different 
semantic classes are organized under CL training. 

To address this, several papers have studied the ability of CL to recover latent clusters 
and semantic structures~\citep{arora2019theoreticalanalysiscontrastiveunsupervised,
tosh2021contrastive,zimmermann2021contrastive,pmlr-v151-ash22a,
NEURIPS2021_2dace78f,NEURIPS2021_27debb43,10.5555/3600270.3602219,
pmlr-v162-shen22d,wang2022chaos,pmlr-v162-awasthi22b,pmlr-v162-bao22e}. 
Most of these results rely on restrictive assumptions, such as conditional independence 
of augmentations given cluster identity~\citep{arora2019theoreticalanalysiscontrastiveunsupervised,
tosh2021contrastive,pmlr-v162-saunshi22a,pmlr-v162-awasthi22b}. To weaken such assumptions, 
\citet{haochen2023a} proposed analyzing spectral contrastive objectives that encourage 
cluster preservation without requiring augmentation connectivity, while 
\citet{pmlr-v195-parulekar23a} showed that InfoNCE itself learns cluster-preserving 
embeddings when the hypothesis class is capacity-limited. 

Another perspective comes from linking CL to supervised learning. For instance, 
\citet{balestriero2024the} showed that in linear models, self-supervised objectives such 
as VicReg coincide with supervised quadratic losses. Building on this supervised view, \citet{luthra2025selfsupervisedcontrastivelearningapproximately} established an explicit coupling between the InfoNCE contrastive loss and a supervised variant that removes positives from the denominator. In contrast to prior results, these bounds are label-agnostic, architecture-independent, and hold uniformly throughout
optimization. 

Beyond clustering and supervision, other theoretical studies have examined different aspects 
of CL: feature learning dynamics in linear and shallow nonlinear networks~\citep{NEURIPS2022_7b5c9cc0,
10.5555/3648699.3649029,pmlr-v139-wen21c,tian2023understanding}, the role and optimality of 
augmentations~\citep{NEURIPS2020_4c2e5eaa,feigin2025theoreticalcharacterizationoptimaldata}, 
the projection head~\citep{gupta2022understandingimprovingroleprojection,
gui2023unravelingprojectionheadscontrastive,xue2024investigating,ouyang2025projection}, 
sample complexity~\citep{alon2024optimal}, and strategies to reduce batch-size 
requirements~\citep{DBLP:conf/icml/YuanWQDZZY22}. Finally, several works explore connections 
between contrastive and non-contrastive SSL paradigms~\citep{wei2021theoretical,
balestriero2022contrastive,NEURIPS2021_02e656ad,garrido2023on,shwartzziv2023an}. 


\section{Problem Setup}\label{sec:setup}

We work with a class-balanced dataset $S=\{(x_i,y_i)\}_{i=1}^N\subset\mathcal{X}\times[C]$, where $[C]=\{1,\dots,C\}$ and each class $c$ contributes $n$ examples ($N=Cn$). An encoder $f_w:\mathcal{X}\to\mathbb{R}^d$ with parameters $w\in\mathbb{R}^p$ maps inputs to embeddings. Similarity is measured by a bounded function $\mathrm{sim}:\mathbb{R}^d\times\mathbb{R}^d\to[-1,1]$; throughout our experiments we use cosine similarity on $\ell_2$-normalized embeddings, $\mathrm{sim}(u,v)=\langle u,v\rangle/(\|u\|\|v\|)$, which satisfies this bound.

Data augmentations are modeled by a Markov kernel $\alpha(\cdot\,|\,x)$ on $\mathcal{X}$: given $x$, we draw an independent view $x'\sim\alpha(x)$. Unless stated otherwise, augmentation draws are independent across samples, across repeated views of the same sample, and across training steps. We write $x_i'\sim\alpha(x_i)$ for a single view and $(x_i^{(1)},x_i^{(2)})\stackrel{\text{i.i.d.}}{\sim}\alpha(x_i)$ for two views of the same input.

Fix a batch size $B\in\mathbb N$. A batch is a multiset $\mathcal B=\{(x_{j_t},x'_{j_t},y_{j_t})\}_{t=1}^B$ sampled with replacement from $S$, together with independent augmentations $x'_{j_t}\sim\alpha(x_{j_t})$. For any anchor triplet $(x_i,x'_i,y_i)\in\mathcal B$, define the \emph{per-anchor CL loss} and the \emph{CL batch loss} as:
{\small
\begin{equation*}
\begin{aligned}
\ell_{i,\mathcal B}(w)
&:=
-\log \left(
\frac{\exp \bigl(\mathrm{sim}(f_w(x_i),f_w(x'_i))/\tau\bigr)}
{\sum_{(x_j,x'_j,y_j)\in \mathcal B} \Bigl[\mathbf 1\{j\neq i\}\,\exp \bigl(\mathrm{sim}(f_w(x_i),f_w(x_{j_t}))/\tau\bigr)
+\exp \bigl(\mathrm{sim}(f_w(x_i),f_w(x'_{j_t}))/\tau\bigr)\Bigr]}
\right),\\[2mm]
\bar{\ell}^{\CL}_{\mathcal B}(w)
&:=
\frac{1}{B}\sum_{(x_i,x'_i,y_i)\in\mathcal B}\ell_{i,\mathcal B}(w).
\end{aligned}
\end{equation*}
}
For the same realized batch \(\mathcal B\), define the per-anchor \emph{negative subset} $\mathcal B_i^-:=\{(x_{j_t},x'_{j_t},y_{j_t})\in\mathcal B:\; y_{j_t}\neq y_i\}$. The \emph{NSCL per-anchor} and \emph{batch} losses are
\begin{small}
\begin{equation*}
\begin{aligned}
\ell_{i,\mathcal B_i^-}(w)
&:=
-\log \left(
\frac{\exp\bigl(\mathrm{sim}(f_w(x_i),f_w(x'_i))/\tau\bigr)}
{\sum_{(x_j,x'_j,y_j)\in \mathcal B_i^-}
 \Bigl[\exp\bigl(\mathrm{sim}(f_w(x_i),f_w(x_j))/\tau\bigr)
       +\exp\bigl(\mathrm{sim}(f_w(x_i),f_w(x'_j))/\tau\bigr)\Bigr]}
\right),\\[2mm]
\bar{\ell}^{\NS}_{\mathcal B}(w)
&:=
\frac{1}{B}\sum_{(x_i,x'_i,y_i)\in\mathcal B}\ell_{i,\mathcal B_i^-}(w).
\end{aligned}
\end{equation*}
\end{small}
Prior work~\citep{luthra2025selfsupervisedcontrastivelearningapproximately} shows that the CL–NSCL \emph{loss} gap is uniformly $\mathcal{O}(1/C)$, but what we ultimately care about is whether the \emph{embeddings} align. To quantify representation similarity we use linear Centered Kernel Alignment (CKA)~\citep{pmlr-v97-kornblith19a} and Representation Similarity Analysis (RSA)~\citep{kriegeskorte2008rsa} defined on cosine-similarity matrices: for $N$ common inputs with embeddings $Z=\{z_i\}_{i=1}^N$ and $Z'=\{z'_i\}_{i=1}^N$, let $\Sigma(Z)_{ij}=\cos(z_i,z_j)$ and $H=I-\tfrac{1}{N}\mathbf 1\mathbf 1^\top$; linear CKA is
\[
\mathrm{CKA}(Z,Z')~=~\frac{\langle H\Sigma(Z)H,\; H\Sigma(Z')H\rangle_F}{\|H\Sigma(Z)H\|_F\,\|H\Sigma(Z')H\|_F},
\]
and RSA is the \emph{Pearson} correlation between the (upper–triangular) off–diagonal
entries of the dissimilarity matrices $\RDM(Z)=\one\one^\top-\Sigma(Z)$ and
$\RDM(Z')=\one\one^\top-\Sigma(Z')$:
\[
\RSA(Z,Z')~=~\Corr \left(
  \mathrm{vec}_{\triangle}(\RDM(Z)),
  \mathrm{vec}_{\triangle}(\RDM(Z'))
\right),
\]
where $\mathrm{vec}_{\triangle}$ stacks the upper–triangular entries $(i<j)$ column-wise.

This raises the following question: \textbf{\em Beyond a small objective gap, does training CL and NSCL actually lead to similar representations (e.g., high CKA/RSA)?}

In the spirit of Thm.~1 of \citet{luthra2025selfsupervisedcontrastivelearningapproximately}, we prove that when two runs use shared randomness (same initialization, mini-batches, and augmentations), the per-step gradient mismatch is uniformly bounded (Lem.~\ref{lem:gap-param}). Similarly, we show that the CL and NSCL similarity matrices remain close throughout training (Thm.~\ref{thm:sim-coupling}), which yields explicit CKA/RSA lower bounds (Cors.~\ref{cor:cka-main}-\ref{cor:rsa-main}).

{\bf Additional notation for high–probability factors.\enspace} Fix a training horizon $T\in\mathbb N$, a confidence level $\delta\in(0,1)$, and a temperature $\tau>0$.
For later use, define $\epsilon_{B,\delta} := \sqrt{\tfrac{1}{2B}\log\Bigl(\tfrac{TB}{\delta}\Bigr)}$ and $\Delta_{C,\delta}(B;\tau) := \frac{2\,\mathrm e^{2/\tau} \left(\tfrac{1}{C}+\epsilon_{B,\delta}\right)}
{1-\tfrac{1}{C}-\epsilon_{B,\delta}}$, and assume $\epsilon_{B,\delta}<1-\tfrac{1}{C}$ so the denominator is positive.

\section{Theory}\label{sec:theory}

We examine how contrastive learning (CL) and negatives-only supervised contrastive learning (NSCL) co-evolve when initialized identically and trained with the same mini-batches and augmentations. While one might first attempt to study their trajectories in parameter space, such an approach quickly breaks down: without strong assumptions on the loss landscape (e.g., convexity or strong convexity), small reparameterizations can distort distances, and nonconvex dynamics cause parameter drift to grow uncontrollably over time (see App.~\ref{app:params}). For this reason, we set weight-space coupling aside and turn instead to the aspect that directly shapes downstream behavior—the \emph{representations}—analyzing their alignment in similarity space.

\subsection{Coupling in Representation (Similarity) Space}

Let $\Sigma_t\in[-1,1]^{N\times N}$ denote the pairwise similarity matrix of a fixed reference set at step $t$ (cosine similarity of normalized embeddings; diagonals are $1$). We analyze the coupled evolution of the CL and NSCL similarities,
\[
\Sigma_t^{\CL}, \;\Sigma_t^{\NS}\in[-1,1]^{N\times N},
\]
under identical mini-batches and augmentations. This representation-space view is invariant to reparameterization and directly tracks representational geometry.

{\bf Surrogate similarity dynamics.\enspace} To make the analysis explicit, we work with a ``similarity-descent'' surrogate that updates only those entries touched by the current batch. For a realized mini-batch $\mathcal B_t=\{(x_j,x_j',y_j)\}_{j=1}^B$ (with $x_j'\sim\alpha(x_j)$), let $\bar\ell^{\CL}_{\mathcal B_t}(\Sigma)$ and $\bar\ell^{\NS}_{\mathcal B_t}(\Sigma)$ be the usual InfoNCE-type losses written as functions of the relevant similarity entries (with temperature $\tau>0$). Define the batch-gradient maps
\[
G_t^{\CL} \;:=\; \nabla_{\Sigma}\,\bar\ell^{\CL}_{\mathcal B_t}\big(\Sigma_t^{\CL}\big),
\qquad
G_t^{\NS} \;:=\; \nabla_{\Sigma}\,\bar\ell^{\NS}_{\mathcal B_t}\big(\Sigma_t^{\NS}\big),
\]
setting all untouched entries to zero. The surrogate updates are
\begin{equation}\label{eq:Sigma-descent}
\Sigma^{\CL}_{t+1} \;=\; \Sigma^{\CL}_t - \eta_t\,G^{\CL}_t,
\qquad
\Sigma^{\NS}_{t+1} \;=\; \Sigma^{\NS}_t - \eta_t\,G^{\NS}_t,
\end{equation}
with shared initialization and shared randomness (same $\mathcal B_t$ and augmentations). In App.~\ref{app:sigma-just} we show that these surrogate dynamics faithfully track the similarity evolution induced by parameter-space SGD. We now formalize the coupling bound.

\begin{restatable}[Similarity-space coupling]{theorem}{SimCoupling}
\label{thm:sim-coupling}
Fix $B,T\in\mathbb N$, $\delta\in(0,1)$, and temperature $\tau>0$.
Consider the coupled similarity-descent recursions \eqref{eq:Sigma-descent} for CL and NSCL with shared initialization and shared mini-batches/augmentations. Then, with probability at least $1-\delta$ over the draws of the mini-batches and augmentations, for any stepsizes $(\eta_t)_{t=0}^{T-1}$,
\begin{equation}\label{eq:Sigma-gap}
\bigl\|\Sigma^{\CL}_T-\Sigma^{\NS}_T\bigr\|_F
~\le~
\exp\Bigl(\frac{1}{2\tau^2 B}\sum_{t=0}^{T-1}\eta_t\Bigr)\;
\frac{1}{\tau\sqrt{B}}\Bigl(\sum_{t=0}^{T-1}\eta_t\Bigr)\,\Delta_{C,\delta}(B;\tau).
\end{equation}
\end{restatable}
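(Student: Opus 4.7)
The plan is to reduce the drift $D_t := \|\Sigma^{\CL}_t - \Sigma^{\NS}_t\|_F$ to a linear inhomogeneous recursion $D_{t+1} \le (1 + \eta_t L)\,D_t + \eta_t E$, where $L$ is a smoothness constant for the NSCL similarity gradient and $E$ is a high-probability uniform bound on the \emph{same-point} CL--NSCL gradient gap, and then invoke discrete Grönwall. The multiplicative $\exp(\sum_t \eta_t\,L)$ in~\eqref{eq:Sigma-gap} is the Grönwall amplification, while the factor $(\sum_t \eta_t)\cdot E$ is the accumulated drift driven by the inherent loss mismatch.

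First, I would apply the triangle inequality to~\eqref{eq:Sigma-descent} to obtain $D_{t+1} \le D_t + \eta_t \|G^{\CL}_t - G^{\NS}_t\|_F$, and then add and subtract $\nabla_{\Sigma}\bar\ell^{\NS}_{\mathcal B_t}(\Sigma^{\CL}_t)$ to split the gradient gap into (i) a \emph{same-point} discrepancy $\|\nabla\bar\ell^{\CL}_{\mathcal B_t}(\Sigma^{\CL}_t) - \nabla\bar\ell^{\NS}_{\mathcal B_t}(\Sigma^{\CL}_t)\|_F$, and (ii) an NSCL-smoothness term in $\Sigma^{\CL}_t-\Sigma^{\NS}_t$. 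For (ii) I would use that $-\log\mathrm{softmax}$ has Hessian of operator norm at most $1/(2\tau^2)$ in the logits, and that only the $O(B)$ similarity entries touched by $\mathcal B_t$ carry a nonzero gradient; combined with the $1/B$ from batch averaging, this yields a Frobenius-norm Lipschitz constant of $1/(2\tau^2 B)$, so term (ii) is bounded by $D_t/(2\tau^2 B)$. This identifies the Grönwall exponent $L = 1/(2\tau^2 B)$ appearing in~\eqref{eq:Sigma-gap}.

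The core step is bounding term (i). For a single anchor $i$, the CL and NSCL log-softmax numerators coincide, so the gradient difference is driven entirely by the same-class entries that appear in the CL denominator but not in the NSCL one. Using $\mathrm{sim}\in[-1,1]$ to cap each exponent by $e^{1/\tau}$ and to lower-bound the denominator by $e^{-1/\tau}$ times the different-class count, I would show that the anchor-wise $\ell_1$ softmax gap is at most $2\pi(\mathcal B_t)\,e^{2/\tau}/(1-\pi(\mathcal B_t))$, where $\pi(\mathcal B_t)$ is the empirical fraction of same-class candidates for anchor $i$ in $\mathcal B_t$. The chain-rule factor $1/\tau$ from the $\mathrm{sim}\to$logit map, combined with the $\sqrt{B}$-normalization coming from aggregating $B$ anchor gradients with (approximately) disjoint row supports, then gives a per-step Frobenius bound of order $\pi(\mathcal B_t)\,e^{2/\tau}/((1-\pi(\mathcal B_t))\,\tau\sqrt{B})$. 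A Hoeffding bound on $\pi(\mathcal B_t)$ (mean $1/C$ under i.i.d.\ sampling from the class-balanced $S$), followed by a union bound over all $T$ steps and $B$ anchors---calibrated precisely by $\epsilon_{B,\delta}=\sqrt{(1/(2B))\log(TB/\delta)}$---controls $\pi(\mathcal B_t)\le 1/C+\epsilon_{B,\delta}$ uniformly with probability $\ge 1-\delta$, producing exactly $E = \Delta_{C,\delta}(B;\tau)/(\tau\sqrt{B})$.

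Assembling these pieces gives $D_{t+1} \le (1+\eta_t/(2\tau^2 B))\,D_t + \eta_t\,\Delta_{C,\delta}(B;\tau)/(\tau\sqrt{B})$ with $D_0=0$ from shared initialization; unrolling and applying $1+x\le e^x$ at each step yields the claimed bound~\eqref{eq:Sigma-gap}. The main obstacle I anticipate is the bookkeeping in the same-point step: carefully tracking the cancellations between the CL and NSCL softmaxes on different-class entries so that only same-class contributions survive, and producing exactly the numerator $e^{2/\tau}(1/C+\epsilon_{B,\delta})$ and denominator $(1-1/C-\epsilon_{B,\delta})$ of $\Delta_{C,\delta}$. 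The smoothness bound and the Grönwall unroll are routine once that computation is in hand.
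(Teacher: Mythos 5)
Your proposal mirrors the paper's proof step for step: the identical add-and-subtract of the NSCL gradient at the CL iterate, the $1/(2\tau^2 B)$ softmax-Hessian Lipschitz constant for the averaged batch loss, the per-anchor $\ell_1$ reweighting gap $2\pi e^{2/\tau}/(1-\pi)$ bounded via $\mathrm{sim}\in[-1,1]$, block-orthogonality across anchors giving the $1/\sqrt{B}$ aggregation (in the paper the anchor blocks have exactly disjoint supports, not just approximately), a Hoeffding + union bound over the $TB$ anchor--step pairs yielding $\epsilon_{B,\delta}$, and the discrete Gr\"onwall unroll from $D_0=0$. This is essentially the paper's argument.
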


{\bf From similarity drift to CKA/RSA guarantees.\enspace} We translate the high-probability control on the similarity drift from Thm.~\ref{thm:sim-coupling},
into bounds on two standard representational metrics.

{\bf CKA.\enspace} Recall from Sec.~\ref{sec:setup} that linear CKA~\citep{pmlr-v97-kornblith19a} is
the normalized Frobenius inner product between centered similarity matrices.
$H:=I-\tfrac{1}{N}\mathbf 1\mathbf 1^\top$ be the centering projector and define centered Gram matrices $K^{\CL}_T:=H\Sigma^{\CL}_T H$ and $K^{\NS}_T:=H\Sigma^{\NS}_T H$.
The (linear) CKA at step $T$ is $\CKA_T=\frac{\langle K^{\CL}_T,K^{\NS}_T\rangle_F}{\|K^{\CL}_T\|_F\,\|K^{\NS}_T\|_F}\in[0,1]$. Because $\|H X H\|_F\le \|X\|_F$, any bound on $\|\Sigma^{\CL}_T-\Sigma^{\NS}_T\|_F$ controls $\|K^{\CL}_T-K^{\NS}_T\|_F$.  
For convenience, introduce the relative deviation $\rho_T := \frac{\|K^{\CL}_T - K^{\NS}_T\|_F}{\|K^{\CL}_T\|_F}$.

\begin{restatable}[CKA lower bound]{corollary}{CKAlowerbound}\label{cor:cka-main}
In the setting of Thm.~\ref{thm:sim-coupling}. Assume $\|K^{\CL}_T\|_F>0$. With probability at least $1-\delta$,
\[
\CKA_T ~\ge~ \frac{1-\rho_T}{1+\rho_T},
\qquad
\rho_T ~\le~
\frac{\exp\bigl(\tfrac{1}{2\tau^2 B}\sum_{t=0}^{T-1}\eta_t\bigr)\;
\frac{1}{\tau\sqrt{B}}\bigl(\sum_{t=0}^{T-1}\eta_t\bigr)\,\Delta_{C,\delta}(B;\tau)}{\|K^{\CL}_T\|_F}.
\]
\end{restatable}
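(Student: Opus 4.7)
The proof is essentially algebraic and reduces CKA to a Frobenius-norm perturbation problem, combined with the drift bound of Thm.~\ref{thm:sim-coupling}. First I would write $K^{\NS}_T = K^{\CL}_T + E_T$ with $E_T := K^{\NS}_T - K^{\CL}_T$, so $\|E_T\|_F / \|K^{\CL}_T\|_F$ is a relative centered-Gram perturbation that I will ultimately relate back to $\rho_T$. Expanding the CKA inner product gives $\langle K^{\CL}_T, K^{\NS}_T\rangle_F = \|K^{\CL}_T\|_F^2 + \langle K^{\CL}_T, E_T\rangle_F$, and Cauchy--Schwarz bounds the cross term by $\|K^{\CL}_T\|_F\|E_T\|_F$, while the triangle inequality gives $\|K^{\NS}_T\|_F \le \|K^{\CL}_T\|_F + \|E_T\|_F$. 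Plugging both into the definition of CKA and dividing numerator and denominator by $\|K^{\CL}_T\|_F^2$ yields the stated bound $\CKA_T \ge (1-\rho_T')/(1+\rho_T')$ with $\rho_T' := \|E_T\|_F/\|K^{\CL}_T\|_F$, valid whenever $\rho_T' < 1$ (otherwise the claimed lower bound is non-positive and holds trivially since $\CKA_T \ge -1$; in practice we are only interested in the regime where the drift is small).

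Next I would control $\rho_T'$ in terms of the similarity drift. Since $K^{\CL}_T - K^{\NS}_T = H(\Sigma^{\CL}_T - \Sigma^{\NS}_T)H$ and $H = I - \tfrac{1}{N}\mathbf 1\mathbf 1^\top$ is an orthogonal projector with $\|H\|_{\mathrm{op}} = 1$, the submultiplicative inequality $\|AXB\|_F \le \|A\|_{\mathrm{op}}\|X\|_F\|B\|_{\mathrm{op}}$ gives $\|E_T\|_F = \|H(\Sigma^{\CL}_T-\Sigma^{\NS}_T)H\|_F \le \|\Sigma^{\CL}_T-\Sigma^{\NS}_T\|_F$. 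Dividing by $\|K^{\CL}_T\|_F$ therefore shows $\rho_T' \le \rho_T$ in the sense that the expression on the right-hand side of the corollary upper-bounds $\rho_T'$. Since the map $r \mapsto (1-r)/(1+r)$ is decreasing on $[0,1]$, the bound obtained in the first step continues to hold with $\rho_T'$ replaced by any valid upper bound, and in particular by $\rho_T$ as defined in the corollary.

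Finally I would invoke Thm.~\ref{thm:sim-coupling} to substitute its explicit high-probability control of $\|\Sigma^{\CL}_T-\Sigma^{\NS}_T\|_F$ into the numerator of the $\rho_T$ expression. Since Thm.~\ref{thm:sim-coupling} already holds with probability at least $1-\delta$ over the shared mini-batch and augmentation randomness, this probability transfers directly to the CKA statement without any union bound, yielding the claimed high-probability lower bound.

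I do not anticipate a real obstacle here; the only minor subtlety is bookkeeping the regime $\rho_T \ge 1$, where the stated bound becomes vacuous, and ensuring the assumption $\|K^{\CL}_T\|_F > 0$ is used only to make $\rho_T$ well defined. The substantive work has already been done in Thm.~\ref{thm:sim-coupling}; this corollary is a clean perturbation-to-alignment conversion via Cauchy--Schwarz and the non-expansiveness of two-sided multiplication by the centering projector.
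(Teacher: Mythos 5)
Your proposal is correct and follows essentially the same argument as the paper's proof: decompose $K^{\NS}_T = K^{\CL}_T + E_T$, apply Cauchy--Schwarz and the triangle inequality to obtain $\CKA_T \ge (1-\rho_T)/(1+\rho_T)$ with $\rho_T = \|E_T\|_F/\|K^{\CL}_T\|_F$, then use non-expansiveness of two-sided multiplication by the centering projector $H$ and plug in Thm.~\ref{thm:sim-coupling}. Your additional observations about monotonicity of $r\mapsto(1-r)/(1+r)$ and the vacuous regime $\rho_T\ge1$ are correct but not needed once one notes that $\rho_T$ in the corollary's statement is by definition the exact ratio $\|K^{\CL}_T-K^{\NS}_T\|_F/\|K^{\CL}_T\|_F$, so the first inequality is verified directly and the second inequality is a separate chained bound.
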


{\bf RSA.\enspace} Recall from Sec.~\ref{sec:setup} that RSA~\citep{kriegeskorte2008rsa} is the Pearson correlation between the off-diagonal entries of representational dissimilarity matrices (RDMs). Let $M=\binom{N}{2}$ and define off-diagonal RDM vectors
$a_T,b_T\in\mathbb R^M$ by $a_T(u,v)=1-\Sigma^{\CL}_T(u,v)$ and $b_T(u,v)=1-\Sigma^{\NS}_T(u,v)$ for $u<v$. Write $\sigma_{D,T}>0$ for the empirical standard deviation of the entries of $a_T$. The RSA score is the Pearson correlation $\RSA_T=\Corr(a_T,b_T)$.  
Zeroing the diagonal does not increase Frobenius norms, so
$\|b_T-a_T\|_2\le \|\Sigma^{\NS}_T-\Sigma^{\CL}_T\|_F$.  
It will be useful to measure the relative discrepancy $r_T := \frac{\|b_T-a_T\|_2}{\sqrt{M}\,\sigma_{D,T}}$.

\begin{restatable}[RSA lower bound]{corollary}{RSAlowerbound}\label{cor:rsa-main}
In the setting of Thm.~\ref{thm:sim-coupling}. Assume $\sigma_{D,T} > 0$. With probability at least $1-\delta$,
\[
\RSA_T ~\ge~ \frac{1-r_T}{1+r_T},
\qquad
r_T ~\le~
\frac{\exp\bigl(\tfrac{1}{2\tau^2 B}\sum_{t=0}^{T-1}\eta_t\bigr)\;
\frac{1}{\tau\sqrt{B}}\bigl(\sum_{t=0}^{T-1}\eta_t\bigr)\,\Delta_{C,\delta}(B;\tau)}{\sqrt{M}\,\sigma_{D,T}}.
\]
\end{restatable}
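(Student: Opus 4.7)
\textbf{Proof plan for Cor.~\ref{cor:rsa-main}.} The plan is to transfer the similarity-matrix control from Thm.~\ref{thm:sim-coupling} to a Pearson-correlation lower bound by passing through a ``cosine-similarity of centered vectors'' identity, in three steps: (i) bound $\|b_T-a_T\|_2$ by $\|\Sigma^{\CL}_T-\Sigma^{\NS}_T\|_F$; (ii) prove a deterministic ``stability of cosine under small perturbation'' lemma; (iii) combine with Thm.~\ref{thm:sim-coupling} on the same high-probability event.

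First I would reduce to the Frobenius-level control. Since $a_T(u,v)=1-\Sigma^{\CL}_T(u,v)$ and $b_T(u,v)=1-\Sigma^{\NS}_T(u,v)$, we have $b_T-a_T=\Sigma^{\CL}_T-\Sigma^{\NS}_T$ entrywise (on the off-diagonal upper-triangular indices). Both similarity matrices are symmetric with unit diagonals, so their difference vanishes on the diagonal and is symmetric, giving $\|b_T-a_T\|_2^2=\tfrac12\|\Sigma^{\CL}_T-\Sigma^{\NS}_T\|_F^2\le \|\Sigma^{\CL}_T-\Sigma^{\NS}_T\|_F^2$. This is the step that puts us in contact with Thm.~\ref{thm:sim-coupling}.

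Next I would prove the perturbation-of-cosine lemma: for any nonzero $u\in\mathbb R^M$ and $v=u+e$ with $\|e\|_2<\|u\|_2$,
\[
\frac{\langle u,v\rangle}{\|u\|_2\,\|v\|_2}
~\ge~
\frac{\|u\|_2-\|e\|_2}{\|u\|_2+\|e\|_2}
~=~ \frac{1-\|e\|_2/\|u\|_2}{1+\|e\|_2/\|u\|_2},
\]
obtained from $\langle u,v\rangle\ge \|u\|_2(\|u\|_2-\|e\|_2)$ (Cauchy--Schwarz) and $\|v\|_2\le \|u\|_2+\|e\|_2$ (triangle inequality). I would then apply this lemma to the centered vectors $\tilde a_T=H_M a_T$ and $\tilde b_T=H_M b_T$, where $H_M:=I-\tfrac1M\mathbf1\mathbf1^\top$ is the mean-centering projector in $\mathbb R^M$. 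By definition of Pearson correlation, $\RSA_T=\langle \tilde a_T,\tilde b_T\rangle/(\|\tilde a_T\|_2\|\tilde b_T\|_2)$. Because $H_M$ is an orthogonal projection, $\|\tilde b_T-\tilde a_T\|_2=\|H_M(b_T-a_T)\|_2\le \|b_T-a_T\|_2$, and by the definition of the empirical standard deviation, $\|\tilde a_T\|_2=\sqrt{M}\,\sigma_{D,T}$. The ratio $\|\tilde b_T-\tilde a_T\|_2/\|\tilde a_T\|_2$ is therefore upper bounded by $r_T$, and since the map $x\mapsto(1-x)/(1+x)$ is monotonically decreasing on $[0,1)$, the perturbation lemma yields $\RSA_T\ge(1-r_T)/(1+r_T)$.

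Finally I would close the argument on the high-probability event of Thm.~\ref{thm:sim-coupling} (of probability at least $1-\delta$): combining steps (i)--(ii) with the Frobenius bound in \eqref{eq:Sigma-gap} directly produces the explicit upper bound on $r_T$ claimed in the statement. The main obstacle---if any---is the Pearson-correlation stability step: one must work with \emph{centered} vectors and remember that centering does not increase $\ell_2$ norms while reducing the norm of $\tilde a_T$ to $\sqrt{M}\sigma_{D,T}$, which is exactly the normalization appearing in $r_T$; the assumption $\sigma_{D,T}>0$ ensures well-posedness and $r_T<1$ is implicit whenever the bound is nontrivial (otherwise the RSA bound becomes vacuous and still correct). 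All other steps are routine reductions.
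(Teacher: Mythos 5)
Your proposal is correct and follows the paper's own proof essentially step-for-step: the same cosine-perturbation inequality $\langle u,v\rangle/(\|u\|_2\|v\|_2)\ge(1-\|v-u\|_2/\|u\|_2)/(1+\|v-u\|_2/\|u\|_2)$ applied to the mean-centered RDM vectors, the same orthogonal-projector argument to pass from centered to uncentered differences, the same identification $\|\tilde a_T\|_2=\sqrt{M}\,\sigma_{D,T}$, and the same reduction of $\|b_T-a_T\|_2$ to $\|\Sigma^{\CL}_T-\Sigma^{\NS}_T\|_F$ followed by Thm.~\ref{thm:sim-coupling}. Your symmetry observation $\|b_T-a_T\|_2=\tfrac{1}{\sqrt2}\|\Sigma^{\CL}_T-\Sigma^{\NS}_T\|_F$ is a correct minor refinement of the paper's $\le$ step, but it does not alter the final bound since both routes use the looser Frobenius upper bound in the end.
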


These results complement the parameter–space analysis. While parameter trajectories may diverge exponentially (in the non-convex setting; see App.~\ref{app:param-space}), the induced similarities—and hence representational metrics such as CKA and RSA—remain tightly controlled by class count, batch size, learning rate, and temperature $\tau$. The key quantity is the similarity–matrix drift $\|\Sigma^{\CL}_T-\Sigma^{\NS}_T\|_F$, which Thm.~\ref{thm:sim-coupling} bounds in two stabilizing ways.

First, the exponential factor is moderated by the $\tfrac{1}{\tau^2 B}$ term in the exponent. Unlike parameter space, where the growth rate scales with $\beta$, the “instability rate’’ in similarity space is only $\tfrac{1}{2\tau^2 B}$ and is therefore negligible for typical batch sizes (e.g., $B\approx 10^2$–$10^3$).

Second, the prefactor $\tfrac{1}{\tau\sqrt{B}}\bigl(\sum_t \eta_t\bigr)\,\Delta_{C,\delta}(B;\tau)$ decreases rapidly with batch size and class count (note $\Delta_{C,\delta}(B;\tau)$ shrinks with larger $C$ and grows with smaller $\tau$ through $\mathrm e^{2/\tau}$). In practical regimes ($C\sim 10^3$, $B\sim 10^2$–$10^3$), this prefactor is small, making the total Frobenius gap negligible relative to the scale of the similarity matrices.

Together, these effects yield high–probability guarantees $\CKA_T \ge (1-\rho_T)/(1+\rho_T)$ and $\RSA_T \ge (1-r_T)/(1+r_T)$ with $\rho_T, r_T \ll 1$ in realistic conditions. Thus, even if parameters drift, the induced representations evolve in a coupled and stable manner—consistent with empirical findings that CL and NSCL remain closely aligned in practice.

{\bf Proof idea.\enspace}
We begin with a high–probability batch–composition guarantee (Cor.~\ref{cor:comp-hp}): with probability at least $1-\delta$, every anchor’s denominator contains the expected proportion of negatives up to an $\epsilon_{B,\delta}$ fluctuation. This rules out positive–heavy batches that would otherwise cause the NSCL renormalization to deviate substantially from CL. Conditioning on this event, the CL–NSCL batch–gradient gap decomposes into (i) a \emph{reweighting error}, bounded in total variation by $\Delta_{C,\delta}(B;\tau)$ (Lem.~\ref{lem:softmax-reweight}), and (ii) a \emph{stability term} from the dependence on the current similarities, controlled by the $\tfrac{1}{2\tau^2 B}$–Lipschitzness of the batch–gradient map in Frobenius norm (Lem.~\ref{lem:sigma-smooth} at temperature $\tau$). Using block–orthogonality across anchors (Lem.~\ref{lem:block-orth}), the reweighting contributions combine in quadrature, giving the per–step estimate (Lem.~\ref{lem:grad-gap-sim}),
\[
\bigl\|G^{\CL}_t(\Sigma^{\CL}_t)-G^{\NS}_t(\Sigma^{\NS}_t)\bigr\|_F
~\le~ \frac{1}{\tau}\cdot\frac{\Delta_{C,\delta}(B;\tau)}{\sqrt{B}}
 ~+~ \frac{1}{2\tau^2 B}\,\bigl\|\Sigma^{\CL}_t-\Sigma^{\NS}_t\bigr\|_F .
\]
Consequently, the similarity drift satisfies the recurrence
\[
\bigl\|\Sigma^{\CL}_{t+1}-\Sigma^{\NS}_{t+1}\bigr\|_F
~\le~\Bigl(1+\frac{\eta_t}{2\tau^2 B}\Bigr)\,\bigl\|\Sigma^{\CL}_{t}-\Sigma^{\NS}_{t}\bigr\|_F
 ~+~\eta_t\,\frac{1}{\tau}\cdot\frac{\Delta_{C,\delta}(B;\tau)}{\sqrt{B}},
\]
where the first term propagates existing error and the second injects the new discrepancy introduced at step $t$. Unrolling this recurrence (discrete Grönwall) yields
\[
\|\Sigma^{\CL}_T-\Sigma^{\NS}_T\|_F
~\le~
\exp\Bigl(\frac{1}{2\tau^2 B}\sum_{t=0}^{T-1}\eta_t\Bigr)\,
\frac{1}{\tau\sqrt{B}}\Bigl(\sum_{t=0}^{T-1}\eta_t\Bigr)\,\Delta_{C,\delta}(B;\tau).
\]
Finally, centering contracts Frobenius norms, so this control transfers directly to the centered Gram matrices, and applying standard $(1-\rho)/(1+\rho)$ and $(1-r)/(1+r)$ comparisons yields the claimed CKA/RSA lower bounds.

\section{Experiments}


{\bf Datasets and augmentations.\enspace} We experiment with the following standard vision classification datasets - CIFAR10 and CIFAR100~\citep{krizhevsky2009learning}, Mini-ImageNet~\citep{NIPS2016_90e13578}, Tiny-ImageNet~\citep{deep-learning-thu-2020}, and ImageNet-1K~\citep{5206848}. (See App.~\ref{app:experiments} for details.)

{\bf Methods, architectures, and optimizers. \enspace} For all our experiments, we have followed the SimCLR~\citep{pmlr-v119-chen20j} algorithm. We use a ResNet-50~\citep{He_2016_CVPR} encoder with a width-multiplier factor of 1. The projection head follows a standard two-layer MLP architecture composed of: \texttt{Linear($2048 \rightarrow 2048$) $\rightarrow$ ReLU $\rightarrow$ Linear($2048 \rightarrow 128$)}. For cross-entropy training, we attach an additional classification head \texttt{Linear($128 \rightarrow$ C)} where $C$ is the number of classes. 

For contrastive learning, we use the DCL loss that avoids positive-negative coupling during training~\citep{10.1007/978-3-031-19809-0_38}. For supervised learning, we use the following variants: Supervised Contrastive Loss~\citep{NEURIPS2020_d89a66c7}, Negatives-Only Supervised Contrastive Loss~\citep{luthra2025selfsupervisedcontrastivelearningapproximately}, and Cross-Entropy Loss~\citep{6773024}. To minimize the loss, we adopt the LARS optimizer~\citep{you2017large} which has been shown in~\citep{pmlr-v119-chen20j} to be effective for training with large batch sizes. For LARS, we set the momentum to $0.9$ and the weight decay to $\expnum{1}{6}$. All experiments are carried out with a batch size of $B = 1024$. The base learning rate is scaled with batch size as $0.3 \cdot \lfloor B / 256 \rfloor$, following standard practice~\citep{pmlr-v119-chen20j}. We employ a warm-up phase~\citep{goyal2017accurate} for the first 10 epochs, followed by a cosine learning rate schedule without restarts ~\citep{loshchilov2016sgdr} for the remaining epochs. All models were trained on a single node with one 94 GB NVIDIA H100 GPUs. 

{\bf Evaluation metrics. \enspace} To quantitatively measure the alignment between the learned representation spaces of different models, we monitor linear CKA and RSA (check Sec.~\ref{sec:setup} for details) during training. Both CKA and RSA range from 0 to 1, where 1 indicates identical similarity structures. For completeness, along with CKA and RSA, we also report downstream performance via Nearest Class Center Classifier~\citep{galanti2022on} and Linear Probing~\citep{kohn2015s, gupta2015distributional,belinkov2022probing} accuracies in Tab.~\ref{tab:acc_full_test_only}. 

\subsection{Experimental Results}
\label{sec:exp_results}

\begin{figure}[t]
    \centering
    \begin{tabular}{c@{\hspace{6pt}}c@{\hspace{6pt}}c@{\hspace{6pt}}c}
         \includegraphics[width=0.235\linewidth]{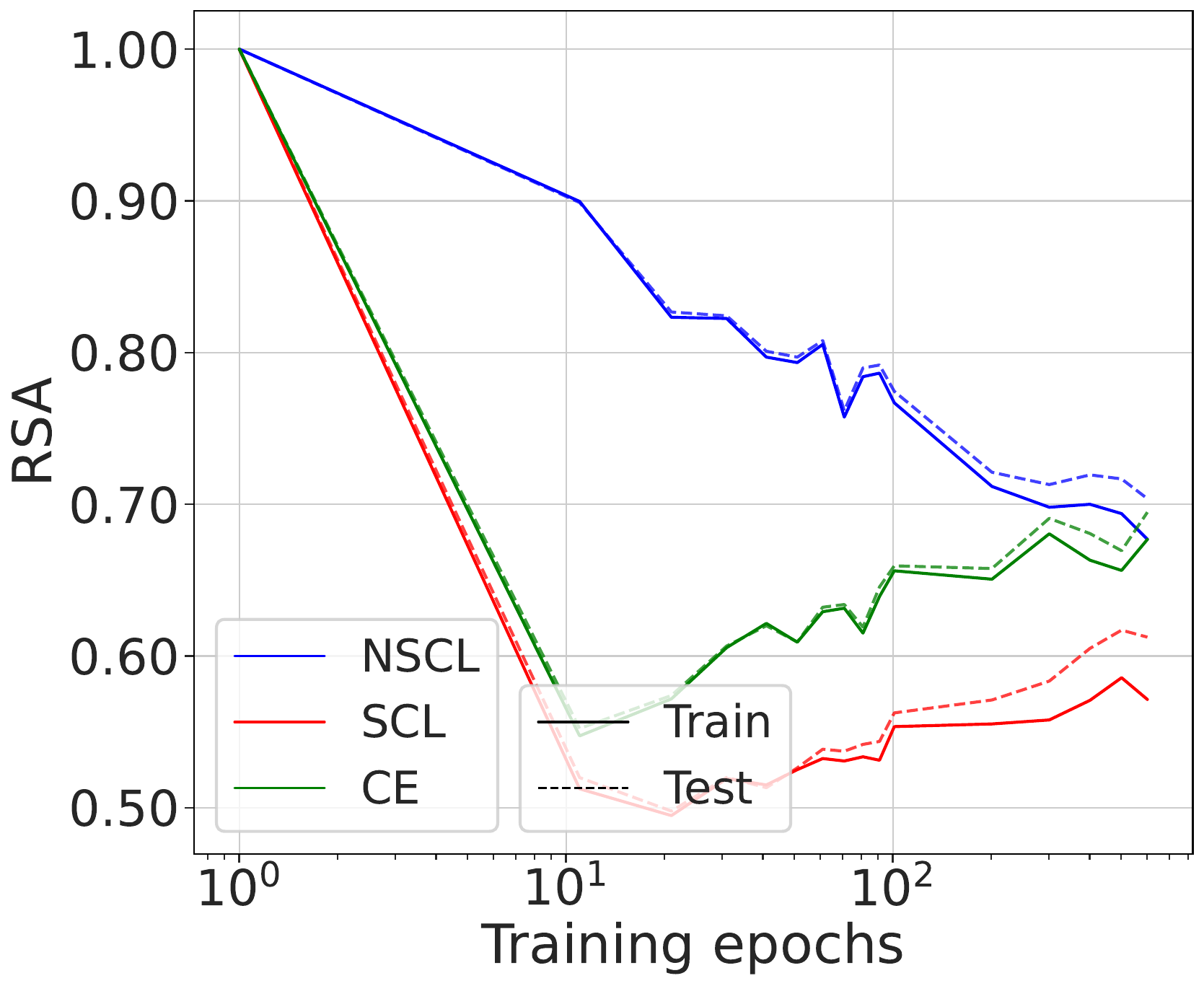} & 
         \includegraphics[width=0.235\linewidth]{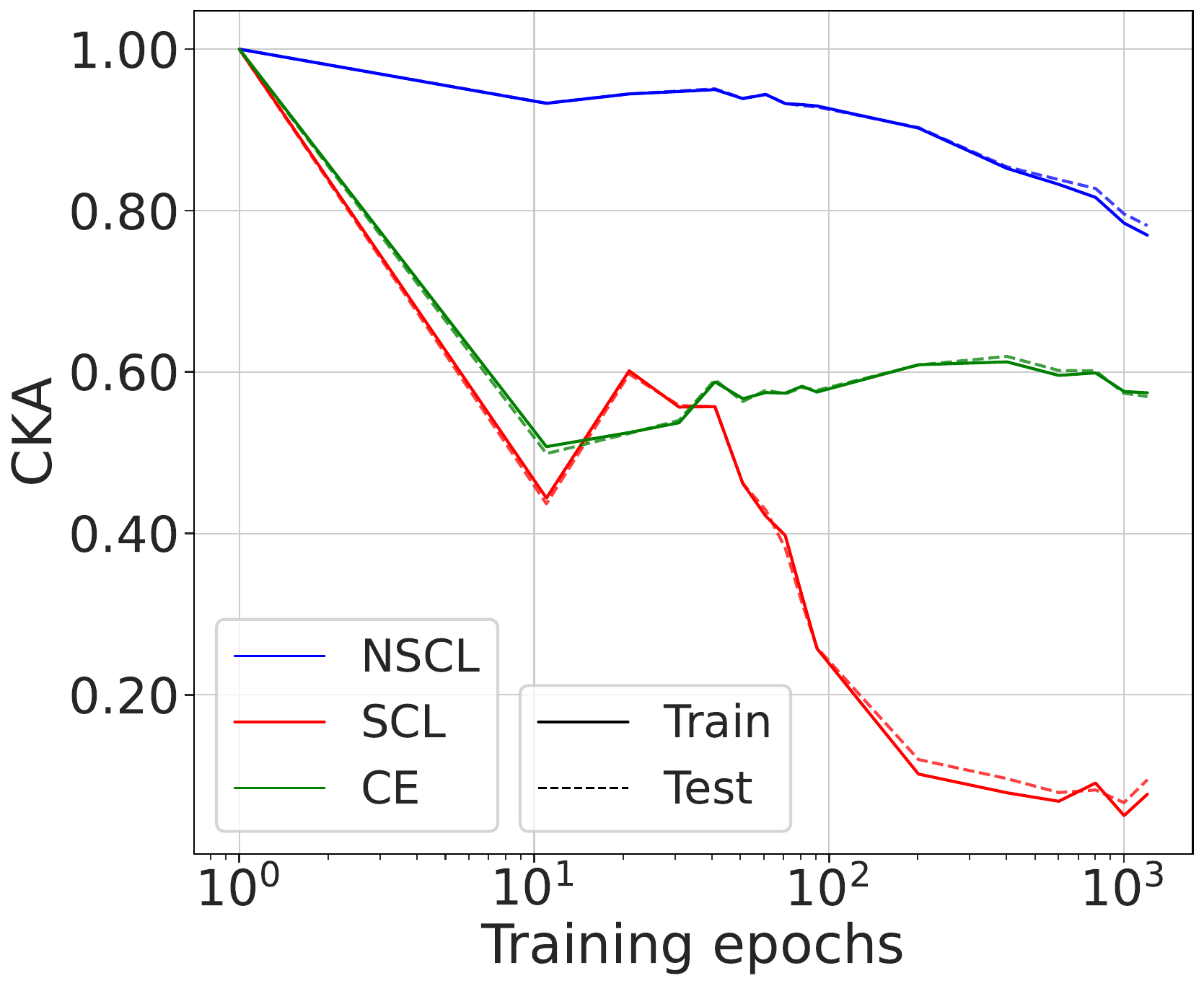} &
         \includegraphics[width=0.235\linewidth]{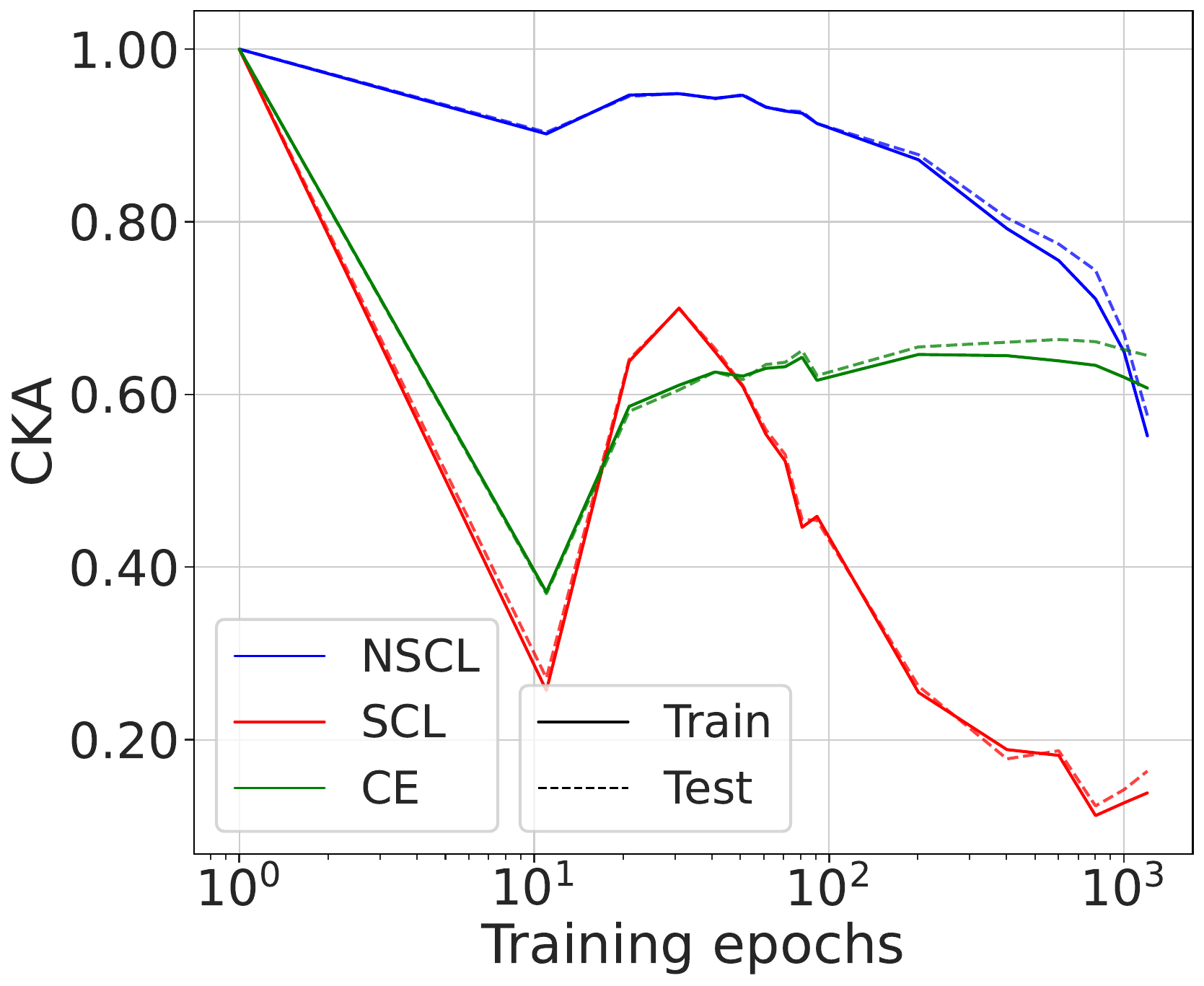} &
         \includegraphics[width=0.235\linewidth]{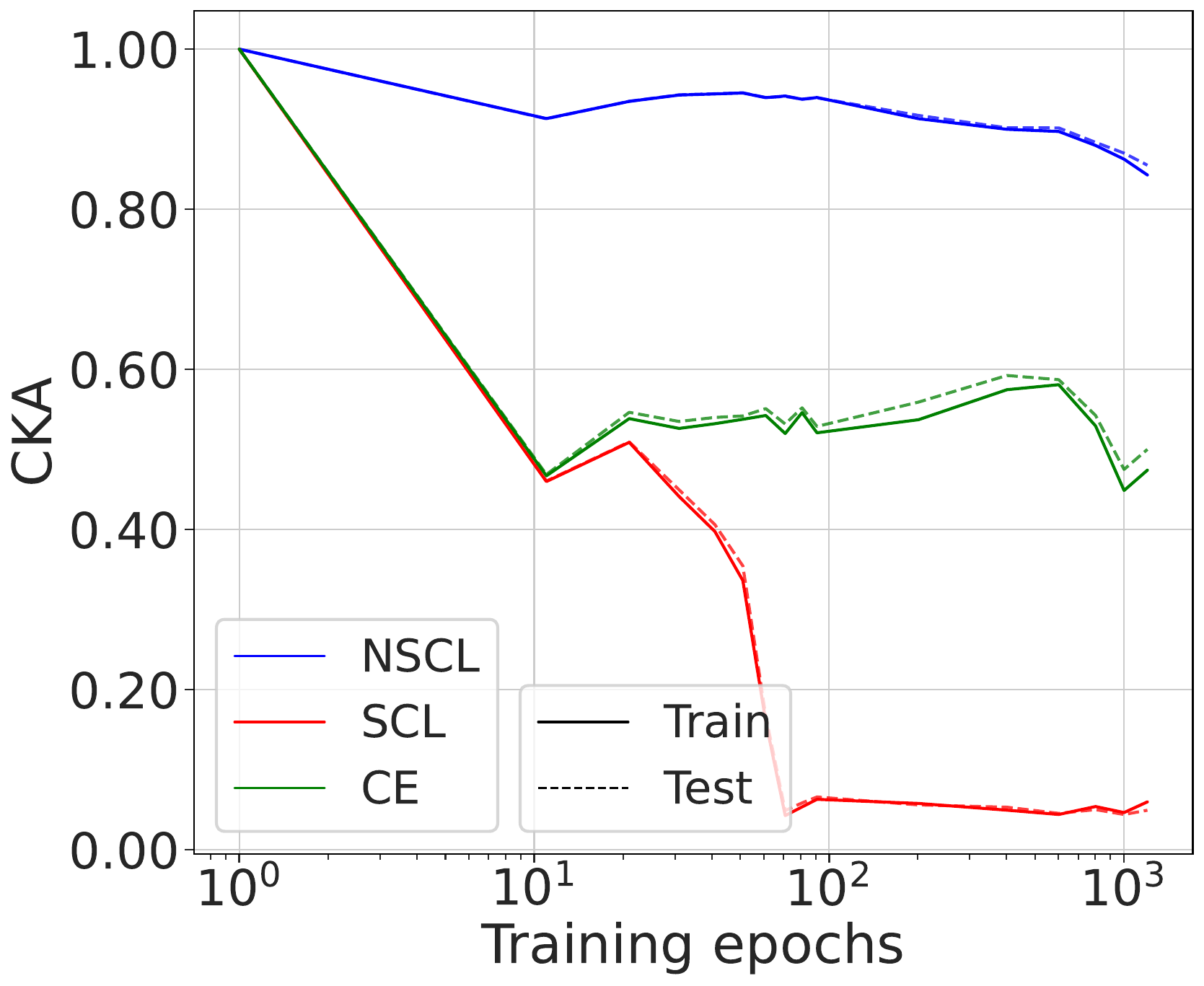} \\
         \includegraphics[width=0.235\linewidth]{figures/exp1/cifar10_rsa_plot.pdf} &
         \includegraphics[width=0.235\linewidth]{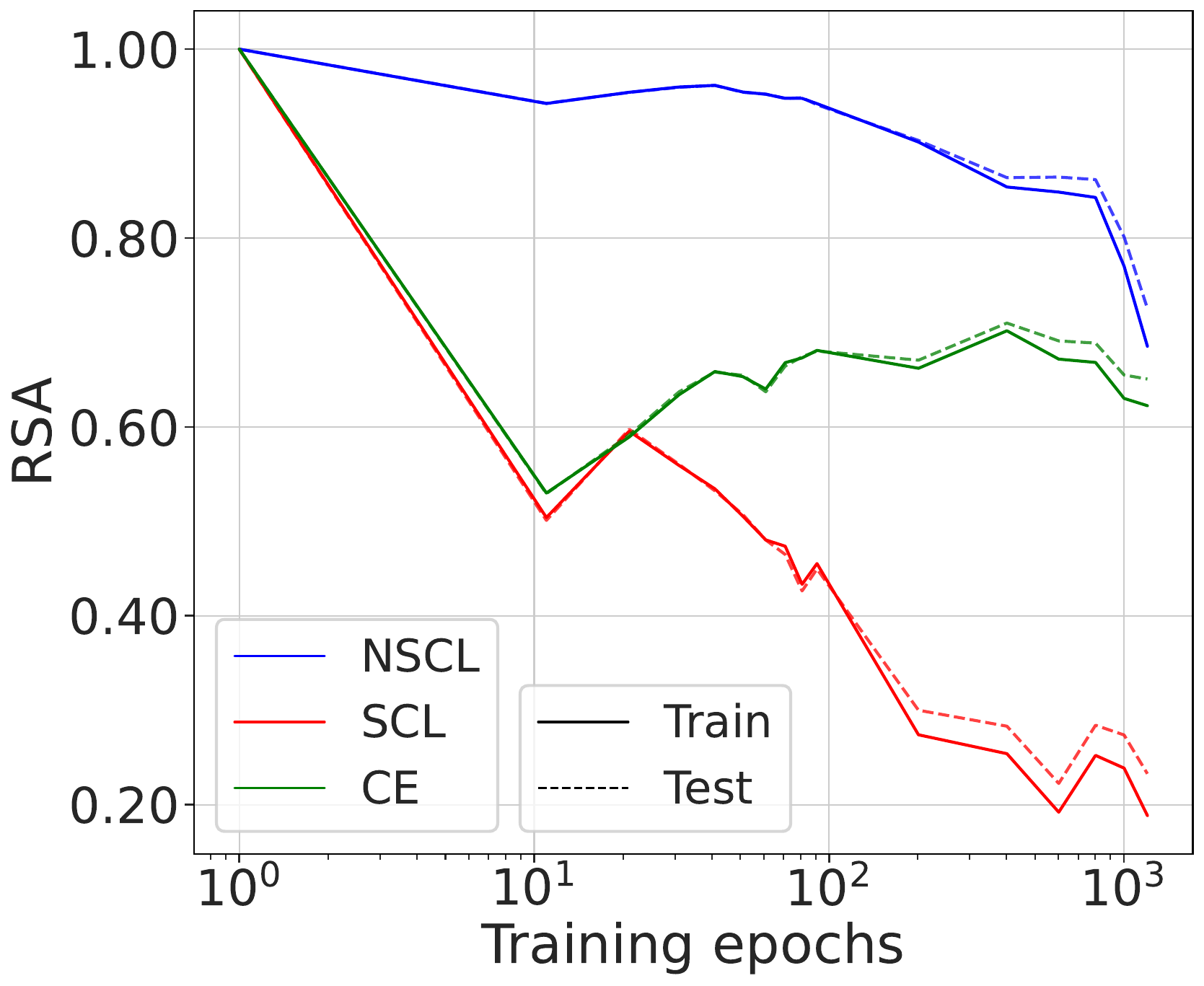} & 
         \includegraphics[width=0.235\linewidth]{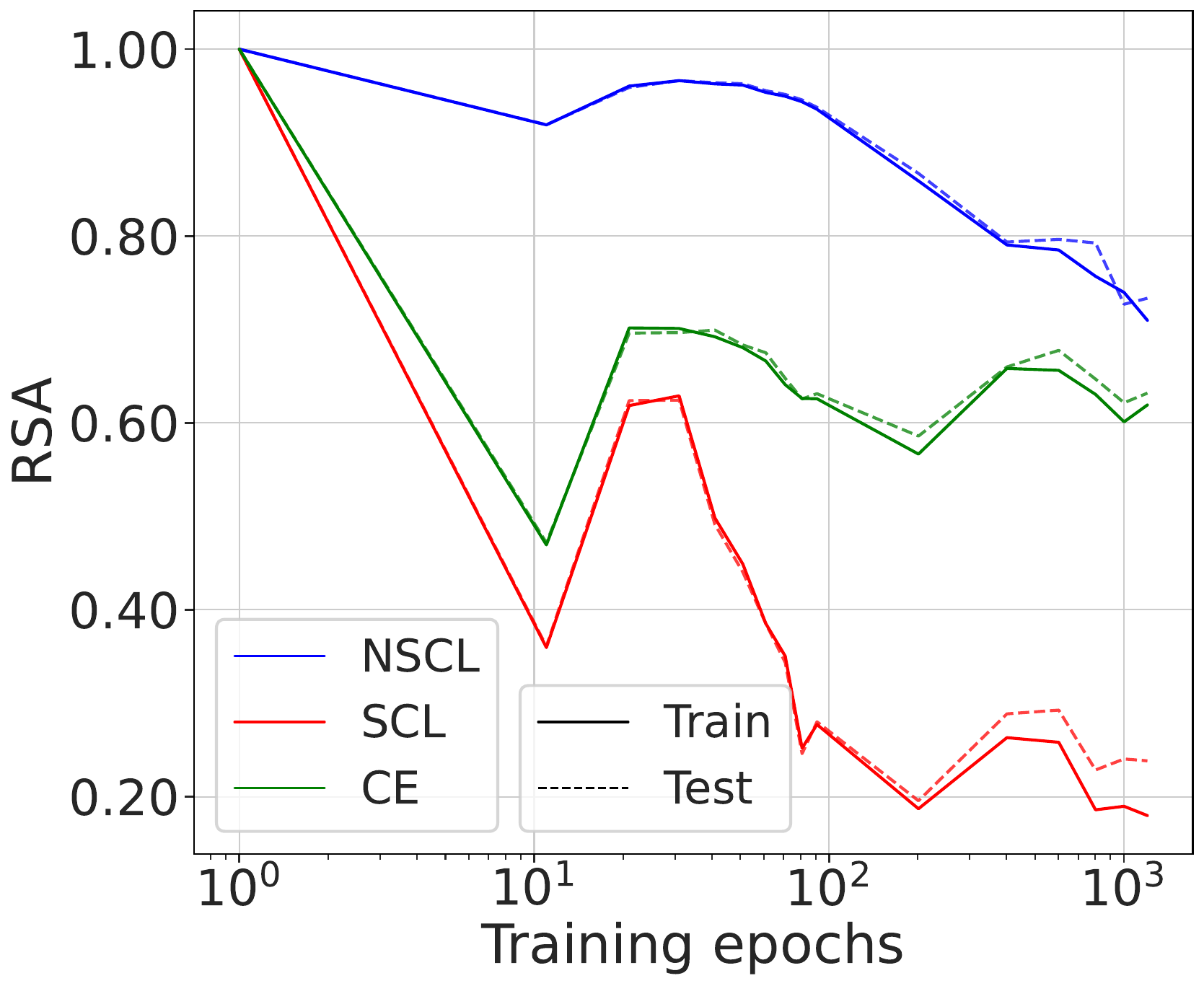} &
         \includegraphics[width=0.235\linewidth]{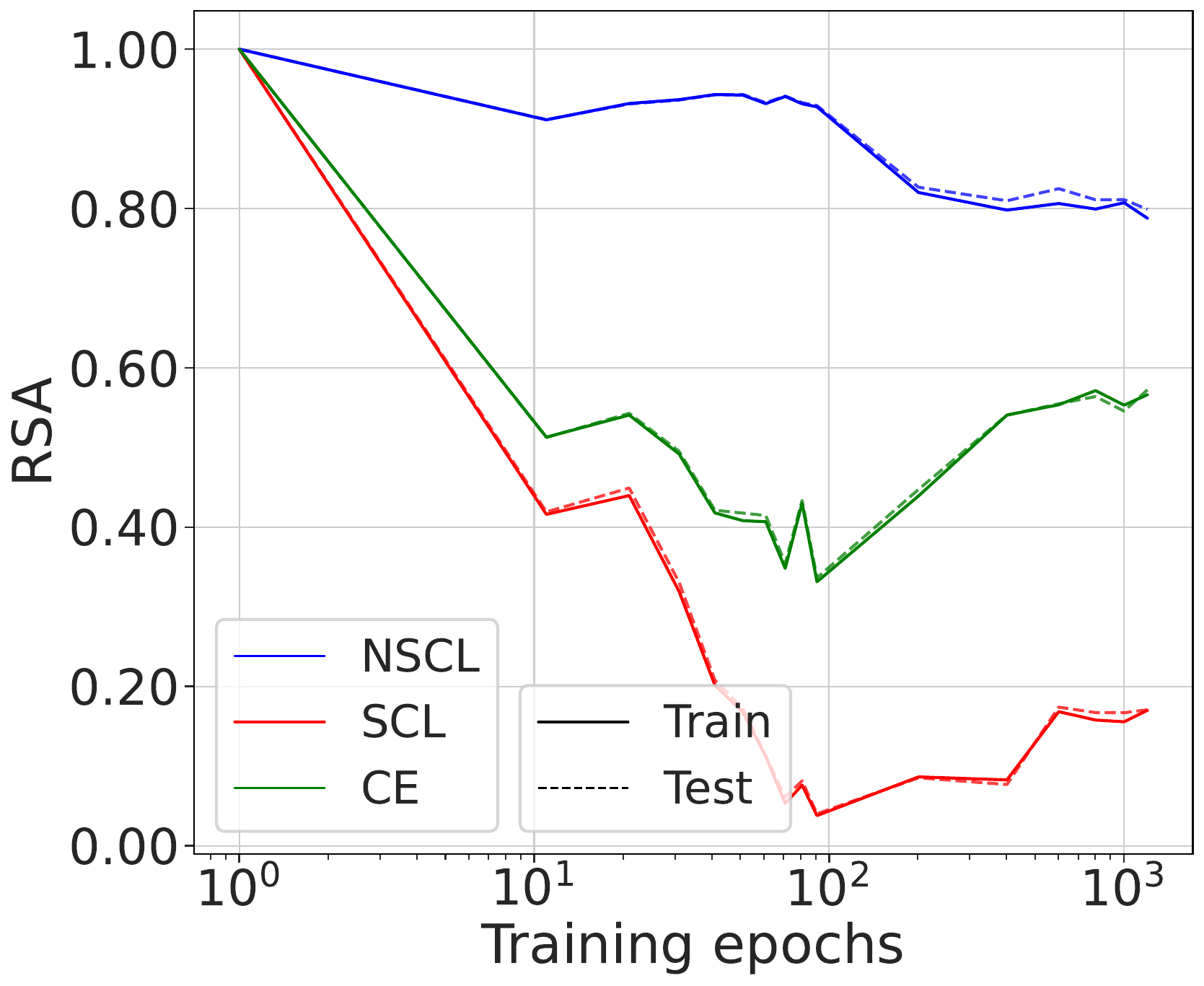} \\
         {\small {\bf (a)} CIFAR10} & {\small {\bf (b)} CIFAR100} & {\small {\bf (c)} Mini-ImageNet} & {\small {\bf (d)} Tiny-ImageNet}
    \end{tabular}   
\caption{\textbf{Alignment during training.} We train ResNet-50 models with decoupled CL, SCL, NSCL, and CE. For the first 1,000 epochs, the CL-trained model is substantially more aligned with the NSCL-trained model than with the others. However, alignment declines when training continues much longer.}
    \label{fig:alignment_epochs}
\end{figure}


\begin{figure}[t]
\centering
\setlength{\tabcolsep}{1.5pt} 
\begin{tabular}{@{}ccccc@{}}
\includegraphics[width=0.195\linewidth]{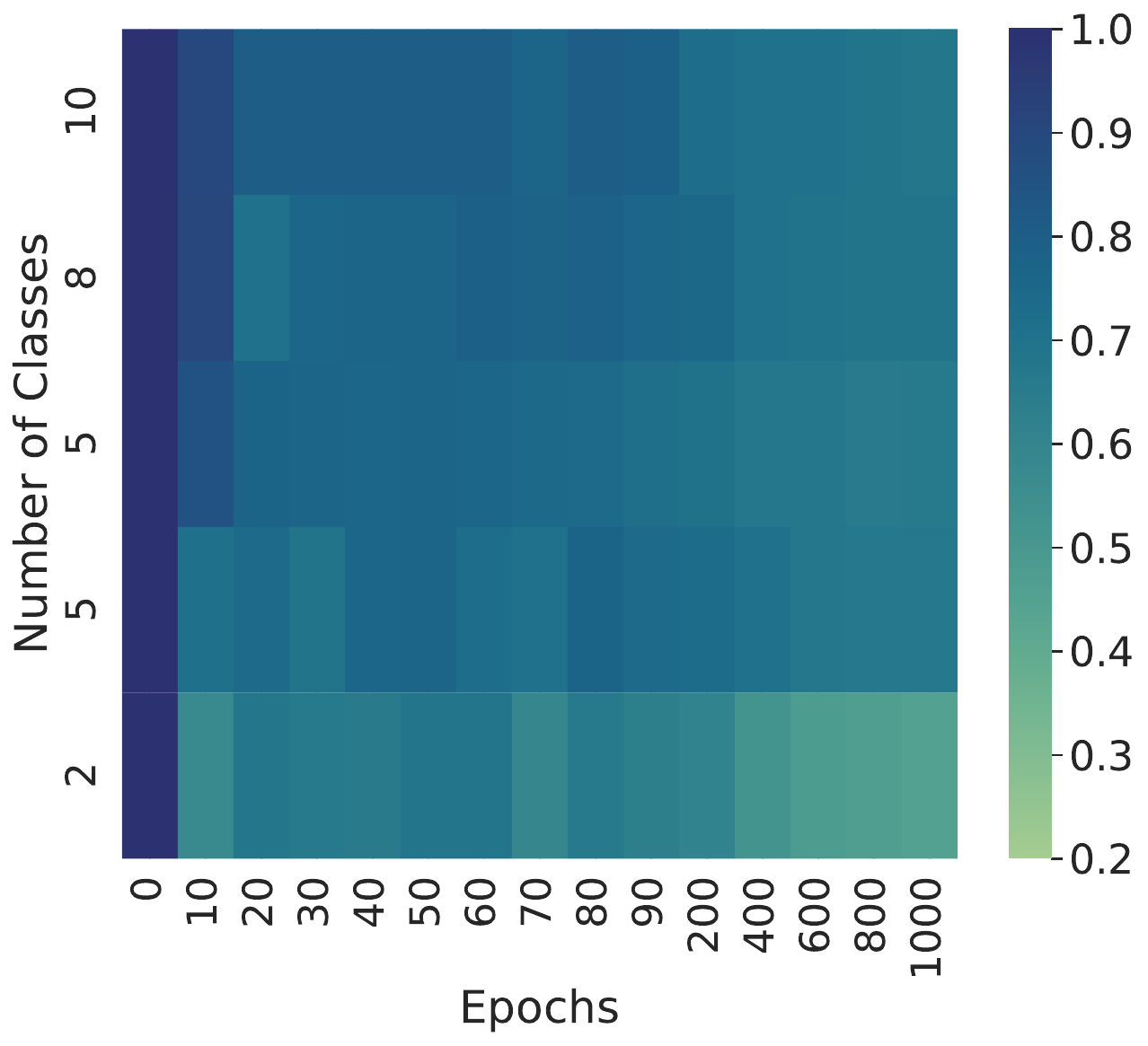} &
\includegraphics[width=0.195\linewidth]{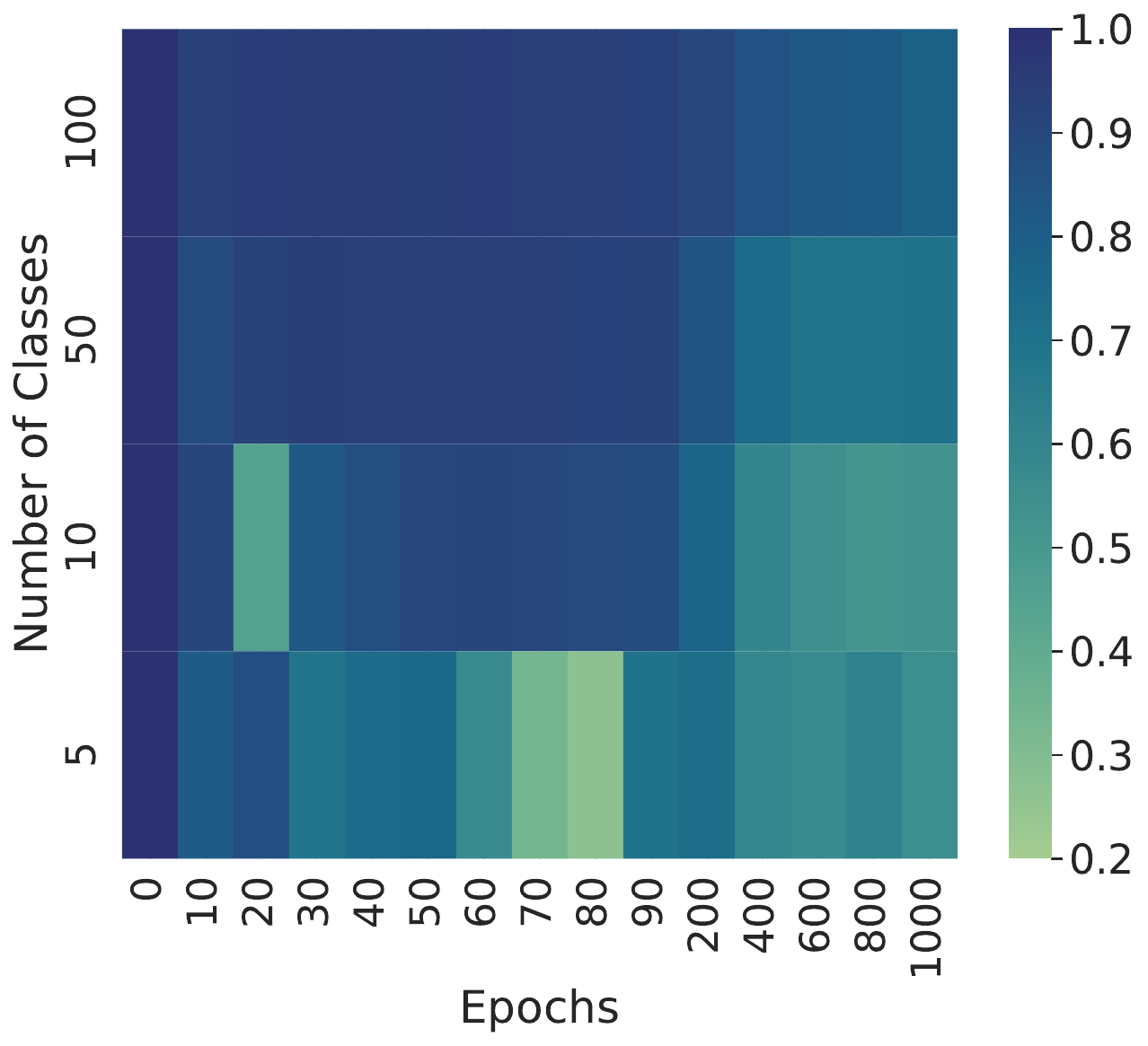} &
\includegraphics[width=0.195\linewidth]{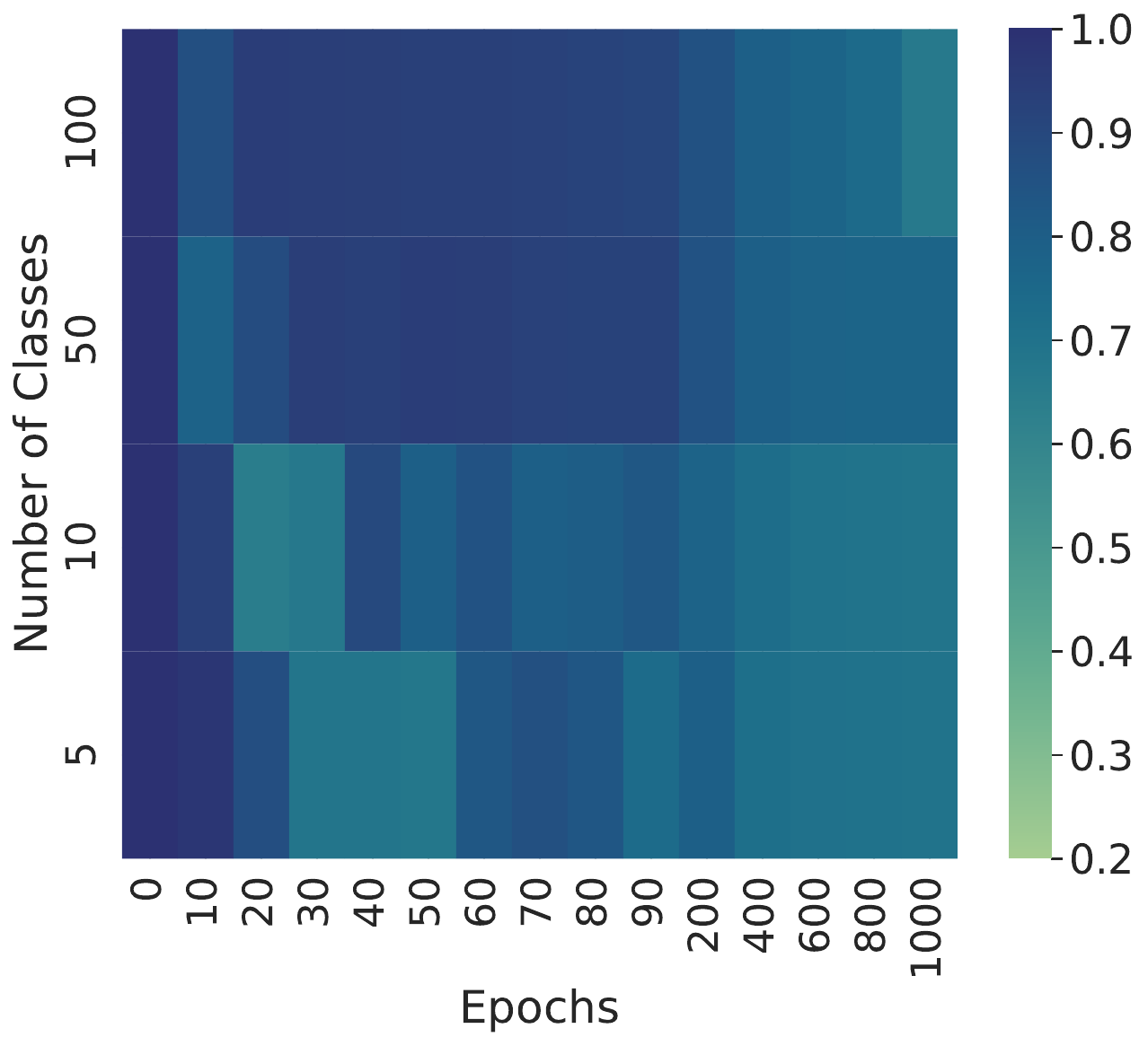} &
\includegraphics[width=0.195\linewidth]{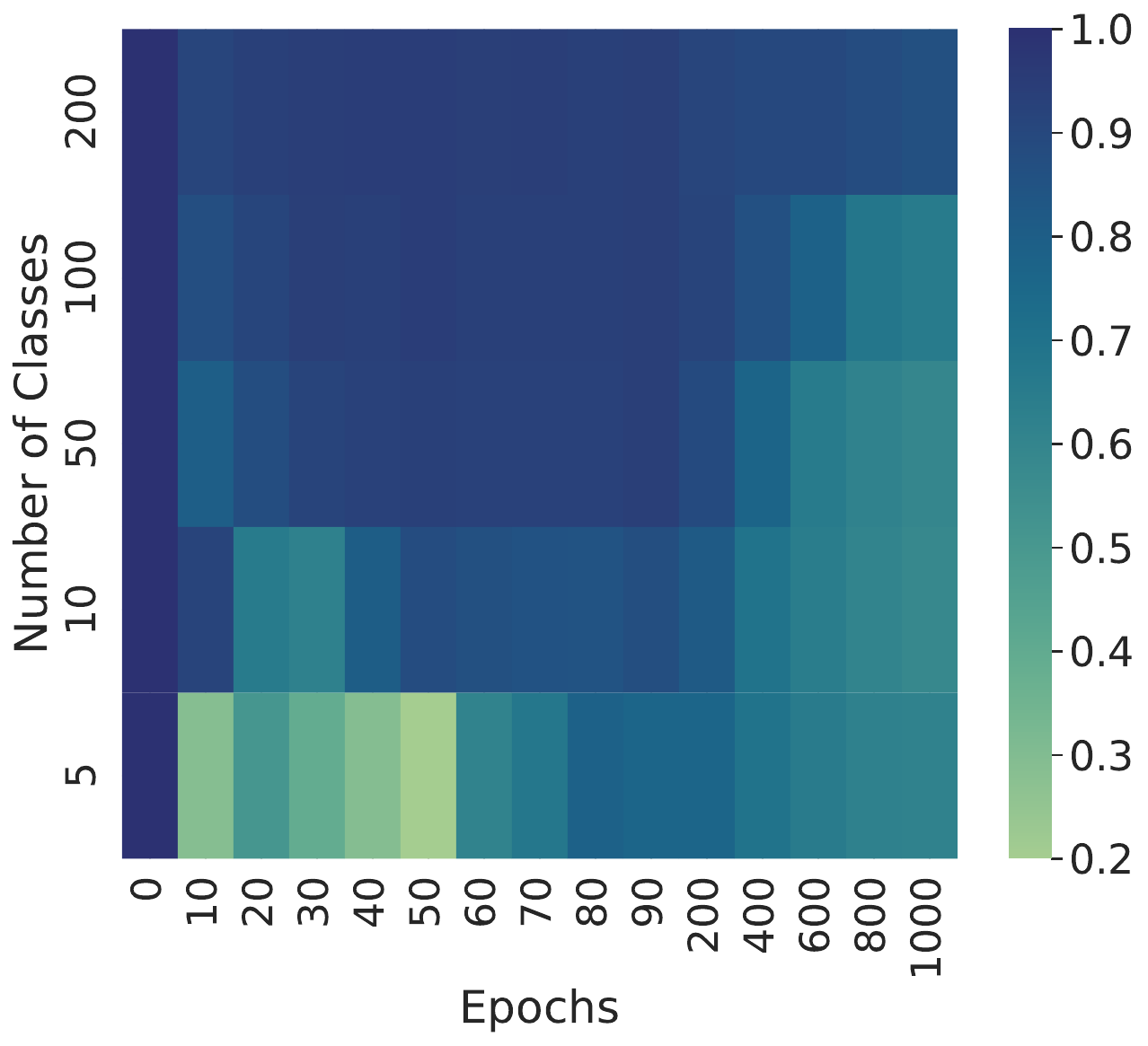} &
\includegraphics[width=0.195\linewidth]{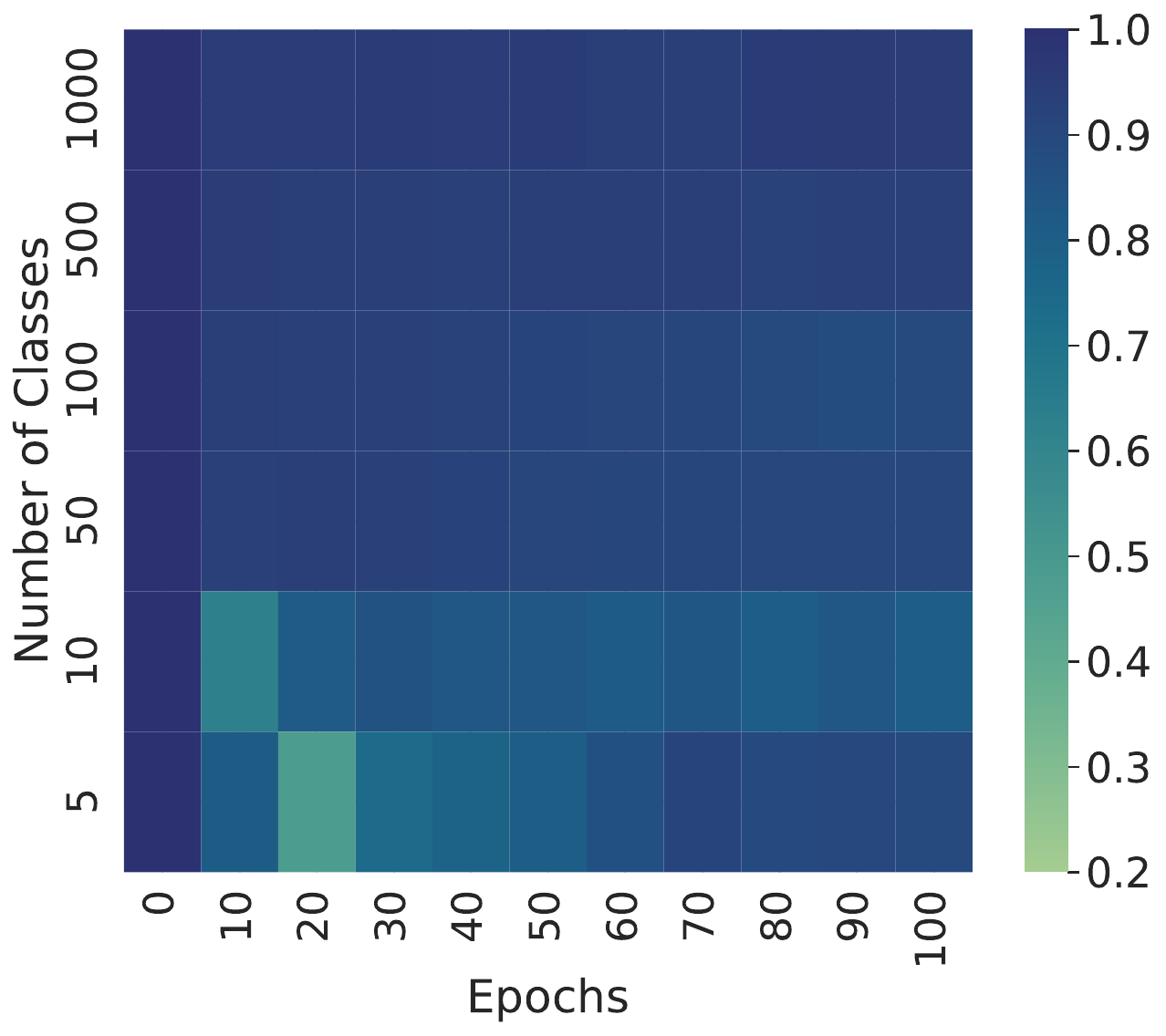} \\
\includegraphics[width=0.195\linewidth]{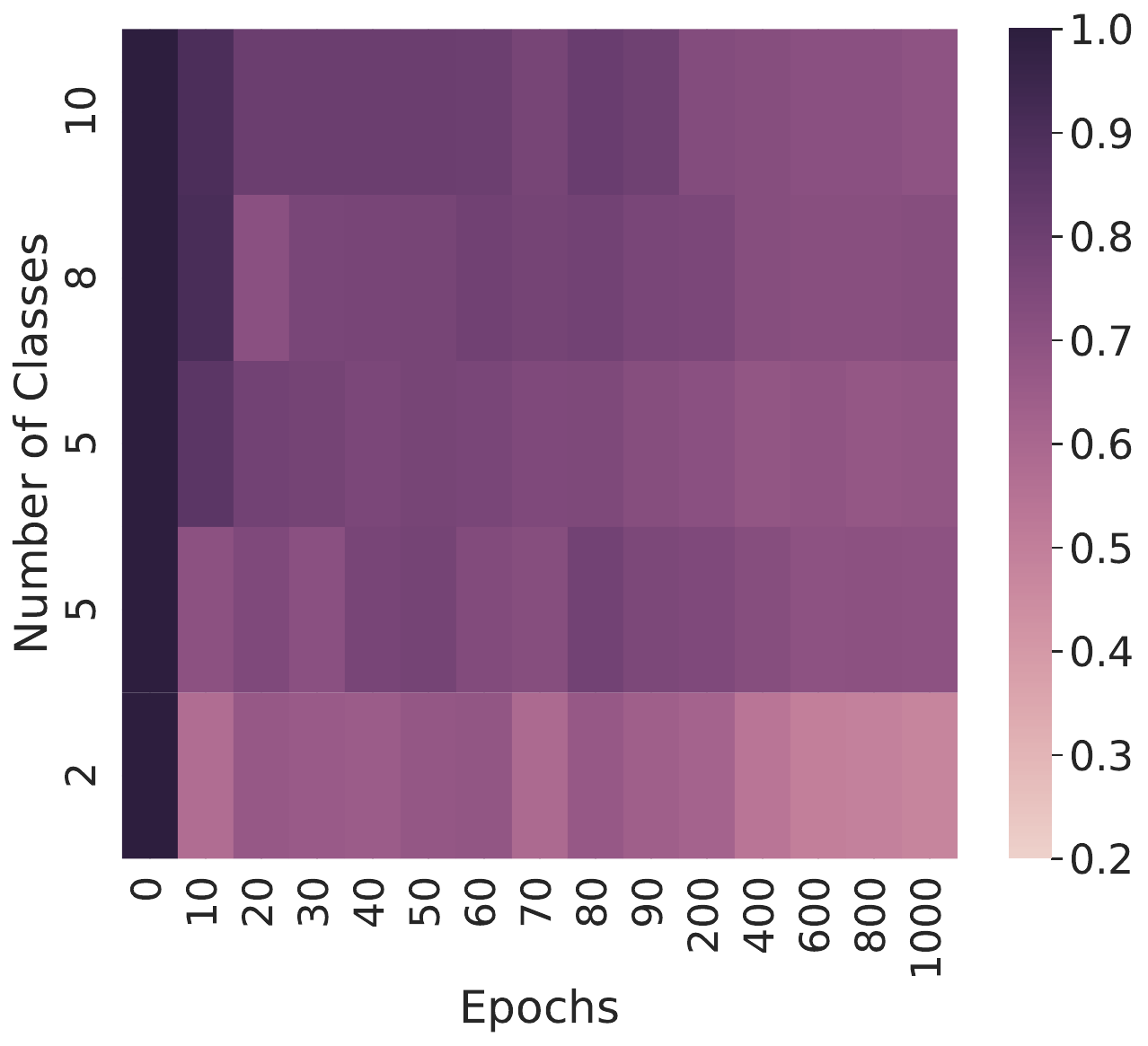} &
\includegraphics[width=0.195\linewidth]{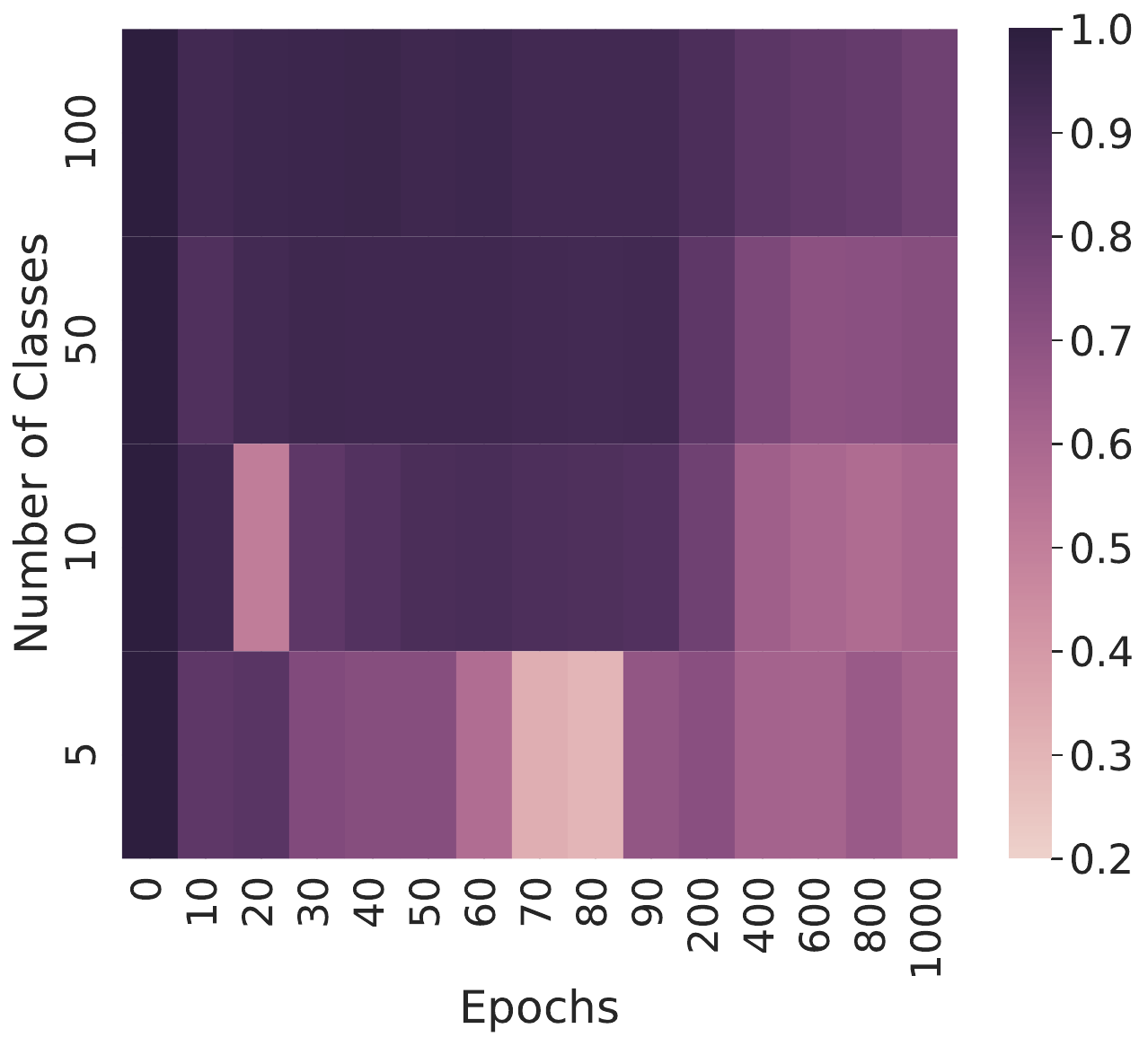} &
\includegraphics[width=0.195\linewidth]{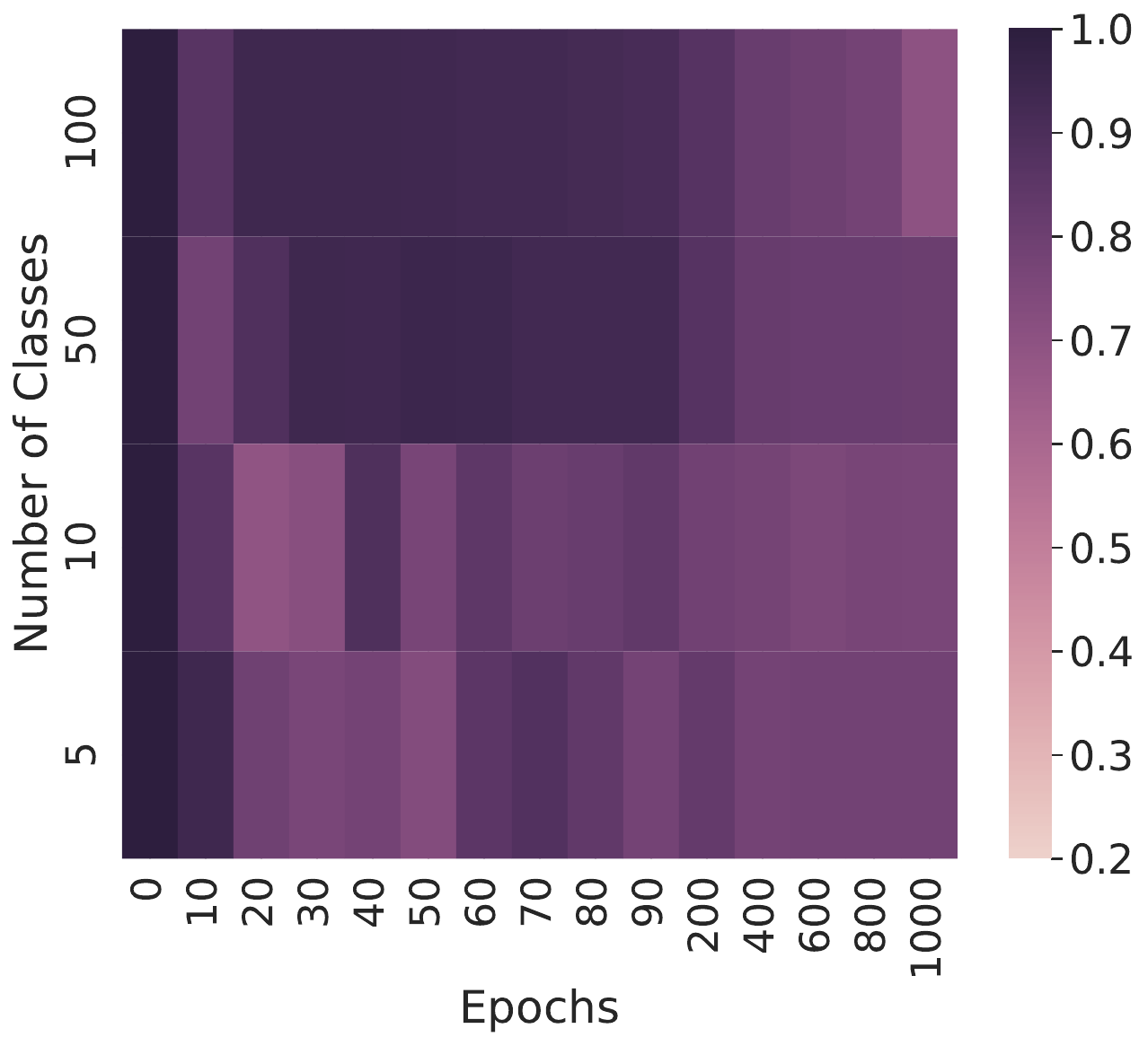} &
\includegraphics[width=0.195\linewidth]{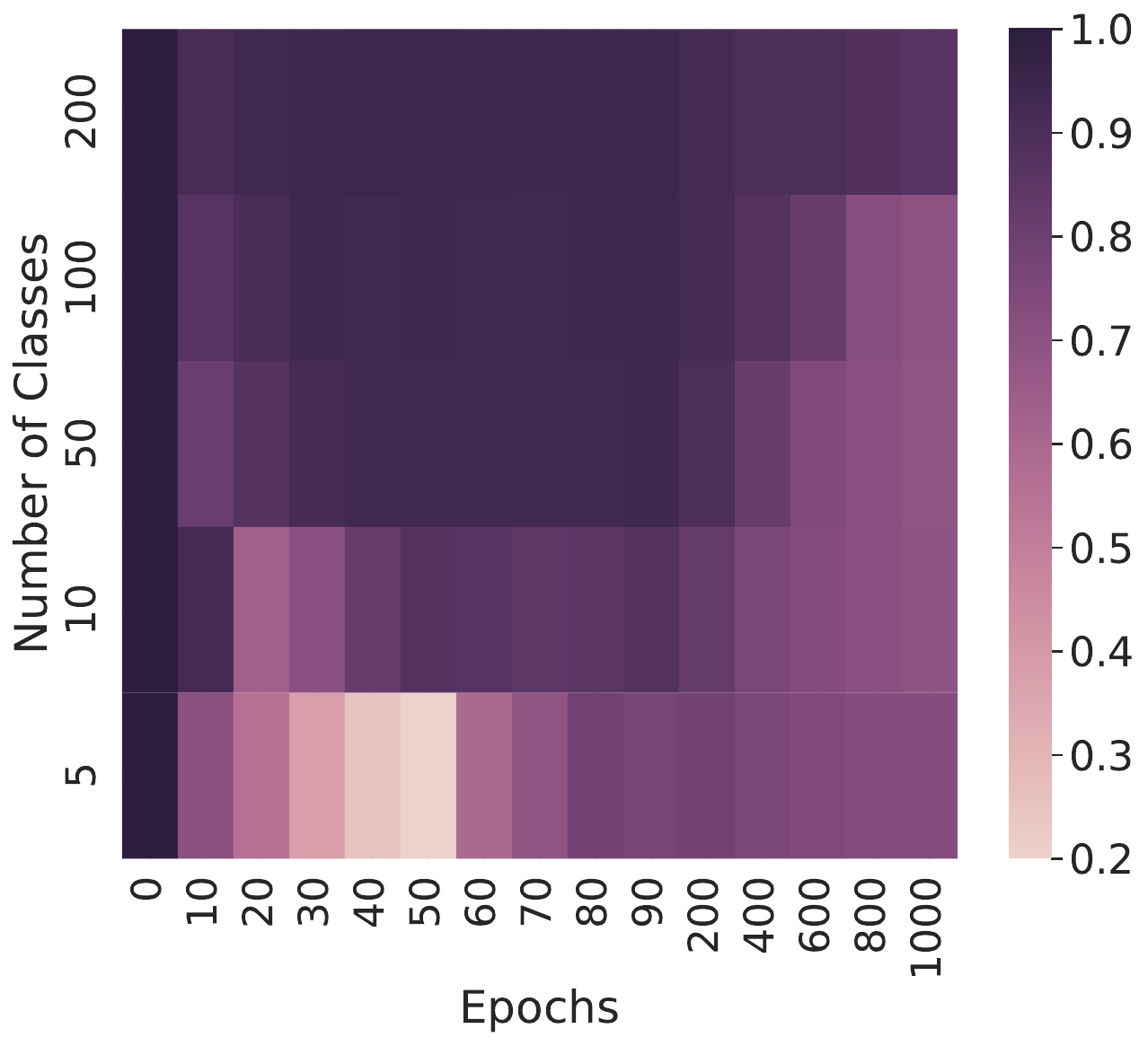} &
\includegraphics[width=0.195\linewidth]{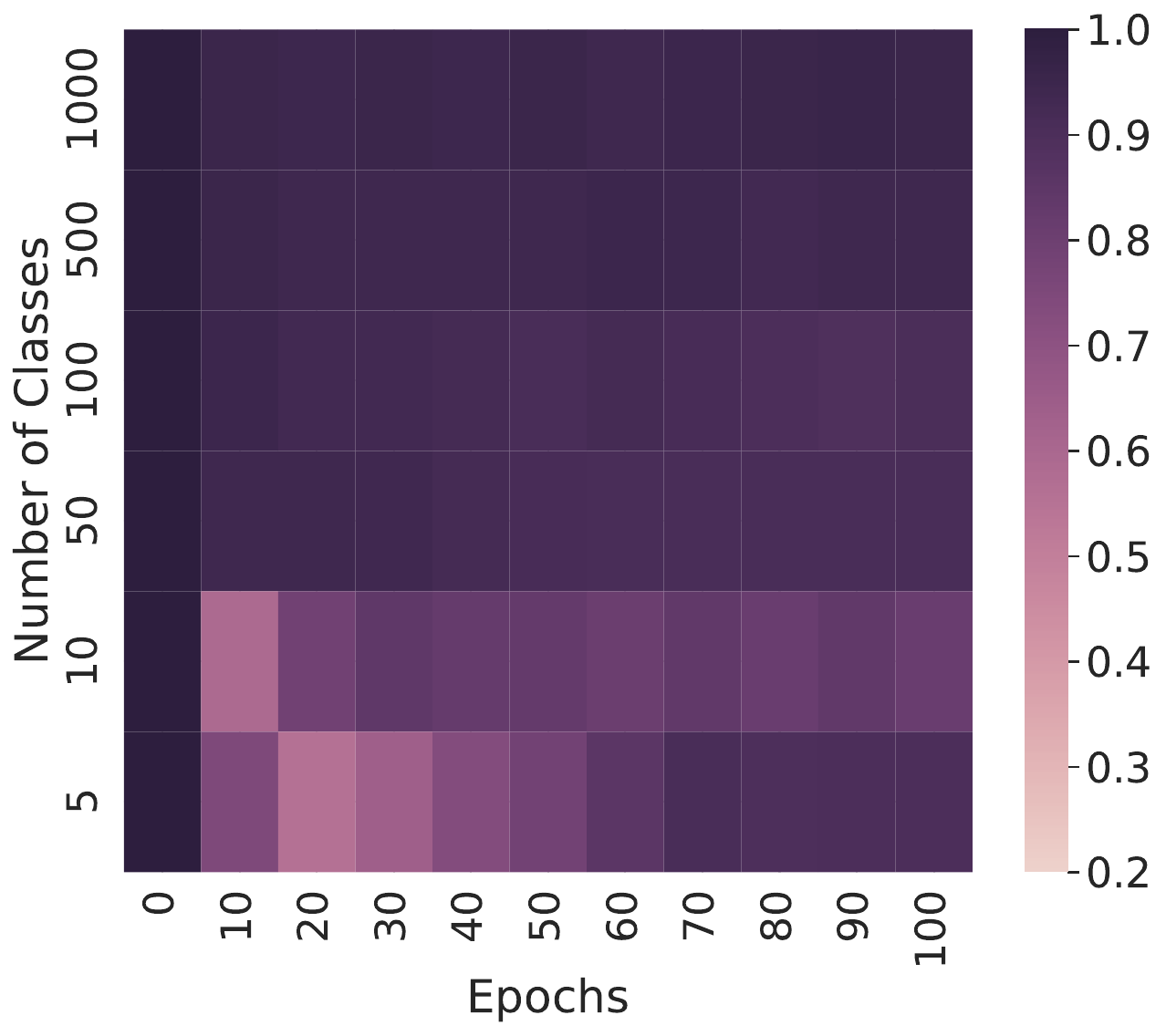} \\
{\small CIFAR-10} & {\small CIFAR-100} & {\small Mini-ImageNet} & {\small Tiny-ImageNet} & {\small ImageNet-1k}
\end{tabular}
\vspace{-0.4em}
  \caption{{\bf CL–NSCL alignment (linear CKA) increases with the number of training classes.} The heatmaps show the linear CKA between CL and NSCL models. For each dataset, we visualize alignment on the training (top row, green) and test (bottom row, purple) sets. The y-axis indicates the number of classes ($N$) used for training, and the x-axis represents the training epoch. While alignment is consistently higher for larger $N$, it also tends to decrease as training progresses for any fixed $N$.}
  \label{fig:alignment_classes}
\end{figure}

\begin{table}[t]
    \centering
    \setlength{\tabcolsep}{8pt} 
    \renewcommand{\arraystretch}{1.15} 
    \begin{tabular}{l|cc|cc|cc|cc}
        \toprule
        & \multicolumn{2}{c|}{CIFAR-10} & \multicolumn{2}{c|}{CIFAR-100} & \multicolumn{2}{c|}{Mini-ImageNet} & \multicolumn{2}{c}{Tiny-ImageNet} \\
        \cmidrule(lr){2-3} \cmidrule(lr){4-5} \cmidrule(lr){6-7} \cmidrule(lr){8-9}
        & NCCC & LP & NCCC & LP & NCCC & LP & NCCC & LP \\
        \midrule
        CL   & 88.37 & 90.16 & 54.62 & 65.65 & 60.78 & 65.30 & 40.59 & 44.61 \\
        NSCL & 94.47 & 94.09 & 60.14 & 68.38 & 63.92 & 72.60 & 40.76 & 45.79 \\
        SCL  & 94.93 & 94.67 & 64.06 & 69.52 & 74.78 &   76.00  & 48.63 & 48.73 \\
        CE   & 92.97 & 93.39 & 67.35 & 68.04 & 75.20 &   74.00  & 48.28 & 52.57 \\
        \bottomrule
    \end{tabular}
    \caption{Nearest Class-Center Classifier (NCCC) and Linear Probe (LP) test accuracies (\%). We report the accuracies against the all-way classification task in each dataset. The models (also used in Fig.~\ref{fig:alignment_epochs}) were pre-trained on their respective datasets: CIFAR10, CIFAR100, Mini-ImageNet and Tiny-ImageNet.}
    \label{tab:acc_full_test_only}
\end{table}

\begin{figure}[t]
    \centering
    \begin{tabular}{c@{\hspace{6pt}}c@{\hspace{6pt}}c@{\hspace{6pt}}c}
          \includegraphics[width=0.235\linewidth]{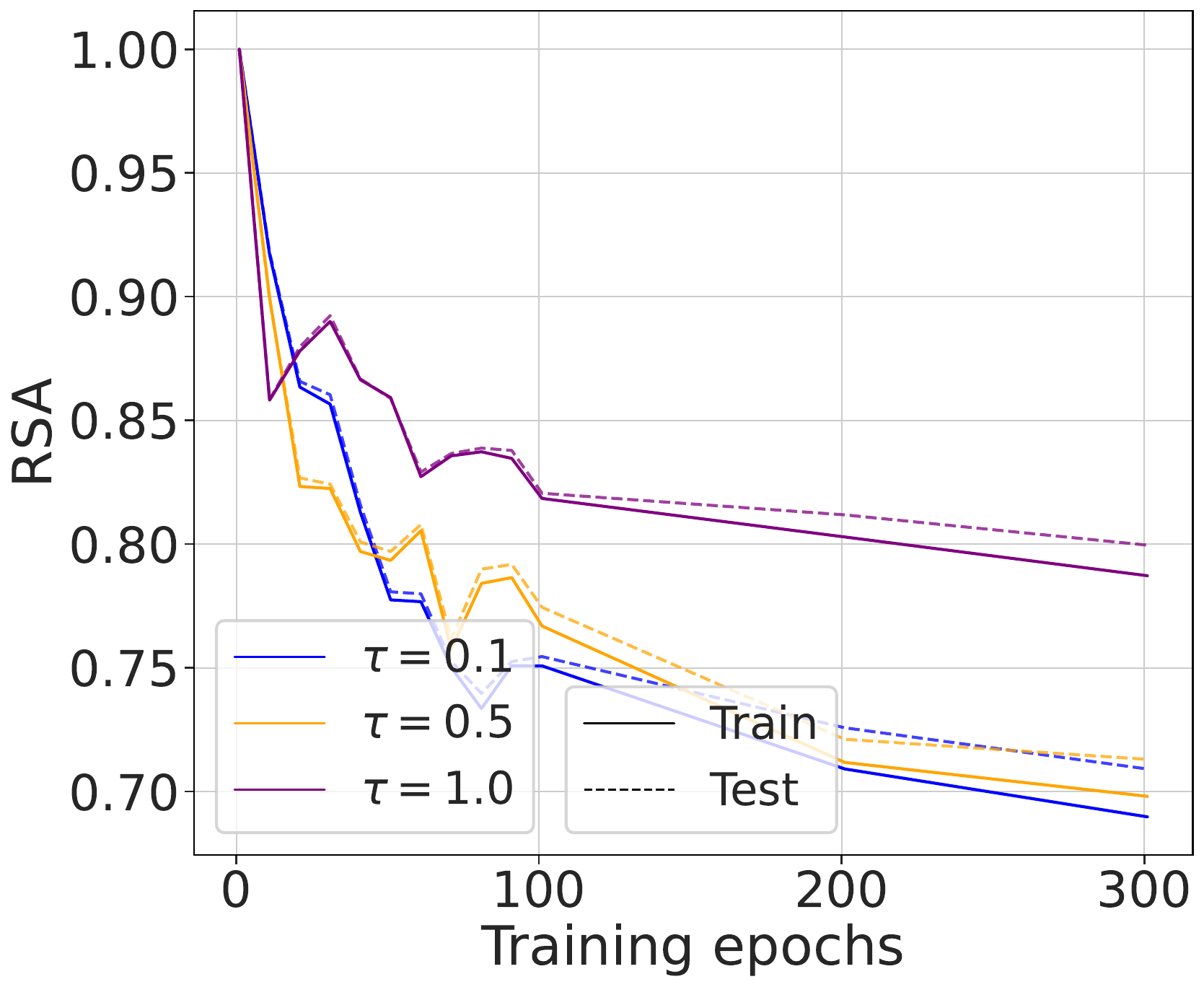} &
          \includegraphics[width=0.235\linewidth]{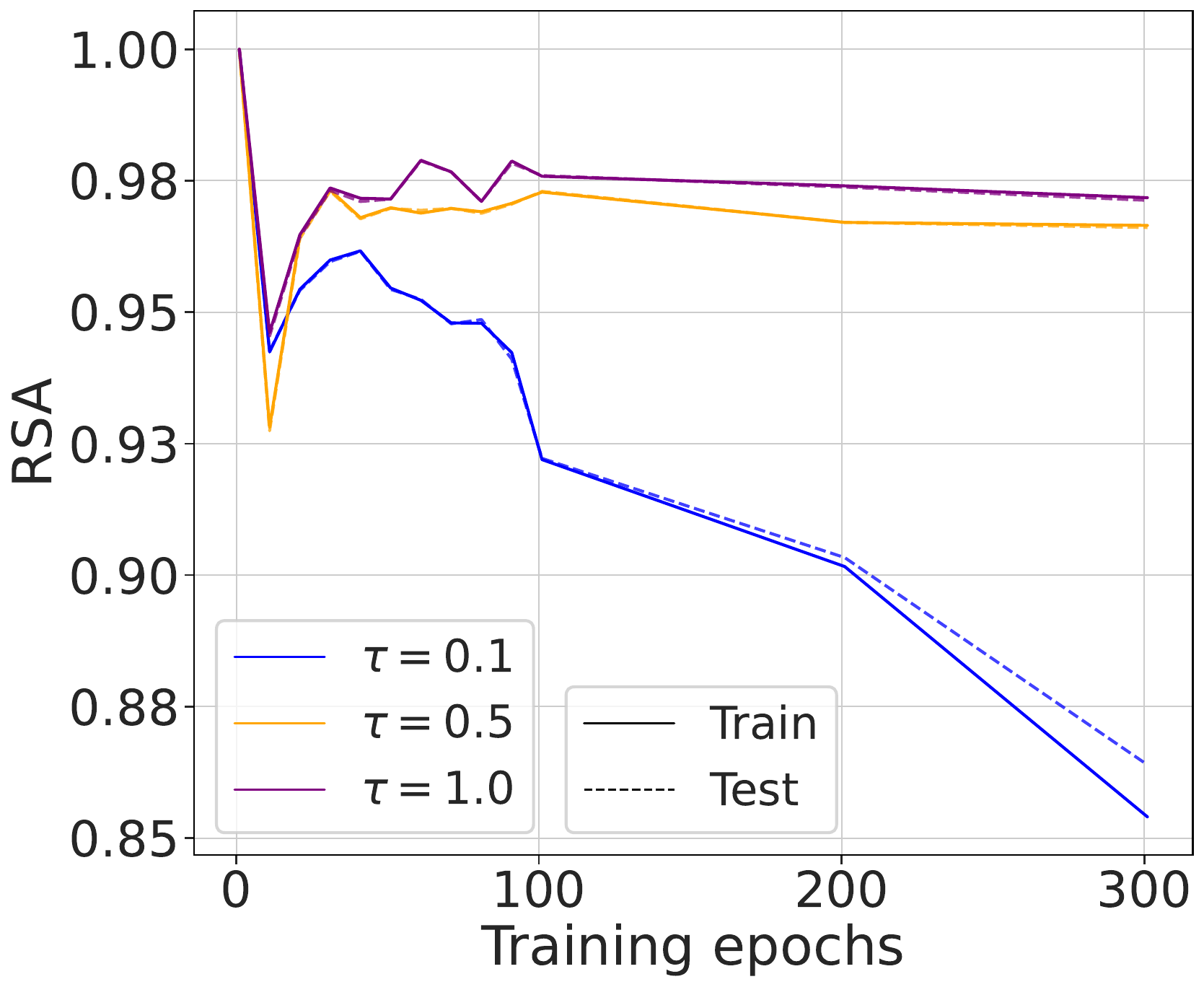} &
          \includegraphics[width=0.235\linewidth]{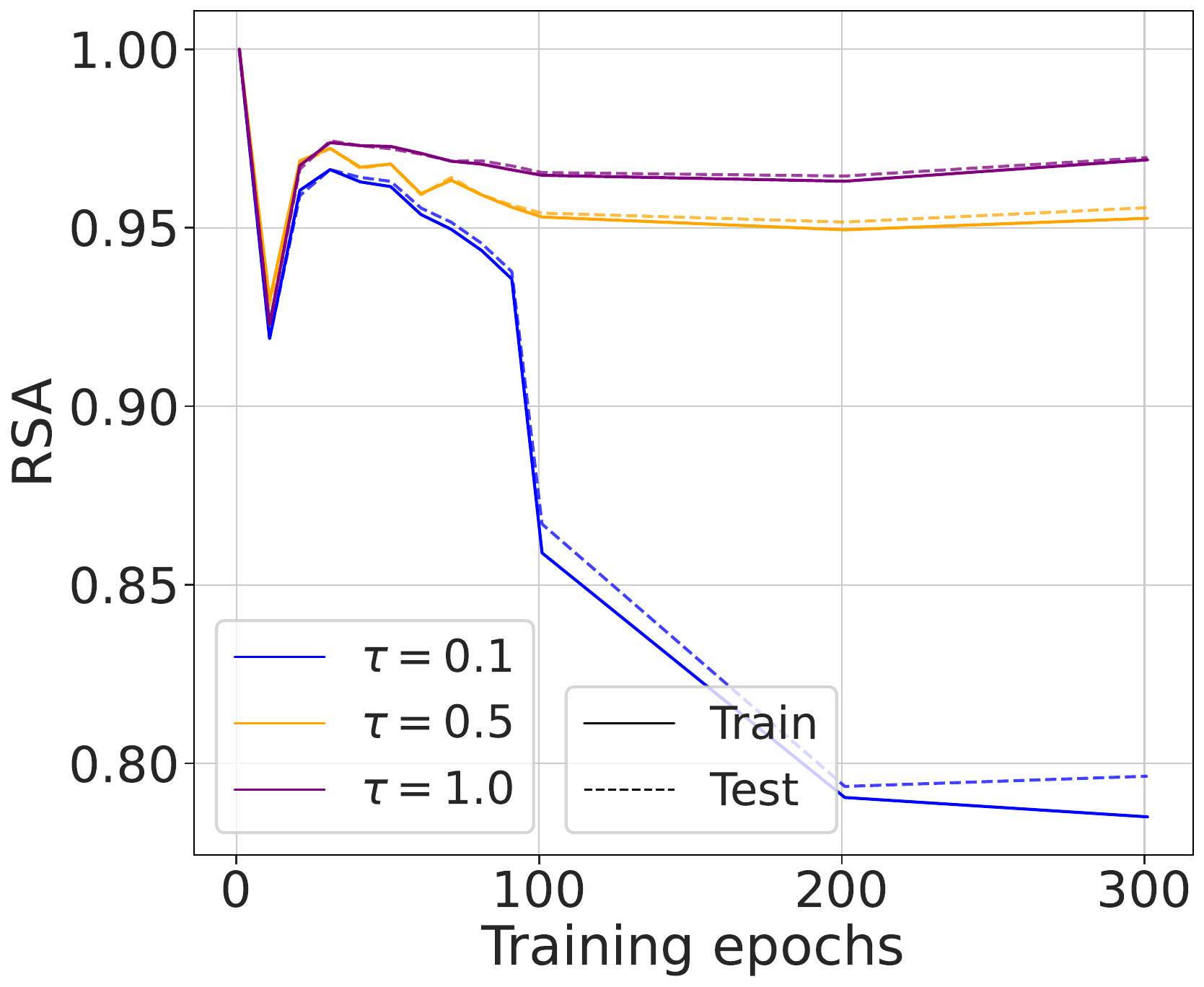} &
          \includegraphics[width=0.235\linewidth]{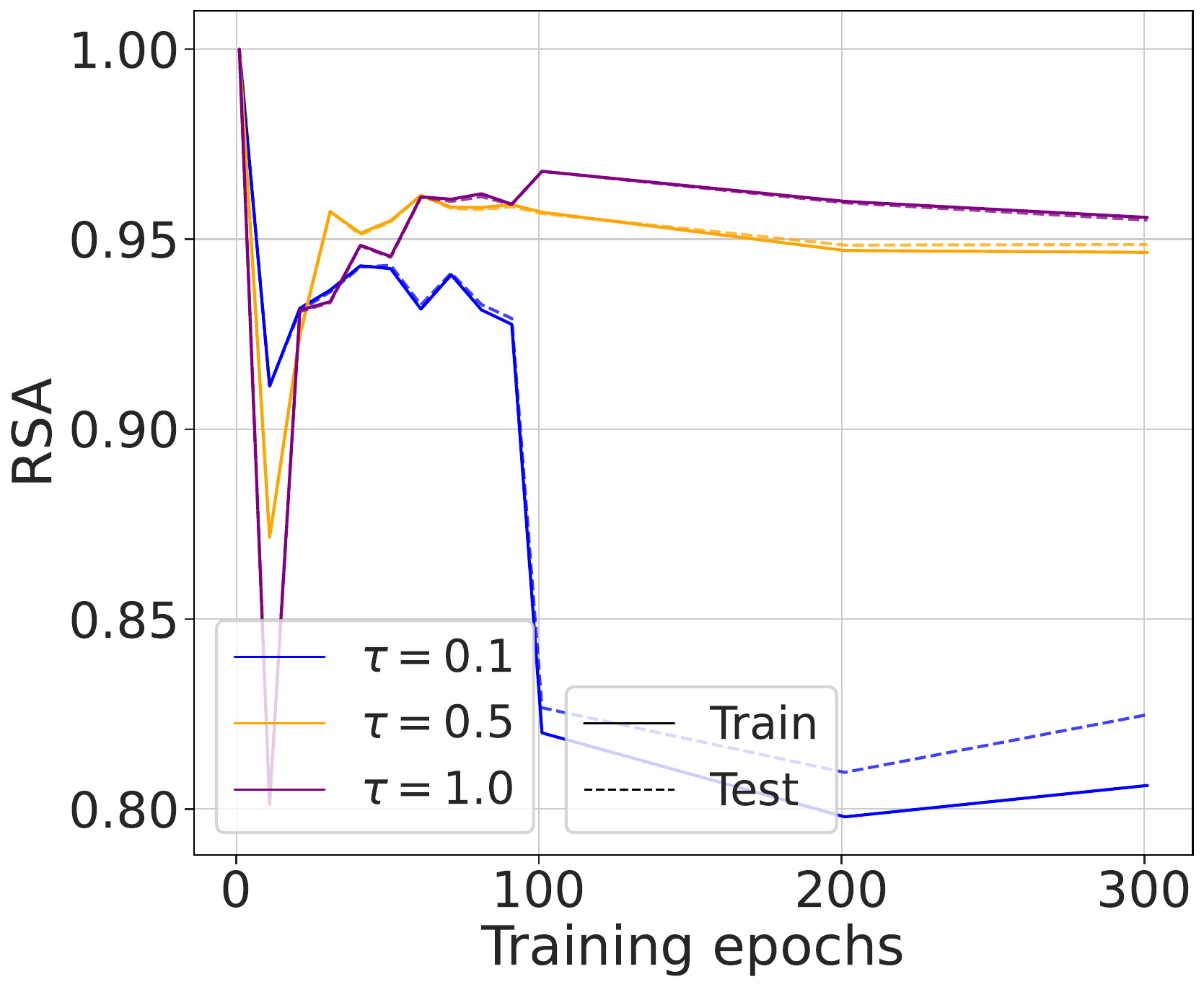} \\
          \includegraphics[width=0.235\linewidth]{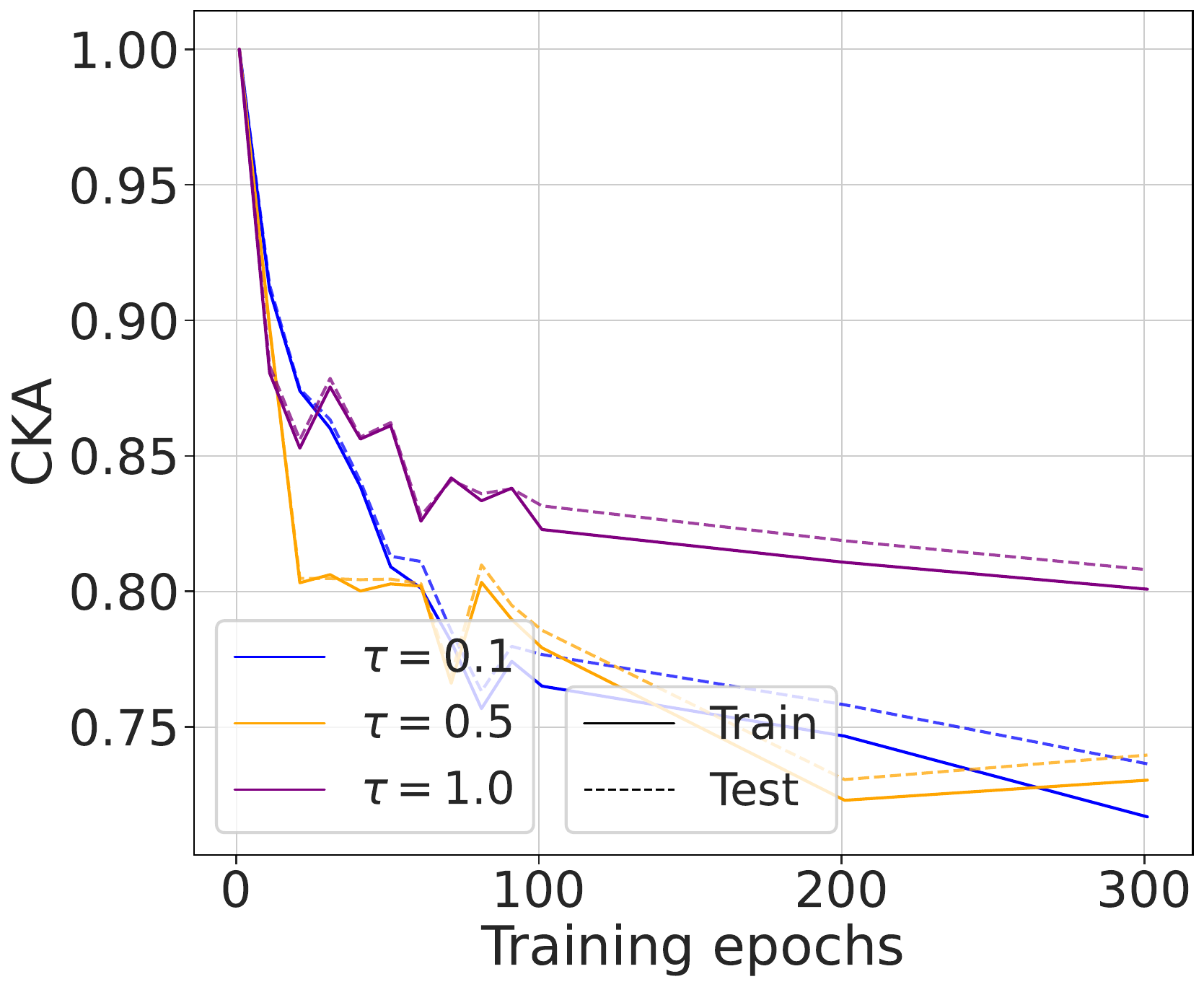} &
          \includegraphics[width=0.235\linewidth]{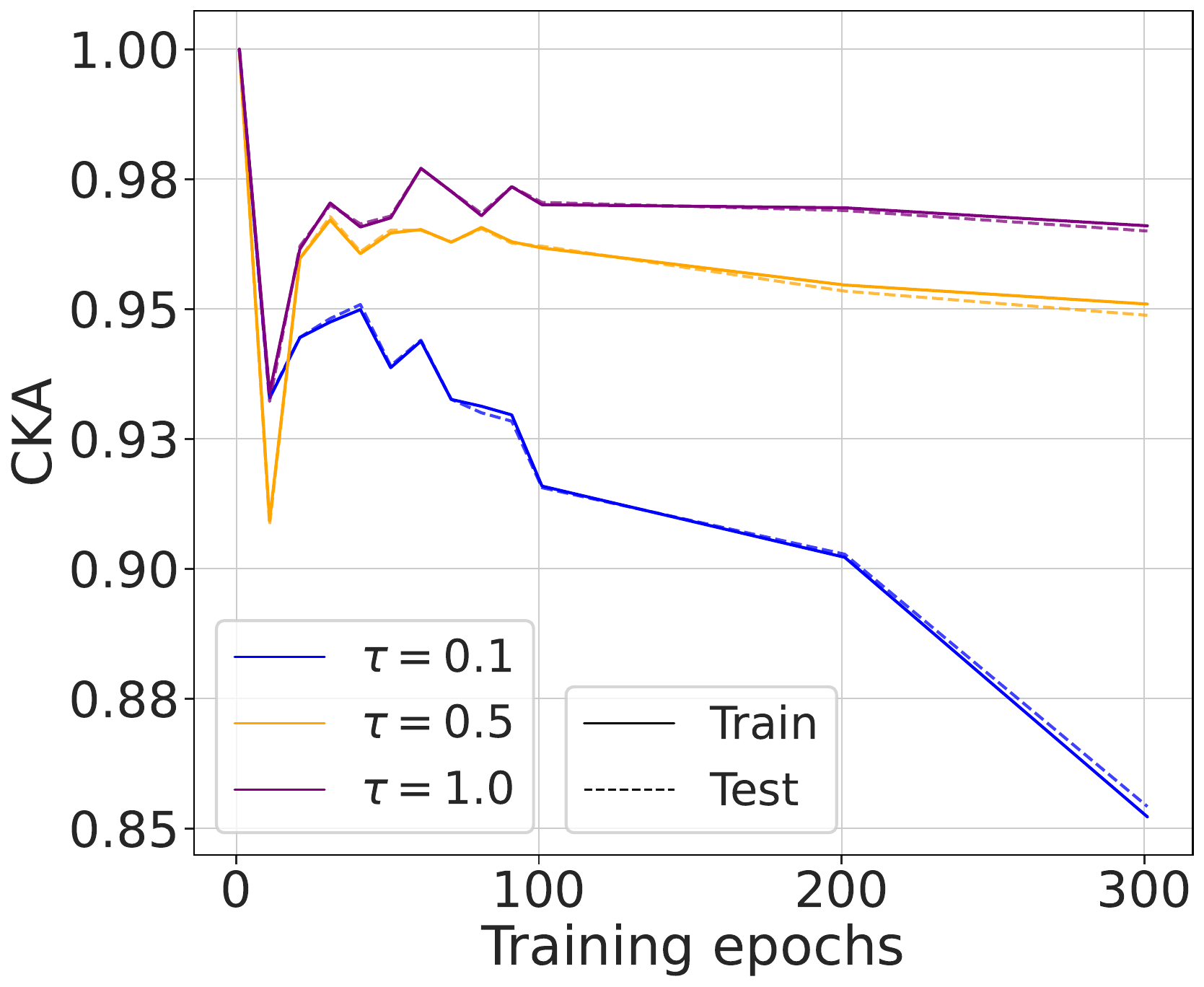} &
          \includegraphics[width=0.235\linewidth]{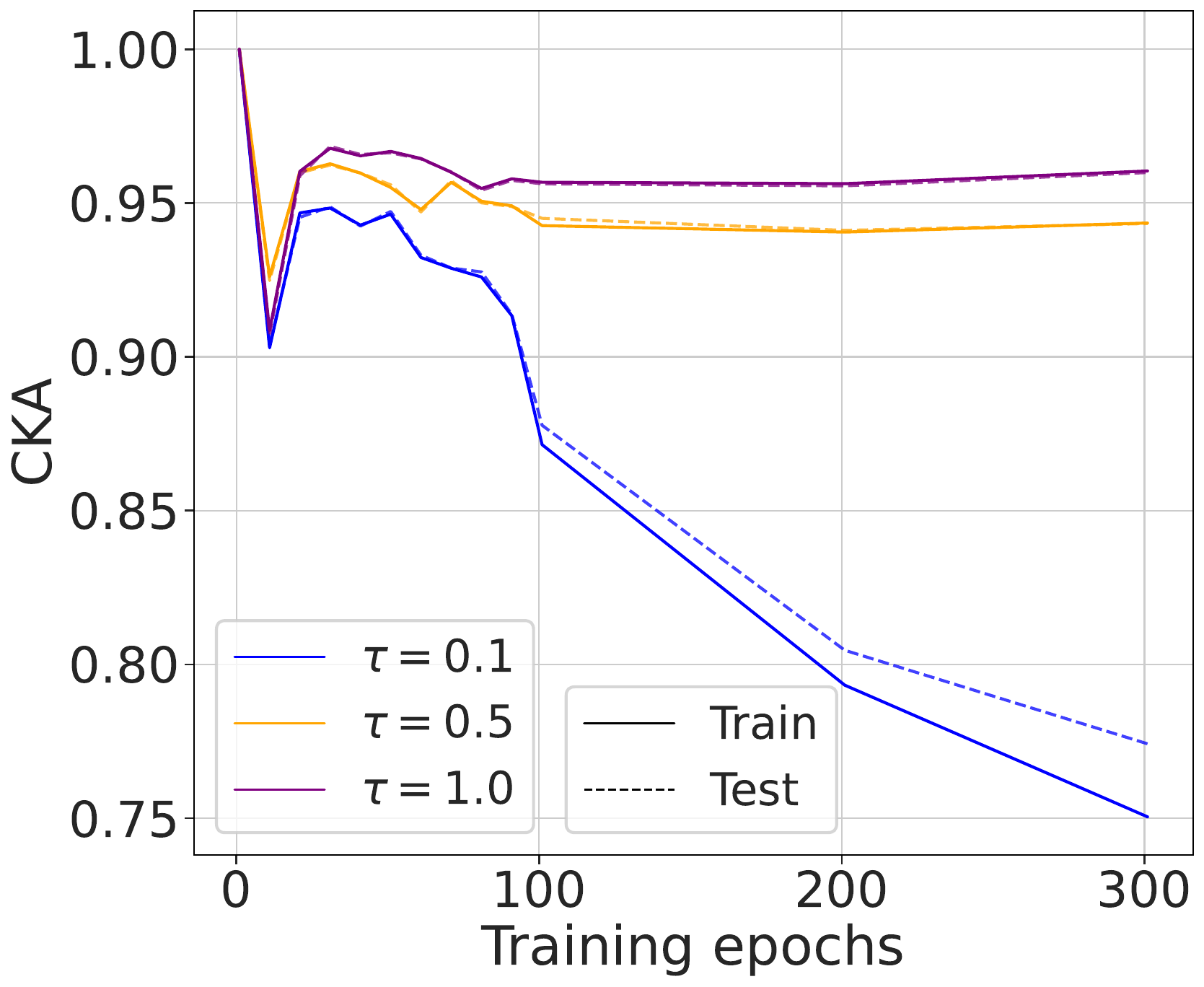} &
          \includegraphics[width=0.235\linewidth]{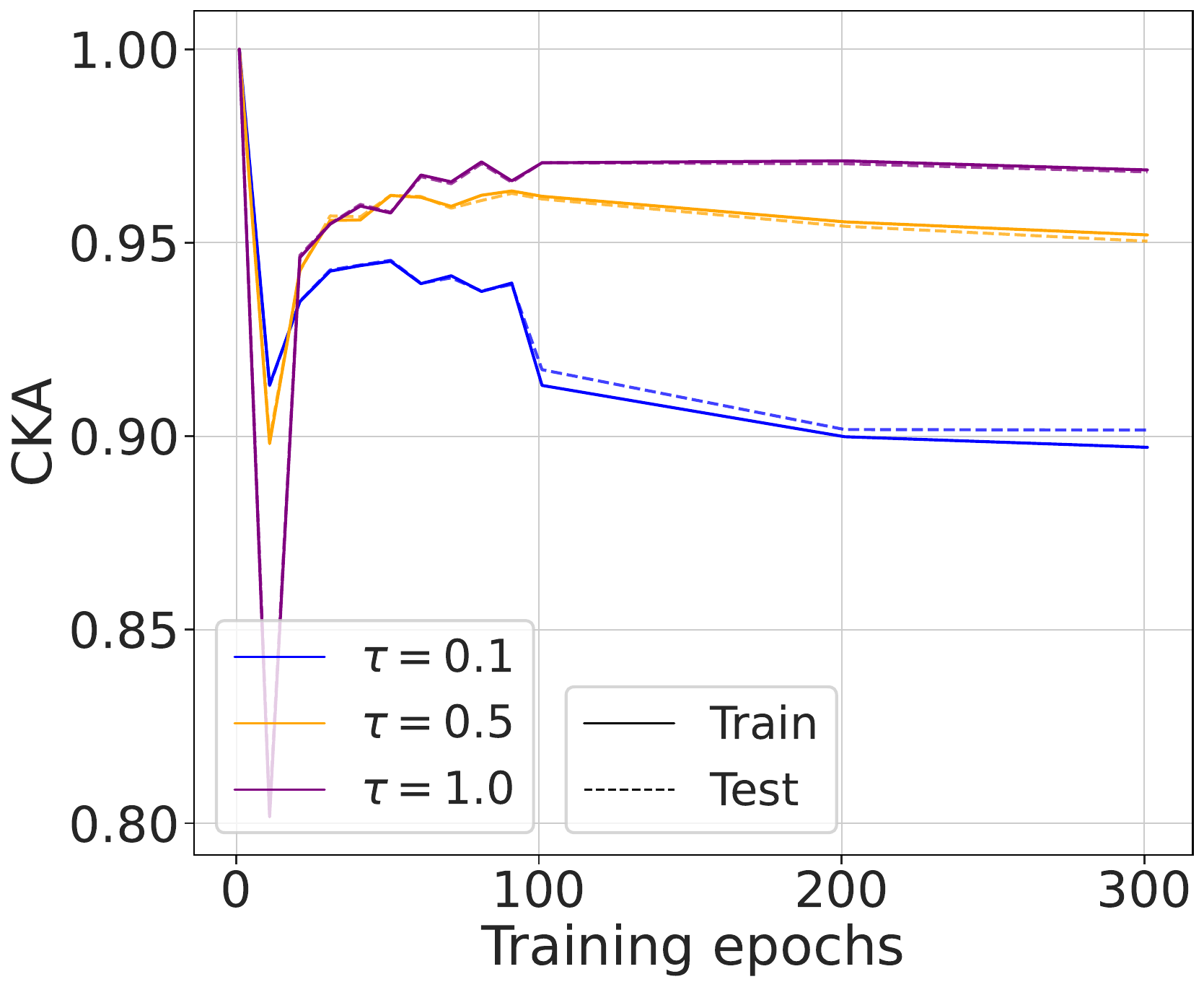} \\
          {\small {\bf (a)} CIFAR10} & {\small {\bf (b)} CIFAR100} & {\small {\bf (c)} Mini-ImageNet} & {\small {\bf (d)} Tiny-ImageNet}
    \end{tabular}
    \caption{{\bf Higher $\tau$ increases the CL-NSCL alignment.} The plots show RSA (top row) and CKA (bottom row) over 300 epochs. We trained CL and NSCL models with varying temperatures ($\tau \in \{ 0.1, 0.5, 1.0\}$) on four datasets. Across all datasets, a higher temperature $\tau = 1.0$ (shown in purple) evidently results in the highest alignment.} 
    \label{fig:alignment_tau}
\end{figure}

{\bf Alignment analysis as a function of epochs. \enspace} To understand how representational similarity evolves, we trained a model with a CL objective and monitored its alignment (via CKA/RSA) against supervised models trained with NSCL, CE, and SCL. We find that NSCL consistently achieves the highest alignment with CL throughout training across multiple datasets compared to CE and SCL (see Fig.~\ref{fig:alignment_epochs}). For example, after 1k epochs on Tiny-ImageNet, the CL-NSCL alignment reaches a CKA of $0.87$, in contrast to just $0.043$ for CL-SCL.  

Intuitively, this behavior stems from the geometric constraints induced by the respective loss functions. NSCL's objective structurally mimics CL's. Both losses attract a single positive sample to an anchor for each instance, encouraging a similar geometric arrangement focused on instance-level discrimination. SCL enforces a much stronger, class-level constraint. By pulling all positive samples in a batch together, it aggressively reduces intra-class variance to form tight clusters. This induces a geometric structure different from the one learned by CL's instance-based task~\citep{luthra2025selfsupervisedcontrastivelearningapproximately}. Cross-Entropy (CE) represents an intermediate case. It focuses on class separability via a decision boundary rather than explicit feature clustering. This explains its modest alignment trajectory, which dips initially before rising as both CL and CE converge toward learning semantic features.

{\bf Validating Thm.~\ref{thm:sim-coupling} as a function of class count.\enspace} Our theory (Thm.~\ref{thm:sim-coupling}) predicts that using more classes yields stronger CL--NSCL alignment. We test this via $C'$-way training: for each $C'\in[2,C]$, we train CL and NSCL on random $C'$-class subsets for 1,000 epochs (except 100 epochs for IM-1K). As shown in Fig.~\ref{fig:alignment_classes}, representation similarity (RSA/CKA) increases with $C'$ across all datasets. 

{\bf Effect of temperature on alignment.\enspace} According to our theoretical analysis (Cors.~\ref{cor:cka-main}–\ref{cor:rsa-main}), CL-NSCL alignment improves with higher values of temperature ($\tau$). We empirically verify this claim by training CL and NSCL models for 300 epochs, over three different values of $\tau \in \{ 0.1, 0.5, 1.0 \}$. Both models--CL and NSCL--are trained with same $\tau$ in each run. As shown in Fig.~\ref{fig:alignment_tau}, models trained with $\tau = 1.0$ achieve higher alignment compared to models trained with lower temperatures. 

{\bf Effect of batch size on alignment.\enspace} Our framework also predicts that alignment varies with batch-size $B$ depending on how the learning rate scales. To investigate this, we vary $\eta$ with $B$ across four cases: $\eta=\tfrac{0.3B}{256}$, $\eta=\tfrac{0.3\sqrt{B}}{256}$, $\eta=\tfrac{0.3\sqrt[4]{B}}{256}$, and $\eta=0.3$. Under $\mathcal{O}(B)$ scaling, CL–NSCL alignment decreases as $B$ grows, matching the theorem’s implication for that scaling; for the other three cases, alignment increases with $B$, again consistent with the bound under those dependencies (see Fig.~\ref{fig:alignment_batches}).

\begin{figure}[t]
    \centering
    \begin{tabular}{c@{\hspace{6pt}}c@{\hspace{6pt}}c@{\hspace{6pt}}c}
         \includegraphics[width=0.235\linewidth]{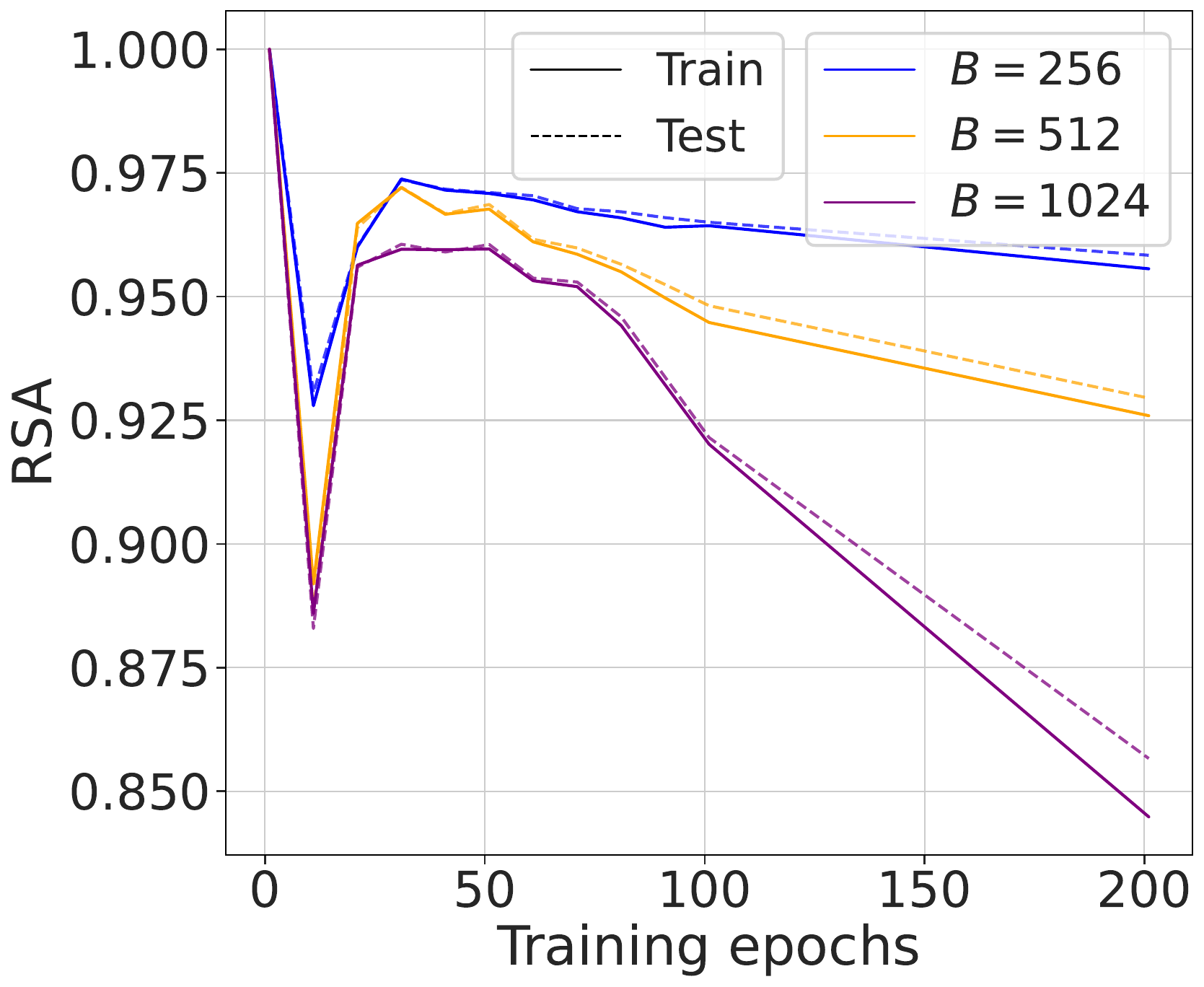} &
         \includegraphics[width=0.235\linewidth]{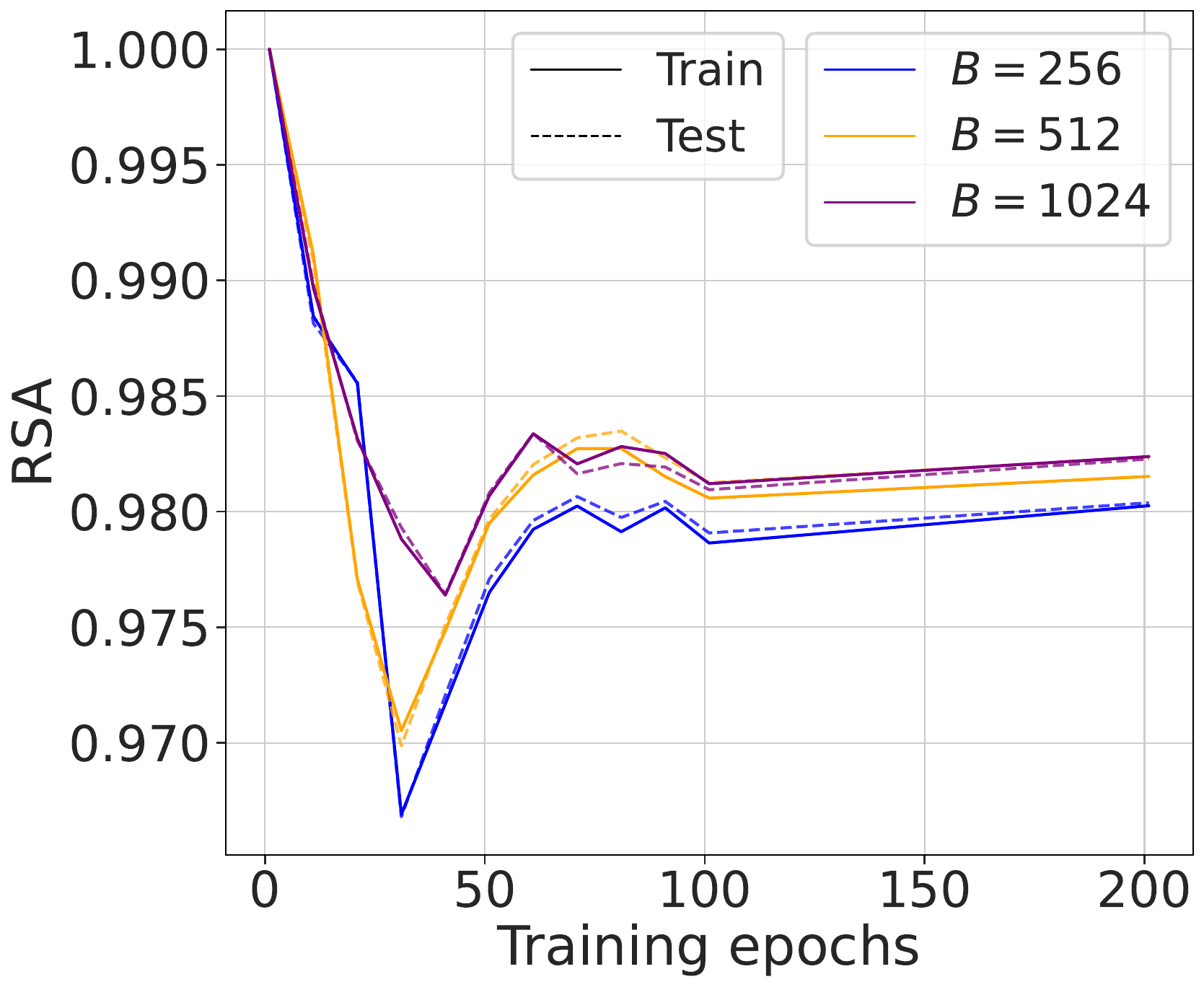} & \includegraphics[width=0.235\linewidth]{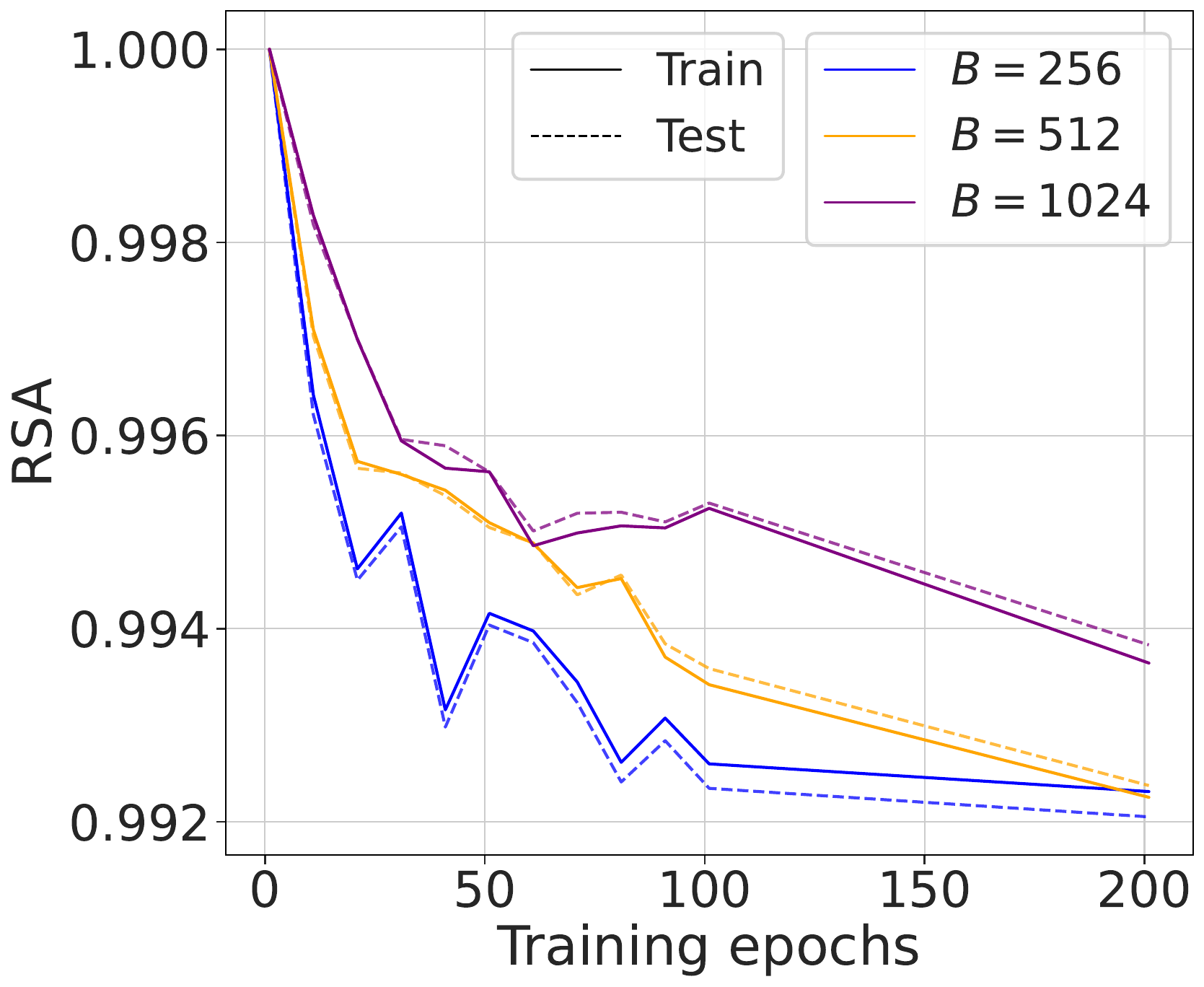} &
         \includegraphics[width=0.235\linewidth]{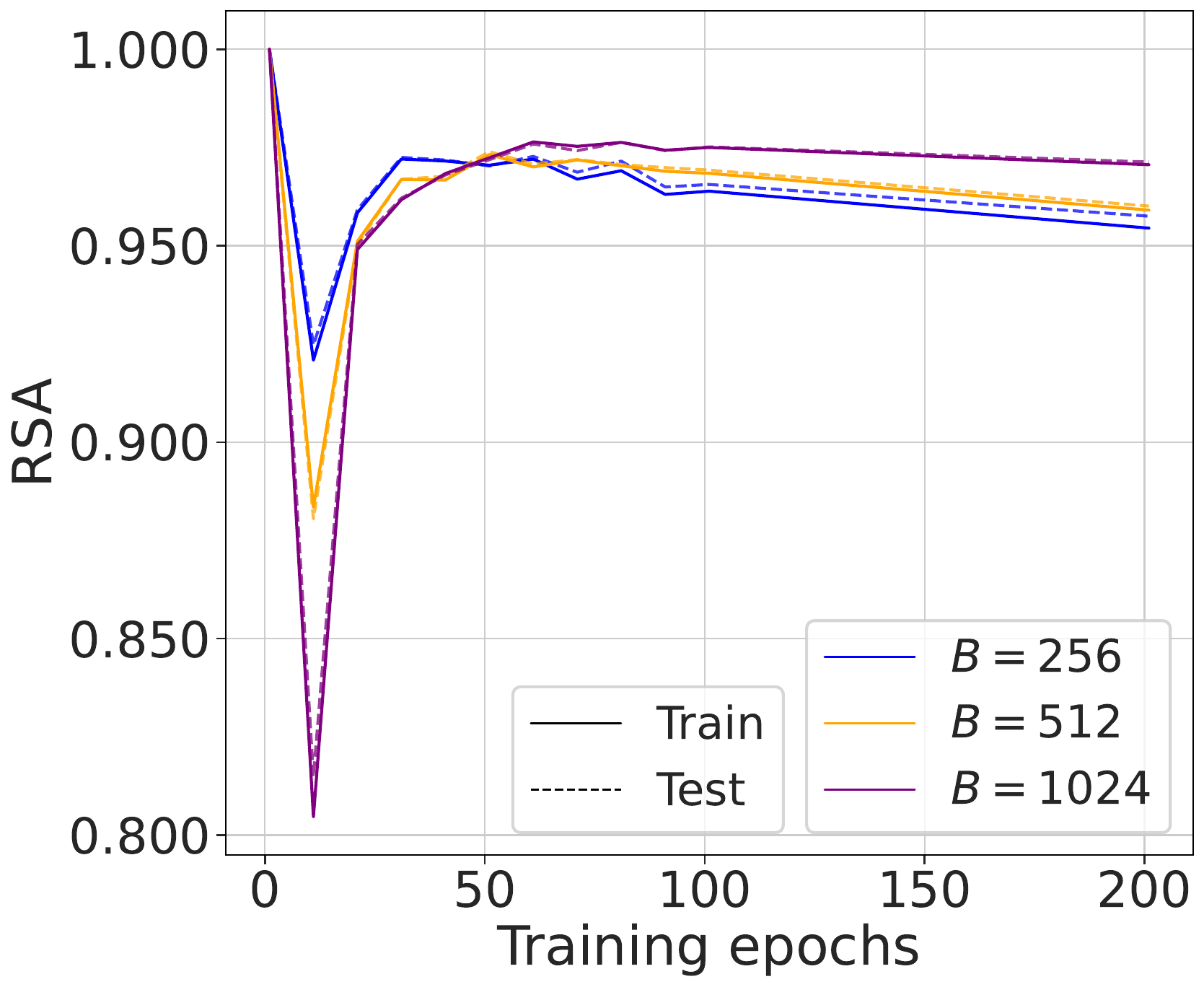} \\
         \includegraphics[width=0.235\linewidth]{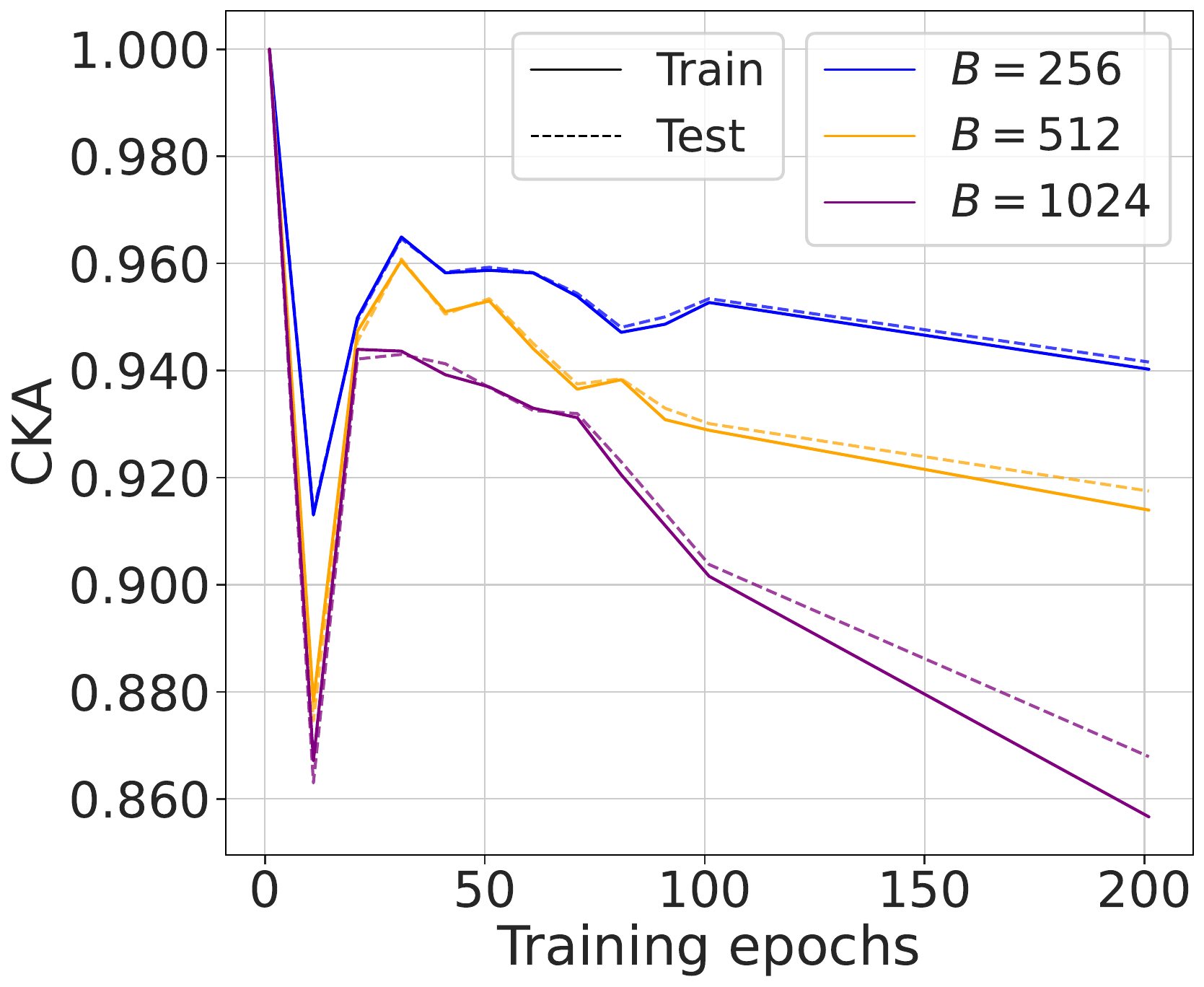} &
         \includegraphics[width=0.235\linewidth]{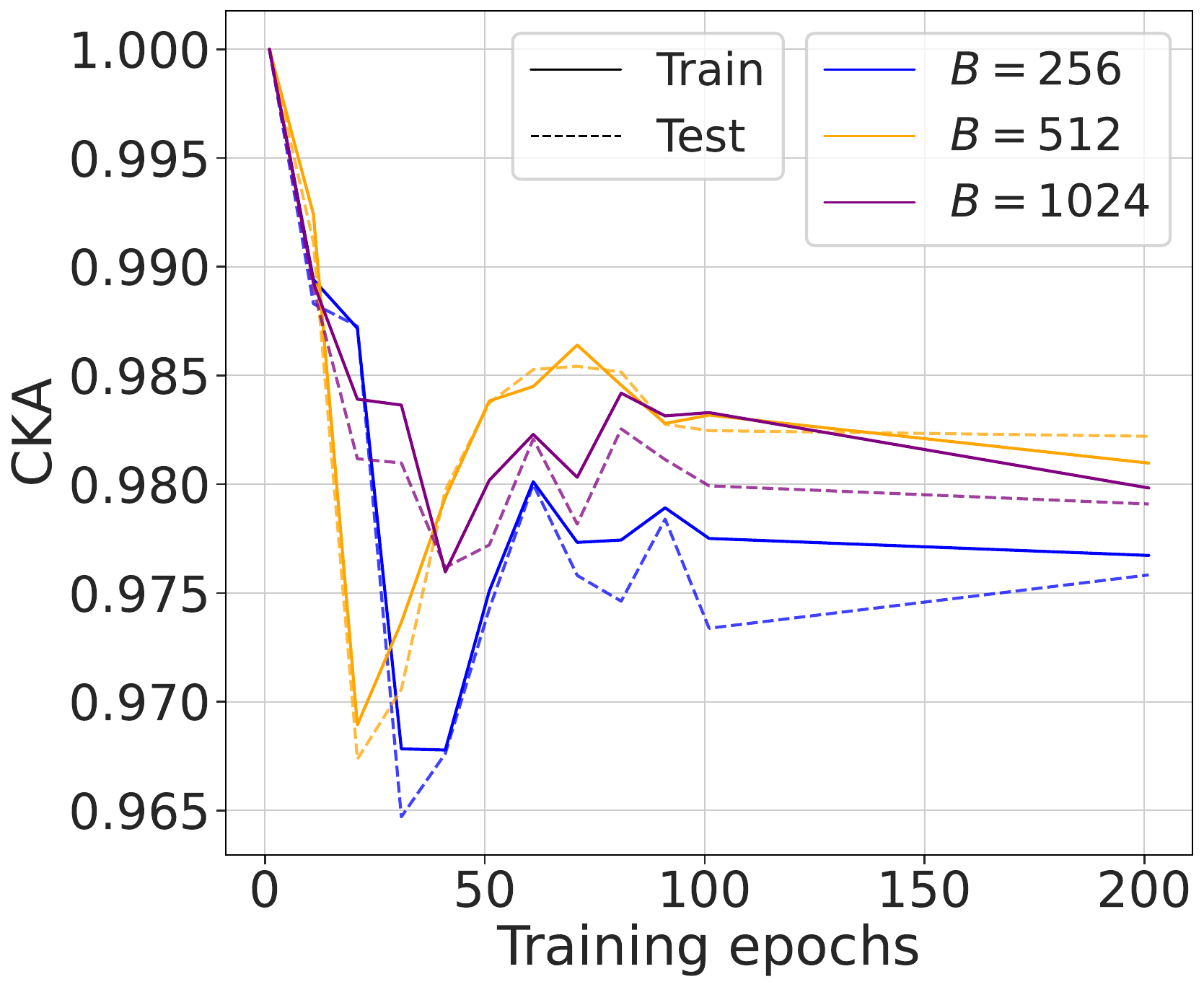} & \includegraphics[width=0.235\linewidth]{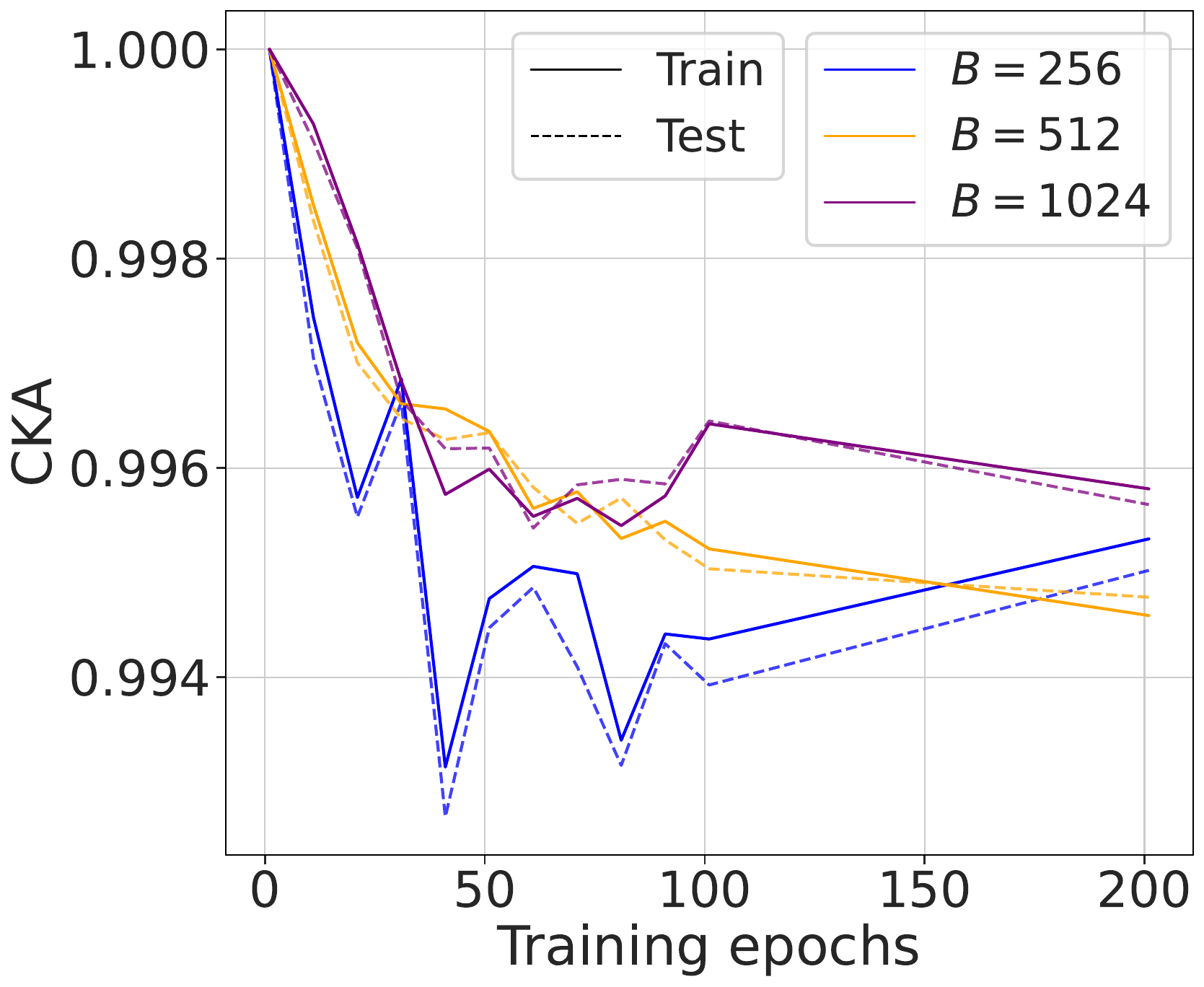} &
         \includegraphics[width=0.235\linewidth]{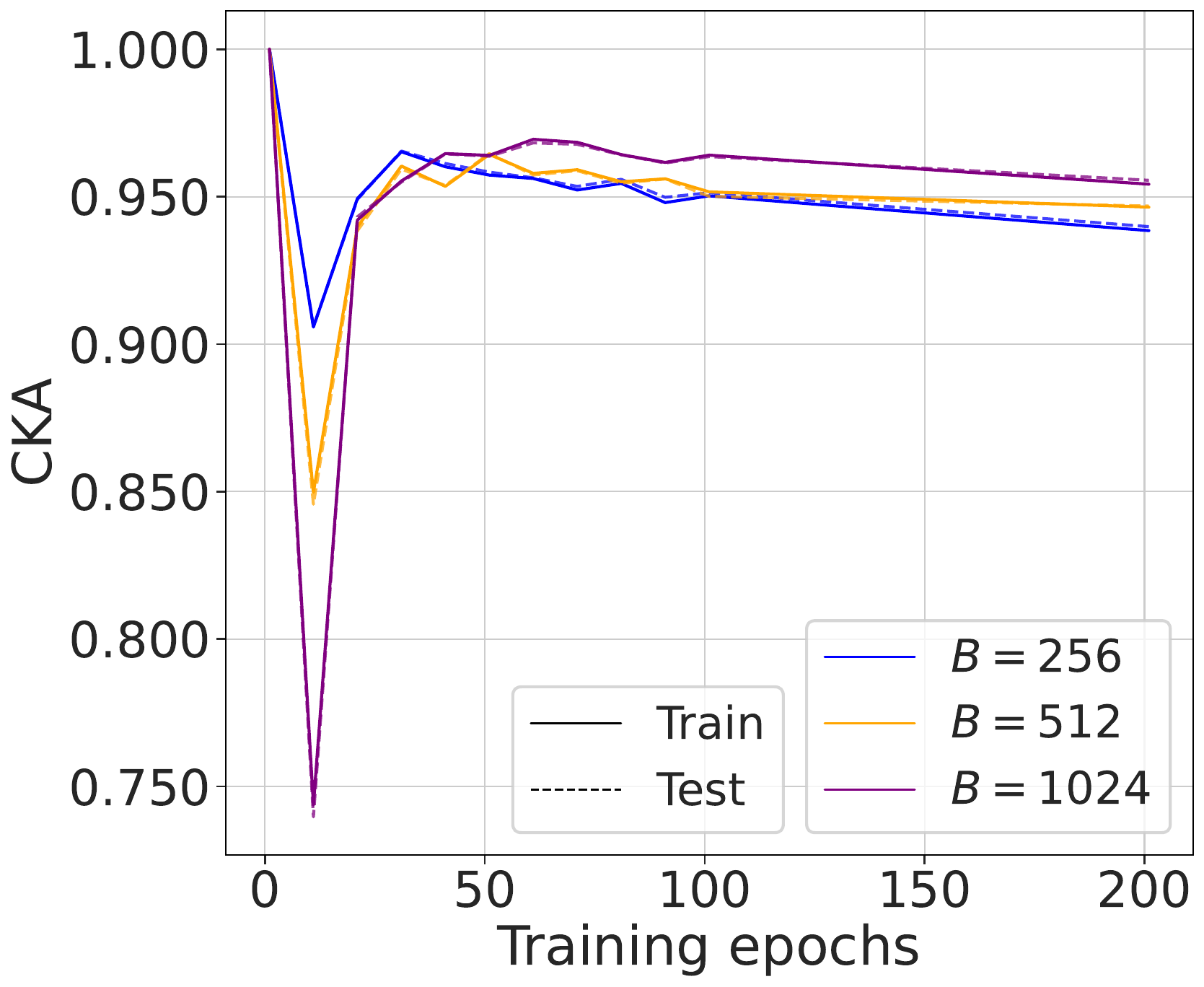} \\
         {\small {\bf (a)} $\mathcal{O}(B)$} & {\small {\bf (b)} $\mathcal{O}(B^{1/2})$} & {\small {\bf (c)} $\mathcal{O}(B^{1/4})$} & {\small {\bf (d)} $\mathcal{O}(1)$}
    \end{tabular}   
    \caption{\textbf{Effect of batch size with scaled learning rates.} We trained CL, and NSCL models for 300 epochs on Mini-ImageNet, with varying batch-sizes ($B \in \{256,\, 512,\, 1024\}$). For each experiment, the learning rate $\eta$ is scaled as a function of batch-size, as mentioned under each panel. For instance, the results shown in panel (b) use a learning rate of $\eta = \frac{0.3\,\sqrt{B}}{256}$.}
    \label{fig:alignment_batches}
\end{figure}

{\bf Weight-space coupling.\enspace} We next study whether the observed alignment between representations of contrastive and supervised models is also reflected directly in their parameters. For this, we measure the average weight difference between a contrastive model and two supervised counterparts as follows: $\sum_l\tfrac{\| w_{\CL}^l - W_{\text{sup}}^l\|_F}{0.5\,(\|w_{\CL}^l\|_F + \|w_{\text{sup}}^l\|_F)} $ where $w_{\CL}^l$ and $w_{\text{sup}}^l$ are weights corresponding to $l^{\text{th}}$ layer of self-supervised and supervised models respectively, and $\|\cdot \|_F$ denotes Frobenius norm. As we show in Fig.~\ref{fig:weight_gap_epochs}, for each dataset, we observe a significant divergence in weight space: both supervised models (NSCL and SCL) increasingly separate from the contrastive model as training progresses.

\begin{figure}[t]
    \centering
    \begin{tabular}{c@{\hspace{6pt}}c@{\hspace{6pt}}c@{\hspace{6pt}}c}
         \includegraphics[width=0.235\linewidth]{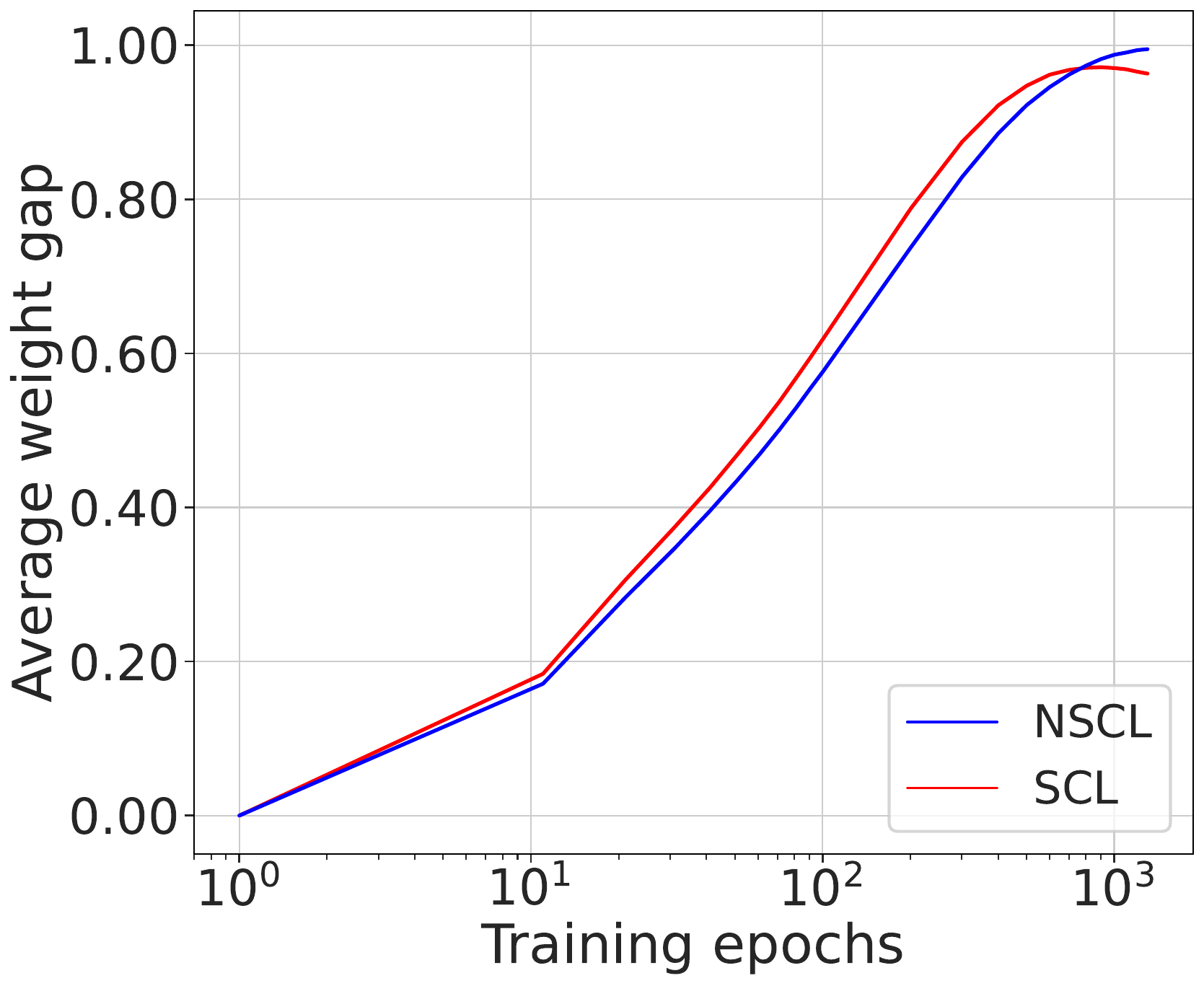} & 
         \includegraphics[width=0.235\linewidth]{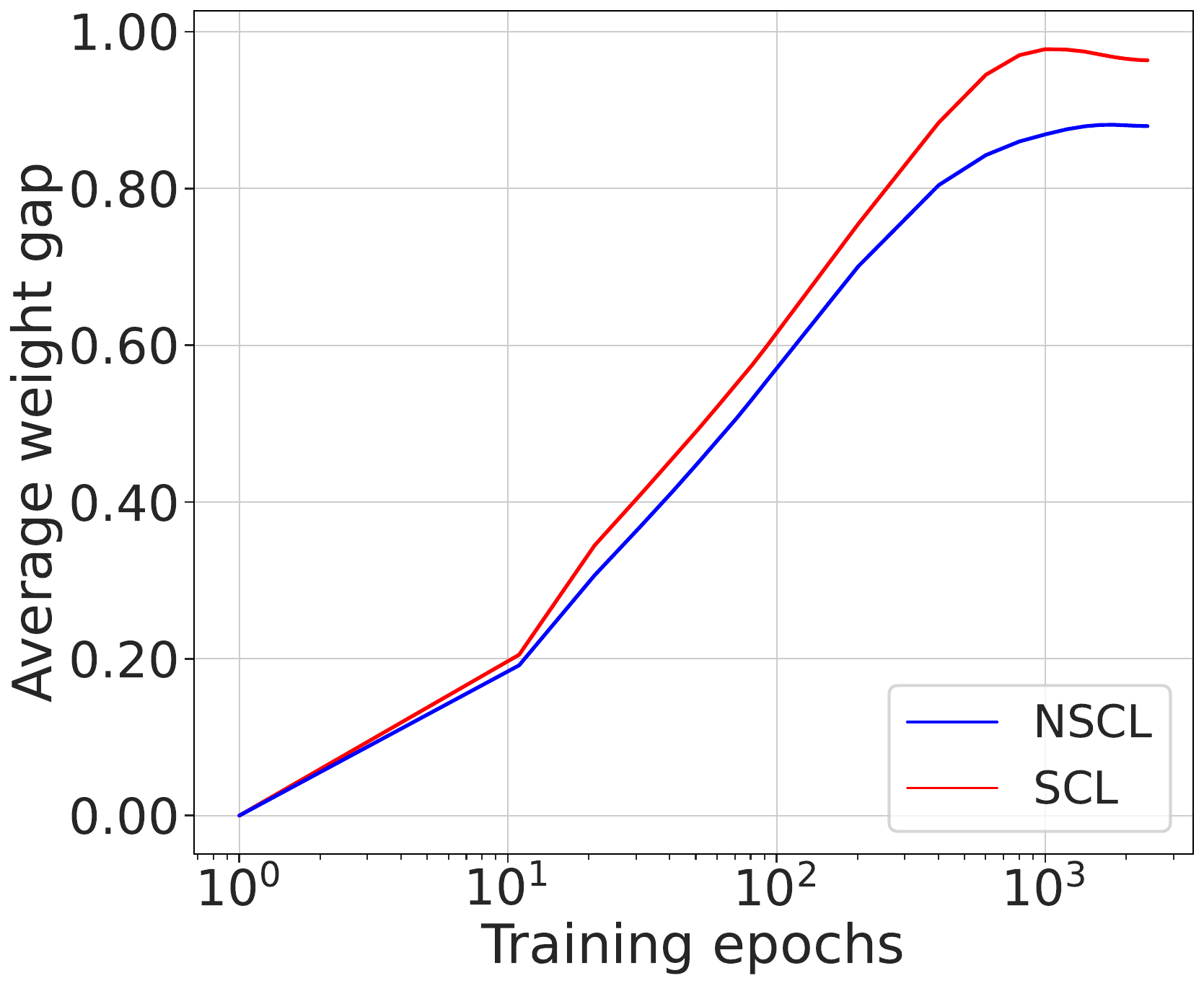} &
         \includegraphics[width=0.235\linewidth]{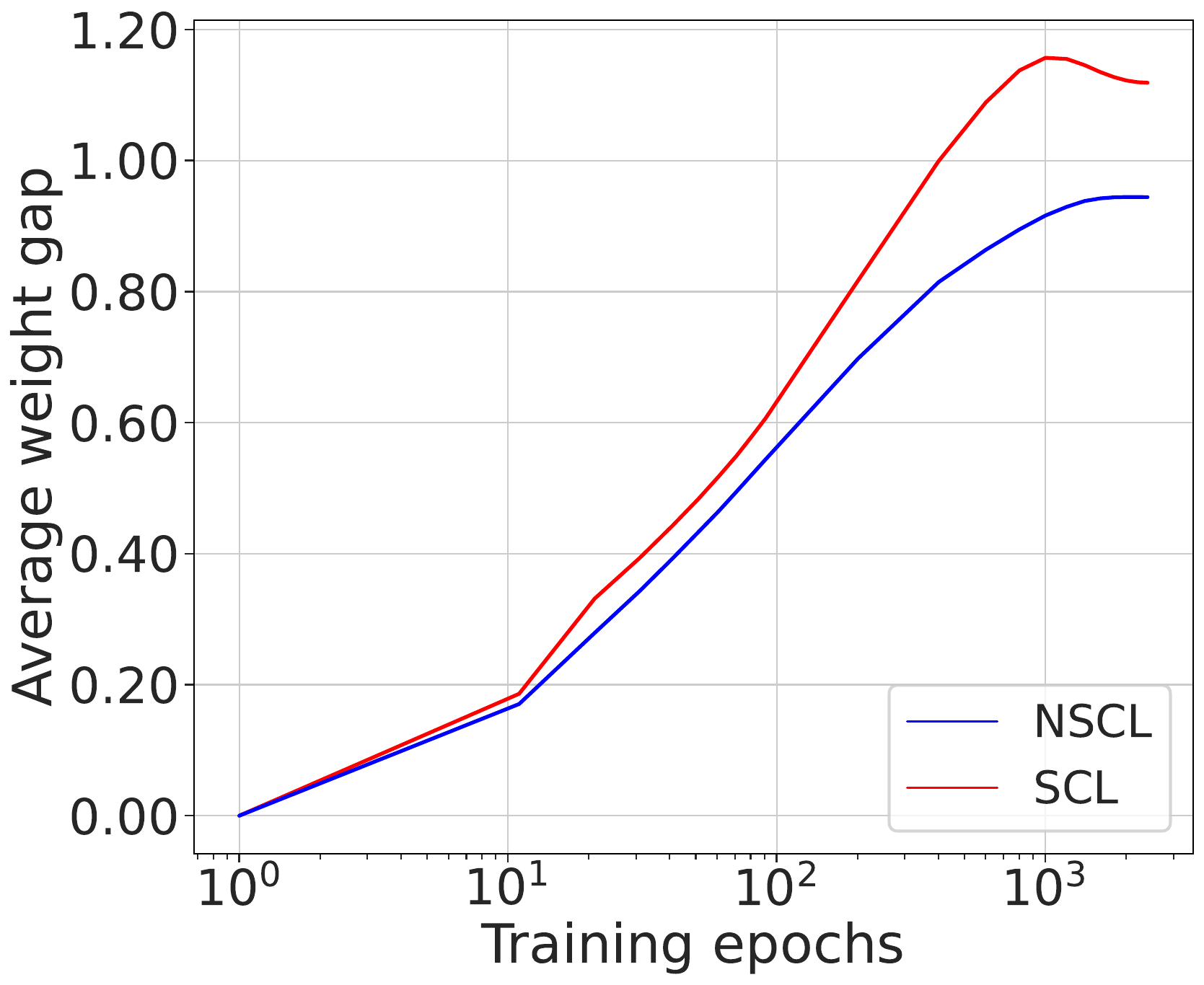} &
         \includegraphics[width=0.235\linewidth]{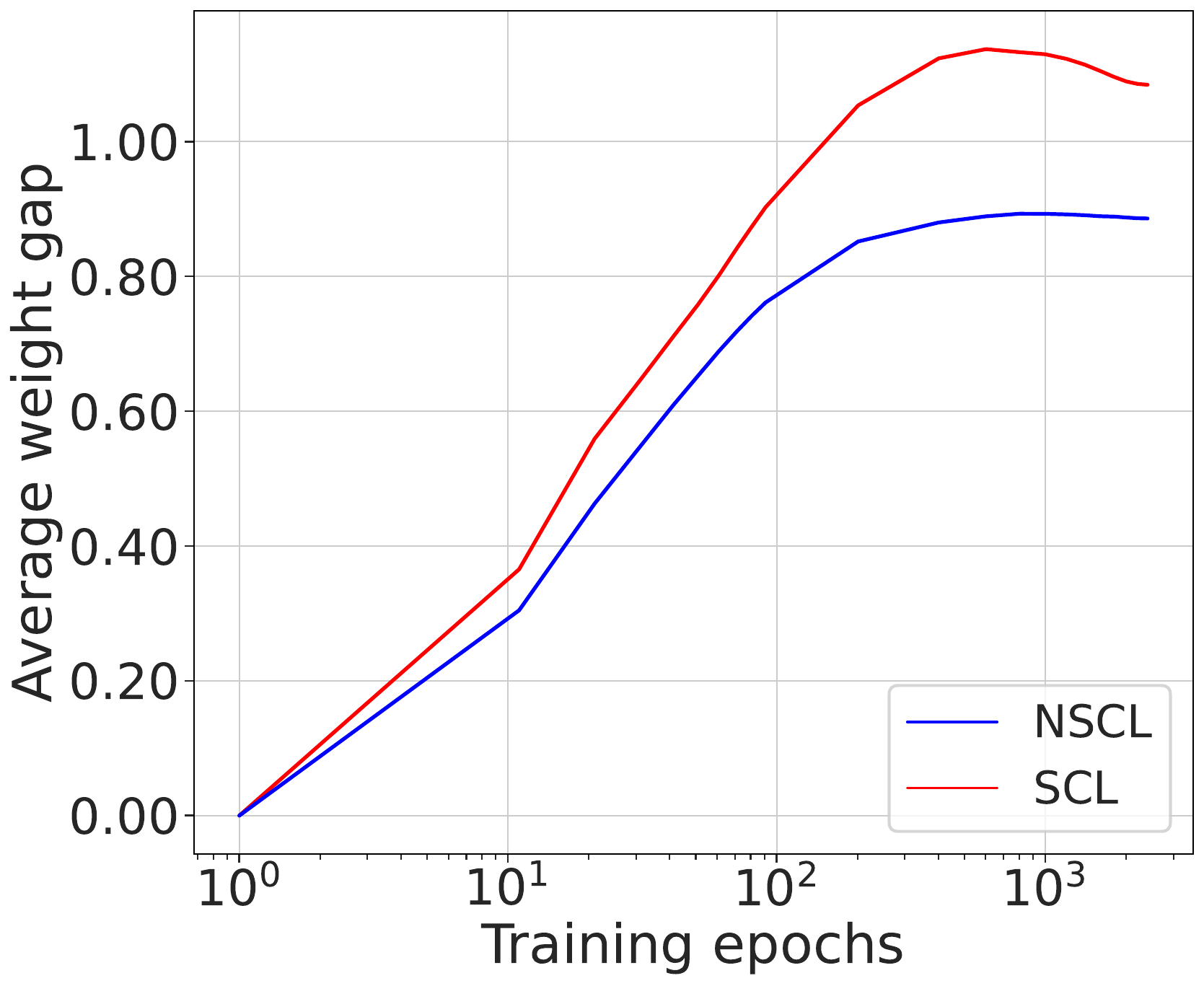} \\
         {\small {\bf (a)} CIFAR10} & {\small {\bf (b)} CIFAR100} & {\small {\bf (c)} Mini-ImageNet} & {\small {\bf (d)} Tiny-ImageNet}
    \end{tabular}   
    \caption{{\bf Weight-space alignment quickly deteriorates.} Using the same ResNet-50 instances as in Fig.~\ref{fig:alignment_epochs}, we plot the average weight gap between CL and the supervised models (NSCL and SCL) across training epochs. Both supervised variants diverge from the CL model, with SCL showing a wider separation.}
    \label{fig:weight_gap_epochs}
\end{figure}

\section{Conclusions, Limitations and Future Work}


{\bf Conclusions.\enspace} We studied the dynamic alignment between contrastive learning (CL) and its supervised counterpart (NSCL). By analyzing coupled SGD under shared randomness, we showed that while parameter-space trajectories may diverge exponentially, representation-space dynamics are far more stable: the similarity matrices induced by CL and NSCL remain close throughout training. This yields high-probability lower bounds on alignment metrics such as CKA and RSA, directly certifying representational coupling. Empirically, our experiments confirmed these trends across datasets and architectures. Together, our results highlight that the implicit supervised signal in CL is not confined to its loss function but extends throughout the entire optimization trajectory.

{\bf Limitations.\enspace} Our theoretical bounds, while structurally informative, are quantitatively loose in large-scale, long-horizon regimes. The exponential factor can quickly make the bounds vacuous beyond the first few epochs of large-scale training. This limitation stems from worst-case arguments and uniform high-probability concentration bounds that favor generality over sharpness. Nonetheless, the analysis offers qualitative insights into how the temperature parameter, the number of hidden classes, the batch size and training duration affect the alignment between CL and NSCL models. 

{\bf Future directions.\enspace} We view our results as a first step toward a more refined theory of self-supervised representation alignment. Future work could (i) derive tighter constants by exploiting data-dependent structure rather than worst-case bounds, and (ii) extend the framework to other SSL paradigms (e.g., non-contrastive methods). Improving these guarantees while retaining their stability properties would provide an even stronger theoretical bridge between supervised and self-supervised learning.



\bibliography{preprint_neurips}

\begin{thebibliography}{80}
\providecommand{\natexlab}[1]{#1}
\providecommand{\url}[1]{\texttt{#1}}
\expandafter\ifx\csname urlstyle\endcsname\relax
  \providecommand{\doi}[1]{doi: #1}\else
  \providecommand{\doi}{doi: \begingroup \urlstyle{rm}\Url}\fi

\bibitem[Alon et~al.(2024)Alon, Avdiukhin, Elboim, Fischer, and Yaroslavtsev]{alon2024optimal}
Noga Alon, Dmitrii Avdiukhin, Dor Elboim, Orr Fischer, and Grigory Yaroslavtsev.
\newblock Optimal sample complexity of contrastive learning.
\newblock In \emph{The Twelfth International Conference on Learning Representations}, 2024.
\newblock URL \url{https://openreview.net/forum?id=NU9AYHJvYe}.

\bibitem[Amir et~al.(2022)Amir, Gandelsman, Bagon, and Dekel]{amir2021deep}
Shir Amir, Yossi Gandelsman, Shai Bagon, and Tali Dekel.
\newblock Deep vit features as dense visual descriptors.
\newblock \emph{ECCVW What is Motion For?}, 2022.

\bibitem[Arora et~al.(2019)Arora, Khandeparkar, Khodak, Plevrakis, and Saunshi]{arora2019theoreticalanalysiscontrastiveunsupervised}
Sanjeev Arora, Hrishikesh Khandeparkar, Mikhail Khodak, Orestis Plevrakis, and Nikunj Saunshi.
\newblock A theoretical analysis of contrastive unsupervised representation learning, 2019.
\newblock URL \url{https://arxiv.org/abs/1902.09229}.

\bibitem[Ash et~al.(2022)Ash, Goel, Krishnamurthy, and Misra]{pmlr-v151-ash22a}
Jordan Ash, Surbhi Goel, Akshay Krishnamurthy, and Dipendra Misra.
\newblock Investigating the role of negatives in contrastive representation learning.
\newblock In Gustau Camps-Valls, Francisco J.~R. Ruiz, and Isabel Valera (eds.), \emph{Proceedings of The 25th International Conference on Artificial Intelligence and Statistics}, volume 151 of \emph{Proceedings of Machine Learning Research}, pp.\  7187--7209. PMLR, 28--30 Mar 2022.
\newblock URL \url{https://proceedings.mlr.press/v151/ash22a.html}.

\bibitem[Awasthi et~al.(2022)Awasthi, Dikkala, and Kamath]{pmlr-v162-awasthi22b}
Pranjal Awasthi, Nishanth Dikkala, and Pritish Kamath.
\newblock Do more negative samples necessarily hurt in contrastive learning?
\newblock In Kamalika Chaudhuri, Stefanie Jegelka, Le~Song, Csaba Szepesvari, Gang Niu, and Sivan Sabato (eds.), \emph{Proceedings of the 39th International Conference on Machine Learning}, volume 162 of \emph{Proceedings of Machine Learning Research}, pp.\  1101--1116. PMLR, 17--23 Jul 2022.
\newblock URL \url{https://proceedings.mlr.press/v162/awasthi22b.html}.

\bibitem[Bachman et~al.(2019)Bachman, Hjelm, and Buchwalter]{NEURIPS2019_ddf35421}
Philip Bachman, R~Devon Hjelm, and William Buchwalter.
\newblock Learning representations by maximizing mutual information across views.
\newblock In H.~Wallach, H.~Larochelle, A.~Beygelzimer, F.~d\textquotesingle Alch\'{e}-Buc, E.~Fox, and R.~Garnett (eds.), \emph{Advances in Neural Information Processing Systems}, volume~32. Curran Associates, Inc., 2019.
\newblock URL \url{https://proceedings.neurips.cc/paper_files/paper/2019/file/ddf354219aac374f1d40b7e760ee5bb7-Paper.pdf}.

\bibitem[Baevski et~al.(2020)Baevski, Zhou, Mohamed, and Auli]{10.5555/3495724.3496768}
Alexei Baevski, Henry Zhou, Abdelrahman Mohamed, and Michael Auli.
\newblock wav2vec 2.0: a framework for self-supervised learning of speech representations.
\newblock In \emph{Proceedings of the 34th International Conference on Neural Information Processing Systems}, NIPS '20, Red Hook, NY, USA, 2020. Curran Associates Inc.
\newblock ISBN 9781713829546.

\bibitem[Baevski et~al.(2022)Baevski, Hsu, Xu, Babu, Gu, and Auli]{pmlr-v162-baevski22a}
Alexei Baevski, Wei-Ning Hsu, Qiantong Xu, Arun Babu, Jiatao Gu, and Michael Auli.
\newblock data2vec: A general framework for self-supervised learning in speech, vision and language.
\newblock In Kamalika Chaudhuri, Stefanie Jegelka, Le~Song, Csaba Szepesvari, Gang Niu, and Sivan Sabato (eds.), \emph{Proceedings of the 39th International Conference on Machine Learning}, volume 162 of \emph{Proceedings of Machine Learning Research}, pp.\  1298--1312. PMLR, 17--23 Jul 2022.
\newblock URL \url{https://proceedings.mlr.press/v162/baevski22a.html}.

\bibitem[Balestriero \& LeCun(2022)Balestriero and LeCun]{balestriero2022contrastive}
Randall Balestriero and Yann LeCun.
\newblock Contrastive and non-contrastive self-supervised learning recover global and local spectral embedding methods.
\newblock In Alice~H. Oh, Alekh Agarwal, Danielle Belgrave, and Kyunghyun Cho (eds.), \emph{Advances in Neural Information Processing Systems}, 2022.
\newblock URL \url{https://openreview.net/forum?id=jQgsZDspz5h}.

\bibitem[Balestriero \& LeCun(2024)Balestriero and LeCun]{balestriero2024the}
Randall Balestriero and Yann LeCun.
\newblock The birth of self supervised learning: A supervised theory.
\newblock In \emph{NeurIPS 2024 Workshop: Self-Supervised Learning - Theory and Practice}, 2024.
\newblock URL \url{https://openreview.net/forum?id=NhYAjAAdQT}.

\bibitem[Bao et~al.(2022)Bao, Nagano, and Nozawa]{pmlr-v162-bao22e}
Han Bao, Yoshihiro Nagano, and Kento Nozawa.
\newblock On the surrogate gap between contrastive and supervised losses.
\newblock In Kamalika Chaudhuri, Stefanie Jegelka, Le~Song, Csaba Szepesvari, Gang Niu, and Sivan Sabato (eds.), \emph{Proceedings of the 39th International Conference on Machine Learning}, volume 162 of \emph{Proceedings of Machine Learning Research}, pp.\  1585--1606. PMLR, 17--23 Jul 2022.
\newblock URL \url{https://proceedings.mlr.press/v162/bao22e.html}.

\bibitem[Belinkov(2022)]{belinkov2022probing}
Yonatan Belinkov.
\newblock Probing classifiers: Promises, shortcomings, and advances.
\newblock \emph{Computational Linguistics}, 48\penalty0 (1):\penalty0 207--219, 2022.

\bibitem[Ben-Shaul et~al.(2023)Ben-Shaul, Shwartz-Ziv, Galanti, Dekel, and LeCun]{shaul2023reverse}
Ido Ben-Shaul, Ravid Shwartz-Ziv, Tomer Galanti, Shai Dekel, and Yann LeCun.
\newblock Reverse engineering self-supervised learning.
\newblock In \emph{Thirty-seventh Conference on Neural Information Processing Systems}, 2023.
\newblock URL \url{https://openreview.net/forum?id=NsVEjx6YPd}.

\bibitem[Bousquet \& Elisseeff(2002)Bousquet and Elisseeff]{10.1162/153244302760200704}
Olivier Bousquet and Andr\'{e} Elisseeff.
\newblock Stability and generalization.
\newblock \emph{J. Mach. Learn. Res.}, 2:\penalty0 499–526, March 2002.
\newblock ISSN 1532-4435.
\newblock \doi{10.1162/153244302760200704}.
\newblock URL \url{https://doi.org/10.1162/153244302760200704}.

\bibitem[Chen et~al.(2020)Chen, Kornblith, Norouzi, and Hinton]{pmlr-v119-chen20j}
Ting Chen, Simon Kornblith, Mohammad Norouzi, and Geoffrey Hinton.
\newblock A simple framework for contrastive learning of visual representations.
\newblock In Hal~Daumé III and Aarti Singh (eds.), \emph{Proceedings of the 37th International Conference on Machine Learning}, volume 119 of \emph{Proceedings of Machine Learning Research}, pp.\  1597--1607. PMLR, 13--18 Jul 2020.
\newblock URL \url{https://proceedings.mlr.press/v119/chen20j.html}.

\bibitem[Chen et~al.(2021{\natexlab{a}})Chen, Luo, and Li]{chen2021intriguing}
Ting Chen, Calvin Luo, and Lala Li.
\newblock Intriguing properties of contrastive losses.
\newblock \emph{Advances in Neural Information Processing Systems}, 34:\penalty0 11834--11845, 2021{\natexlab{a}}.

\bibitem[Chen et~al.(2021{\natexlab{b}})Chen, Xie, and He]{Chen_2021_ICCV}
Xinlei Chen, Saining Xie, and Kaiming He.
\newblock An empirical study of training self-supervised vision transformers.
\newblock In \emph{Proceedings of the IEEE/CVF International Conference on Computer Vision (ICCV)}, pp.\  9640--9649, October 2021{\natexlab{b}}.

\bibitem[Deng et~al.(2009)Deng, Dong, Socher, Li, Li, and Fei-Fei]{5206848}
Jia Deng, Wei Dong, Richard Socher, Li-Jia Li, Kai Li, and Li~Fei-Fei.
\newblock Imagenet: A large-scale hierarchical image database.
\newblock In \emph{2009 IEEE Conference on Computer Vision and Pattern Recognition}, pp.\  248--255, 2009.
\newblock \doi{10.1109/CVPR.2009.5206848}.

\bibitem[Feigin et~al.(2025)Feigin, Fleissner, and Ghoshdastidar]{feigin2025theoreticalcharacterizationoptimaldata}
Shlomo~Libo Feigin, Maximilian Fleissner, and Debarghya Ghoshdastidar.
\newblock A theoretical characterization of optimal data augmentations in self-supervised learning, 2025.
\newblock URL \url{https://arxiv.org/abs/2411.01767}.

\bibitem[Galanti et~al.(2022)Galanti, Gy{\"o}rgy, and Hutter]{galanti2022on}
Tomer Galanti, Andr{\'a}s Gy{\"o}rgy, and Marcus Hutter.
\newblock On the role of neural collapse in transfer learning.
\newblock In \emph{International Conference on Learning Representations}, 2022.
\newblock URL \url{https://openreview.net/forum?id=SwIp410B6aQ}.

\bibitem[Gao et~al.(2021)Gao, Yao, and Chen]{gao-etal-2021-simcse}
Tianyu Gao, Xingcheng Yao, and Danqi Chen.
\newblock {S}im{CSE}: Simple contrastive learning of sentence embeddings.
\newblock In Marie-Francine Moens, Xuanjing Huang, Lucia Specia, and Scott Wen-tau Yih (eds.), \emph{Proceedings of the 2021 Conference on Empirical Methods in Natural Language Processing}, pp.\  6894--6910, Online and Punta Cana, Dominican Republic, November 2021. Association for Computational Linguistics.
\newblock \doi{10.18653/v1/2021.emnlp-main.552}.
\newblock URL \url{https://aclanthology.org/2021.emnlp-main.552/}.

\bibitem[Garrido et~al.(2023)Garrido, Chen, Bardes, Najman, and LeCun]{garrido2023on}
Quentin Garrido, Yubei Chen, Adrien Bardes, Laurent Najman, and Yann LeCun.
\newblock On the duality between contrastive and non-contrastive self-supervised learning.
\newblock In \emph{The Eleventh International Conference on Learning Representations}, 2023.
\newblock URL \url{https://openreview.net/forum?id=kDEL91Dufpa}.

\bibitem[Goyal et~al.(2017)Goyal, Doll{\'a}r, Girshick, Noordhuis, Wesolowski, Kyrola, Tulloch, Jia, and He]{goyal2017accurate}
Priya Goyal, Piotr Doll{\'a}r, Ross Girshick, Pieter Noordhuis, Lukasz Wesolowski, Aapo Kyrola, Andrew Tulloch, Yangqing Jia, and Kaiming He.
\newblock Accurate, large minibatch sgd: Training imagenet in 1 hour.
\newblock \emph{arXiv preprint arXiv:1706.02677}, 2017.

\bibitem[Gui et~al.(2023)Gui, Ma, and Zhong]{gui2023unravelingprojectionheadscontrastive}
Yu~Gui, Cong Ma, and Yiqiao Zhong.
\newblock Unraveling projection heads in contrastive learning: Insights from expansion and shrinkage, 2023.
\newblock URL \url{https://arxiv.org/abs/2306.03335}.

\bibitem[Gupta et~al.(2015)Gupta, Boleda, Baroni, and Pad{\'o}]{gupta2015distributional}
Abhijeet Gupta, Gemma Boleda, Marco Baroni, and Sebastian Pad{\'o}.
\newblock Distributional vectors encode referential attributes.
\newblock In \emph{Proceedings of the 2015 Conference on Empirical Methods in Natural Language Processing}, pp.\  12--21, 2015.

\bibitem[Gupta et~al.(2022)Gupta, Ajanthan, van~den Hengel, and Gould]{gupta2022understandingimprovingroleprojection}
Kartik Gupta, Thalaiyasingam Ajanthan, Anton van~den Hengel, and Stephen Gould.
\newblock Understanding and improving the role of projection head in self-supervised learning, 2022.
\newblock URL \url{https://arxiv.org/abs/2212.11491}.

\bibitem[Han(2020)]{deep-learning-thu-2020}
Liu Han.
\newblock Tiny imagenet challenge.
\newblock \url{https://kaggle.com/competitions/deep-learning-thu-2020}, 2020.
\newblock Kaggle.

\bibitem[HaoChen \& Ma(2023)HaoChen and Ma]{haochen2023a}
Jeff~Z. HaoChen and Tengyu Ma.
\newblock A theoretical study of inductive biases in contrastive learning.
\newblock In \emph{The Eleventh International Conference on Learning Representations}, 2023.
\newblock URL \url{https://openreview.net/forum?id=AuEgNlEAmed}.

\bibitem[HaoChen et~al.(2021)HaoChen, Wei, Gaidon, and Ma]{NEURIPS2021_27debb43}
Jeff~Z. HaoChen, Colin Wei, Adrien Gaidon, and Tengyu Ma.
\newblock Provable guarantees for self-supervised deep learning with spectral contrastive loss.
\newblock In M.~Ranzato, A.~Beygelzimer, Y.~Dauphin, P.S. Liang, and J.~Wortman Vaughan (eds.), \emph{Advances in Neural Information Processing Systems}, volume~34, pp.\  5000--5011. Curran Associates, Inc., 2021.
\newblock URL \url{https://proceedings.neurips.cc/paper_files/paper/2021/file/27debb435021eb68b3965290b5e24c49-Paper.pdf}.

\bibitem[HaoChen et~al.(2022)HaoChen, Wei, Kumar, and Ma]{10.5555/3600270.3602219}
Jeff~Z. HaoChen, Colin Wei, Ananya Kumar, and Tengyu Ma.
\newblock Beyond separability: analyzing the linear transferability of contrastive representations to related subpopulations.
\newblock In \emph{Proceedings of the 36th International Conference on Neural Information Processing Systems}, NIPS '22, Red Hook, NY, USA, 2022. Curran Associates Inc.
\newblock ISBN 9781713871088.

\bibitem[Hardt et~al.(2016)Hardt, Recht, and Singer]{10.5555/3045390.3045520}
Moritz Hardt, Benjamin Recht, and Yoram Singer.
\newblock Train faster, generalize better: stability of stochastic gradient descent.
\newblock In \emph{Proceedings of the 33rd International Conference on International Conference on Machine Learning - Volume 48}, ICML'16, pp.\  1225–1234. JMLR.org, 2016.

\bibitem[He et~al.(2016)He, Zhang, Ren, and Sun]{He_2016_CVPR}
Kaiming He, Xiangyu Zhang, Shaoqing Ren, and Jian Sun.
\newblock Deep residual learning for image recognition.
\newblock In \emph{Proceedings of the IEEE Conference on Computer Vision and Pattern Recognition (CVPR)}, June 2016.

\bibitem[He et~al.(2020)He, Fan, Wu, Xie, and Girshick]{He_2020_CVPR}
Kaiming He, Haoqi Fan, Yuxin Wu, Saining Xie, and Ross Girshick.
\newblock Momentum contrast for unsupervised visual representation learning.
\newblock In \emph{Proceedings of the IEEE/CVF Conference on Computer Vision and Pattern Recognition (CVPR)}, June 2020.

\bibitem[He et~al.(2022)He, Chen, Xie, Li, Doll{\'a}r, and Girshick]{he2022masked}
Kaiming He, Xinlei Chen, Saining Xie, Yanghao Li, Piotr Doll{\'a}r, and Ross Girshick.
\newblock Masked autoencoders are scalable vision learners.
\newblock In \emph{Proceedings of the IEEE/CVF conference on computer vision and pattern recognition}, pp.\  16000--16009, 2022.

\bibitem[Hsu et~al.(2021)Hsu, Bolte, Tsai, Lakhotia, Salakhutdinov, and Mohamed]{10.1109/TASLP.2021.3122291}
Wei-Ning Hsu, Benjamin Bolte, Yao-Hung~Hubert Tsai, Kushal Lakhotia, Ruslan Salakhutdinov, and Abdelrahman Mohamed.
\newblock Hubert: Self-supervised speech representation learning by masked prediction of hidden units.
\newblock \emph{IEEE/ACM Trans. Audio, Speech and Lang. Proc.}, 29:\penalty0 3451–3460, October 2021.
\newblock ISSN 2329-9290.
\newblock \doi{10.1109/TASLP.2021.3122291}.
\newblock URL \url{https://doi.org/10.1109/TASLP.2021.3122291}.

\bibitem[Ji et~al.(2023)Ji, Deng, Nakada, Zou, and Zhang]{10.5555/3648699.3649029}
Wenlong Ji, Zhun Deng, Ryumei Nakada, James Zou, and Linjun Zhang.
\newblock The power of contrast for feature learning: a theoretical analysis.
\newblock \emph{J. Mach. Learn. Res.}, 24\penalty0 (1), January 2023.
\newblock ISSN 1532-4435.

\bibitem[Jia et~al.(2021)Jia, Yang, Xia, Chen, Parekh, Pham, Le, Sung, Li, and Duerig]{pmlr-v139-jia21b}
Chao Jia, Yinfei Yang, Ye~Xia, Yi-Ting Chen, Zarana Parekh, Hieu Pham, Quoc Le, Yun-Hsuan Sung, Zhen Li, and Tom Duerig.
\newblock Scaling up visual and vision-language representation learning with noisy text supervision.
\newblock In Marina Meila and Tong Zhang (eds.), \emph{Proceedings of the 38th International Conference on Machine Learning}, volume 139 of \emph{Proceedings of Machine Learning Research}, pp.\  4904--4916. PMLR, 18--24 Jul 2021.
\newblock URL \url{https://proceedings.mlr.press/v139/jia21b.html}.

\bibitem[Khosla et~al.(2020)Khosla, Teterwak, Wang, Sarna, Tian, Isola, Maschinot, Liu, and Krishnan]{NEURIPS2020_d89a66c7}
Prannay Khosla, Piotr Teterwak, Chen Wang, Aaron Sarna, Yonglong Tian, Phillip Isola, Aaron Maschinot, Ce~Liu, and Dilip Krishnan.
\newblock Supervised contrastive learning.
\newblock In H.~Larochelle, M.~Ranzato, R.~Hadsell, M.F. Balcan, and H.~Lin (eds.), \emph{Advances in Neural Information Processing Systems}, volume~33, pp.\  18661--18673. Curran Associates, Inc., 2020.
\newblock URL \url{https://proceedings.neurips.cc/paper_files/paper/2020/file/d89a66c7c80a29b1bdbab0f2a1a94af8-Paper.pdf}.

\bibitem[K{\"o}hn(2015)]{kohn2015s}
Arne K{\"o}hn.
\newblock What`s in an embedding? analyzing word embeddings through multilingual evaluation.
\newblock In Llu{\'i}s M{\`a}rquez, Chris Callison-Burch, and Jian Su (eds.), \emph{Proceedings of the 2015 Conference on Empirical Methods in Natural Language Processing}, pp.\  2067--2073, Lisbon, Portugal, September 2015. Association for Computational Linguistics.
\newblock \doi{10.18653/v1/D15-1246}.
\newblock URL \url{https://aclanthology.org/D15-1246/}.

\bibitem[Kornblith et~al.(2019)Kornblith, Norouzi, Lee, and Hinton]{pmlr-v97-kornblith19a}
Simon Kornblith, Mohammad Norouzi, Honglak Lee, and Geoffrey Hinton.
\newblock Similarity of neural network representations revisited.
\newblock In Kamalika Chaudhuri and Ruslan Salakhutdinov (eds.), \emph{Proceedings of the 36th International Conference on Machine Learning}, volume~97 of \emph{Proceedings of Machine Learning Research}, pp.\  3519--3529. PMLR, 09--15 Jun 2019.
\newblock URL \url{https://proceedings.mlr.press/v97/kornblith19a.html}.

\bibitem[Kriegeskorte et~al.(2008)Kriegeskorte, Mur, and Bandettini]{kriegeskorte2008rsa}
Nikolaus Kriegeskorte, Marieke Mur, and Peter~A. Bandettini.
\newblock Representational similarity analysis: connecting the branches of systems neuroscience.
\newblock \emph{Frontiers in Systems Neuroscience}, 2:\penalty0 4, November 2008.
\newblock \doi{10.3389/neuro.06.004.2008}.

\bibitem[Krizhevsky(2009)]{krizhevsky2009learning}
Alex Krizhevsky.
\newblock Learning multiple layers of features from tiny images.
\newblock Technical report, University of Toronto, 2009.
\newblock URL \url{https://www.cs.toronto.edu/~kriz/learning-features-2009-TR.pdf}.

\bibitem[Kuzborskij \& Lampert(2017)Kuzborskij and Lampert]{KuzborskijLampert2017DataDependentStability}
Ilja Kuzborskij and Christoph~H. Lampert.
\newblock Data-dependent stability of stochastic gradient descent.
\newblock \emph{arXiv preprint arXiv:1703.01678}, 2017.

\bibitem[Lee et~al.(2021)Lee, Lei, Saunshi, and ZHUO]{NEURIPS2021_02e656ad}
Jason~D Lee, Qi~Lei, Nikunj Saunshi, and JIACHENG ZHUO.
\newblock Predicting what you already know helps: Provable self-supervised learning.
\newblock In M.~Ranzato, A.~Beygelzimer, Y.~Dauphin, P.S. Liang, and J.~Wortman Vaughan (eds.), \emph{Advances in Neural Information Processing Systems}, volume~34, pp.\  309--323. Curran Associates, Inc., 2021.
\newblock URL \url{https://proceedings.neurips.cc/paper_files/paper/2021/file/02e656adee09f8394b402d9958389b7d-Paper.pdf}.

\bibitem[Loshchilov \& Hutter(2016)Loshchilov and Hutter]{loshchilov2016sgdr}
Ilya Loshchilov and Frank Hutter.
\newblock Sgdr: Stochastic gradient descent with warm restarts.
\newblock \emph{arXiv preprint arXiv:1608.03983}, 2016.

\bibitem[Luthra et~al.(2025)Luthra, Yang, and Galanti]{luthra2025selfsupervisedcontrastivelearningapproximately}
Achleshwar Luthra, Tianbao Yang, and Tomer Galanti.
\newblock Self-supervised contrastive learning is approximately supervised contrastive learning, 2025.
\newblock URL \url{https://arxiv.org/abs/2506.04411}.

\bibitem[McAllester \& Stratos(2020)McAllester and Stratos]{pmlr-v108-mcallester20a}
David McAllester and Karl Stratos.
\newblock Formal limitations on the measurement of mutual information.
\newblock In Silvia Chiappa and Roberto Calandra (eds.), \emph{Proceedings of the Twenty Third International Conference on Artificial Intelligence and Statistics}, volume 108 of \emph{Proceedings of Machine Learning Research}, pp.\  875--884. PMLR, 26--28 Aug 2020.
\newblock URL \url{https://proceedings.mlr.press/v108/mcallester20a.html}.

\bibitem[Mou et~al.(2018)Mou, Wang, Zhai, and Zheng]{pmlr-v75-mou18a}
Wenlong Mou, Liwei Wang, Xiyu Zhai, and Kai Zheng.
\newblock Generalization bounds of sgld for non-convex learning: Two theoretical viewpoints.
\newblock In Sébastien Bubeck, Vianney Perchet, and Philippe Rigollet (eds.), \emph{Proceedings of the 31st Conference On Learning Theory}, volume~75 of \emph{Proceedings of Machine Learning Research}, pp.\  605--638. PMLR, 06--09 Jul 2018.
\newblock URL \url{https://proceedings.mlr.press/v75/mou18a.html}.

\bibitem[Nozawa \& Sato(2021)Nozawa and Sato]{NEURIPS2021_2dace78f}
Kento Nozawa and Issei Sato.
\newblock Understanding negative samples in instance discriminative self-supervised representation learning.
\newblock In M.~Ranzato, A.~Beygelzimer, Y.~Dauphin, P.S. Liang, and J.~Wortman Vaughan (eds.), \emph{Advances in Neural Information Processing Systems}, volume~34, pp.\  5784--5797. Curran Associates, Inc., 2021.
\newblock URL \url{https://proceedings.neurips.cc/paper_files/paper/2021/file/2dace78f80bc92e6d7493423d729448e-Paper.pdf}.

\bibitem[Oquab et~al.(2024)Oquab, Darcet, Moutakanni, Vo, Szafraniec, Khalidov, Fernandez, Haziza, Massa, El-Nouby, Assran, Ballas, Galuba, Howes, Huang, Li, Misra, Rabbat, Sharma, Synnaeve, Xu, Jegou, Mairal, Labatut, Joulin, and Bojanowski]{oquab2024dinov2learningrobustvisual}
Maxime Oquab, Timothée Darcet, Théo Moutakanni, Huy Vo, Marc Szafraniec, Vasil Khalidov, Pierre Fernandez, Daniel Haziza, Francisco Massa, Alaaeldin El-Nouby, Mahmoud Assran, Nicolas Ballas, Wojciech Galuba, Russell Howes, Po-Yao Huang, Shang-Wen Li, Ishan Misra, Michael Rabbat, Vasu Sharma, Gabriel Synnaeve, Hu~Xu, Hervé Jegou, Julien Mairal, Patrick Labatut, Armand Joulin, and Piotr Bojanowski.
\newblock Dinov2: Learning robust visual features without supervision, 2024.
\newblock URL \url{https://arxiv.org/abs/2304.07193}.

\bibitem[Ouyang et~al.(2025)Ouyang, Hu, Zhang, Wang, and Wang]{ouyang2025projection}
Zhuo Ouyang, Kaiwen Hu, Qi~Zhang, Yifei Wang, and Yisen Wang.
\newblock Projection head is secretly an information bottleneck.
\newblock In \emph{The Thirteenth International Conference on Learning Representations}, 2025.
\newblock URL \url{https://openreview.net/forum?id=L0evcuybH5}.

\bibitem[Parulekar et~al.(2023)Parulekar, Collins, Shanmugam, Mokhtari, and Shakkottai]{pmlr-v195-parulekar23a}
Advait Parulekar, Liam Collins, Karthikeyan Shanmugam, Aryan Mokhtari, and Sanjay Shakkottai.
\newblock Infonce loss provably learns cluster-preserving representations.
\newblock In Gergely Neu and Lorenzo Rosasco (eds.), \emph{Proceedings of Thirty Sixth Conference on Learning Theory}, volume 195 of \emph{Proceedings of Machine Learning Research}, pp.\  1914--1961. PMLR, 12--15 Jul 2023.
\newblock URL \url{https://proceedings.mlr.press/v195/parulekar23a.html}.

\bibitem[Radford et~al.(2021)Radford, Kim, Hallacy, Ramesh, Goh, Agarwal, Sastry, Askell, Mishkin, Clark, Krueger, and Sutskever]{pmlr-v139-radford21a}
Alec Radford, Jong~Wook Kim, Chris Hallacy, Aditya Ramesh, Gabriel Goh, Sandhini Agarwal, Girish Sastry, Amanda Askell, Pamela Mishkin, Jack Clark, Gretchen Krueger, and Ilya Sutskever.
\newblock Learning transferable visual models from natural language supervision.
\newblock In Marina Meila and Tong Zhang (eds.), \emph{Proceedings of the 38th International Conference on Machine Learning}, volume 139 of \emph{Proceedings of Machine Learning Research}, pp.\  8748--8763. PMLR, 18--24 Jul 2021.
\newblock URL \url{https://proceedings.mlr.press/v139/radford21a.html}.

\bibitem[Reimers \& Gurevych(2019)Reimers and Gurevych]{reimers-gurevych-2019-sentence}
Nils Reimers and Iryna Gurevych.
\newblock Sentence-{BERT}: Sentence embeddings using {S}iamese {BERT}-networks.
\newblock In Kentaro Inui, Jing Jiang, Vincent Ng, and Xiaojun Wan (eds.), \emph{Proceedings of the 2019 Conference on Empirical Methods in Natural Language Processing and the 9th International Joint Conference on Natural Language Processing (EMNLP-IJCNLP)}, pp.\  3982--3992, Hong Kong, China, November 2019. Association for Computational Linguistics.
\newblock \doi{10.18653/v1/D19-1410}.
\newblock URL \url{https://aclanthology.org/D19-1410/}.

\bibitem[Saunshi et~al.(2022)Saunshi, Ash, Goel, Misra, Zhang, Arora, Kakade, and Krishnamurthy]{pmlr-v162-saunshi22a}
Nikunj Saunshi, Jordan Ash, Surbhi Goel, Dipendra Misra, Cyril Zhang, Sanjeev Arora, Sham Kakade, and Akshay Krishnamurthy.
\newblock Understanding contrastive learning requires incorporating inductive biases.
\newblock In Kamalika Chaudhuri, Stefanie Jegelka, Le~Song, Csaba Szepesvari, Gang Niu, and Sivan Sabato (eds.), \emph{Proceedings of the 39th International Conference on Machine Learning}, volume 162 of \emph{Proceedings of Machine Learning Research}, pp.\  19250--19286. PMLR, 17--23 Jul 2022.
\newblock URL \url{https://proceedings.mlr.press/v162/saunshi22a.html}.

\bibitem[Schneider et~al.(2019)Schneider, Baevski, Collobert, and Auli]{schneider2019wav2vecunsupervisedpretrainingspeech}
Steffen Schneider, Alexei Baevski, Ronan Collobert, and Michael Auli.
\newblock wav2vec: Unsupervised pre-training for speech recognition, 2019.
\newblock URL \url{https://arxiv.org/abs/1904.05862}.

\bibitem[Shannon(1948)]{6773024}
C.~E. Shannon.
\newblock A mathematical theory of communication.
\newblock \emph{The Bell System Technical Journal}, 27\penalty0 (3):\penalty0 379--423, 1948.
\newblock \doi{10.1002/j.1538-7305.1948.tb01338.x}.

\bibitem[Shen et~al.(2022)Shen, Jones, Kumar, Xie, Haochen, Ma, and Liang]{pmlr-v162-shen22d}
Kendrick Shen, Robbie~M Jones, Ananya Kumar, Sang~Michael Xie, Jeff~Z. Haochen, Tengyu Ma, and Percy Liang.
\newblock Connect, not collapse: Explaining contrastive learning for unsupervised domain adaptation.
\newblock In Kamalika Chaudhuri, Stefanie Jegelka, Le~Song, Csaba Szepesvari, Gang Niu, and Sivan Sabato (eds.), \emph{Proceedings of the 39th International Conference on Machine Learning}, volume 162 of \emph{Proceedings of Machine Learning Research}, pp.\  19847--19878. PMLR, 17--23 Jul 2022.
\newblock URL \url{https://proceedings.mlr.press/v162/shen22d.html}.

\bibitem[Shwartz-Ziv et~al.(2023)Shwartz-Ziv, Balestriero, Kawaguchi, Rudner, and LeCun]{shwartzziv2023an}
Ravid Shwartz-Ziv, Randall Balestriero, Kenji Kawaguchi, Tim G.~J. Rudner, and Yann LeCun.
\newblock An information theory perspective on variance-invariance-covariance regularization.
\newblock In \emph{Thirty-seventh Conference on Neural Information Processing Systems}, 2023.
\newblock URL \url{https://openreview.net/forum?id=KipjqOPaZ0}.

\bibitem[Tian et~al.(2020)Tian, Sun, Poole, Krishnan, Schmid, and Isola]{NEURIPS2020_4c2e5eaa}
Yonglong Tian, Chen Sun, Ben Poole, Dilip Krishnan, Cordelia Schmid, and Phillip Isola.
\newblock What makes for good views for contrastive learning?
\newblock In H.~Larochelle, M.~Ranzato, R.~Hadsell, M.F. Balcan, and H.~Lin (eds.), \emph{Advances in Neural Information Processing Systems}, volume~33, pp.\  6827--6839. Curran Associates, Inc., 2020.
\newblock URL \url{https://proceedings.neurips.cc/paper_files/paper/2020/file/4c2e5eaae9152079b9e95845750bb9ab-Paper.pdf}.

\bibitem[Tian(2022)]{NEURIPS2022_7b5c9cc0}
Yuandong Tian.
\newblock Understanding deep contrastive learning via coordinate-wise optimization.
\newblock In S.~Koyejo, S.~Mohamed, A.~Agarwal, D.~Belgrave, K.~Cho, and A.~Oh (eds.), \emph{Advances in Neural Information Processing Systems}, volume~35, pp.\  19511--19522. Curran Associates, Inc., 2022.
\newblock URL \url{https://proceedings.neurips.cc/paper_files/paper/2022/file/7b5c9cc08960df40615c1d858961eb8b-Paper-Conference.pdf}.

\bibitem[Tian(2023)]{tian2023understanding}
Yuandong Tian.
\newblock Understanding the role of nonlinearity in training dynamics of contrastive learning.
\newblock In \emph{The Eleventh International Conference on Learning Representations}, 2023.
\newblock URL \url{https://openreview.net/forum?id=s130rTE3U_X}.

\bibitem[Tosh et~al.(2021)Tosh, Krishnamurthy, and Hsu]{tosh2021contrastive}
Christopher Tosh, Akshay Krishnamurthy, and Daniel Hsu.
\newblock Contrastive learning, multi-view redundancy, and linear models.
\newblock In \emph{Algorithmic Learning Theory}, pp.\  1179--1206. PMLR, 2021.

\bibitem[Tschannen et~al.(2020)Tschannen, Djolonga, Rubenstein, Gelly, and Lucic]{Tschannen2020On}
Michael Tschannen, Josip Djolonga, Paul~K. Rubenstein, Sylvain Gelly, and Mario Lucic.
\newblock On mutual information maximization for representation learning.
\newblock In \emph{International Conference on Learning Representations}, 2020.
\newblock URL \url{https://openreview.net/forum?id=rkxoh24FPH}.

\bibitem[Tschannen et~al.(2025)Tschannen, Gritsenko, Wang, Naeem, Alabdulmohsin, Parthasarathy, Evans, Beyer, Xia, Mustafa, Hénaff, Harmsen, Steiner, and Zhai]{tschannen2025siglip2multilingualvisionlanguage}
Michael Tschannen, Alexey Gritsenko, Xiao Wang, Muhammad~Ferjad Naeem, Ibrahim Alabdulmohsin, Nikhil Parthasarathy, Talfan Evans, Lucas Beyer, Ye~Xia, Basil Mustafa, Olivier Hénaff, Jeremiah Harmsen, Andreas Steiner, and Xiaohua Zhai.
\newblock Siglip 2: Multilingual vision-language encoders with improved semantic understanding, localization, and dense features, 2025.
\newblock URL \url{https://arxiv.org/abs/2502.14786}.

\bibitem[van~den Oord et~al.(2019)van~den Oord, Li, and Vinyals]{oord2019representationlearningcontrastivepredictive}
Aaron van~den Oord, Yazhe Li, and Oriol Vinyals.
\newblock Representation learning with contrastive predictive coding, 2019.
\newblock URL \url{https://arxiv.org/abs/1807.03748}.

\bibitem[Vinyals et~al.(2016)Vinyals, Blundell, Lillicrap, kavukcuoglu, and Wierstra]{NIPS2016_90e13578}
Oriol Vinyals, Charles Blundell, Timothy Lillicrap, koray kavukcuoglu, and Daan Wierstra.
\newblock Matching networks for one shot learning.
\newblock In D.~Lee, M.~Sugiyama, U.~Luxburg, I.~Guyon, and R.~Garnett (eds.), \emph{Advances in Neural Information Processing Systems}, volume~29. Curran Associates, Inc., 2016.
\newblock URL \url{https://proceedings.neurips.cc/paper_files/paper/2016/file/90e1357833654983612fb05e3ec9148c-Paper.pdf}.

\bibitem[Wang \& Liu(2021)Wang and Liu]{wang2021understanding}
Feng Wang and Huaping Liu.
\newblock Understanding the behaviour of contrastive loss.
\newblock In \emph{Proceedings of the IEEE/CVF conference on computer vision and pattern recognition}, pp.\  2495--2504, 2021.

\bibitem[Wang \& Isola(2020)Wang and Isola]{wang2020understanding}
Tongzhou Wang and Phillip Isola.
\newblock Understanding contrastive representation learning through alignment and uniformity on the hypersphere.
\newblock In \emph{International Conference on Machine Learning}, pp.\  9929--9939. PMLR, 2020.

\bibitem[Wang et~al.(2022)Wang, Zhang, Wang, Yang, and Lin]{wang2022chaos}
Yifei Wang, Qi~Zhang, Yisen Wang, Jiansheng Yang, and Zhouchen Lin.
\newblock Chaos is a ladder: A new theoretical understanding of contrastive learning via augmentation overlap.
\newblock In \emph{International Conference on Learning Representations}, 2022.
\newblock URL \url{https://openreview.net/forum?id=ECvgmYVyeUz}.

\bibitem[Wei et~al.(2021)Wei, Shen, Chen, and Ma]{wei2021theoretical}
Colin Wei, Kendrick Shen, Yining Chen, and Tengyu Ma.
\newblock Theoretical analysis of self-training with deep networks on unlabeled data.
\newblock In \emph{International Conference on Learning Representations}, 2021.
\newblock URL \url{https://openreview.net/forum?id=rC8sJ4i6kaH}.

\bibitem[Wen \& Li(2021)Wen and Li]{pmlr-v139-wen21c}
Zixin Wen and Yuanzhi Li.
\newblock Toward understanding the feature learning process of self-supervised contrastive learning.
\newblock In Marina Meila and Tong Zhang (eds.), \emph{Proceedings of the 38th International Conference on Machine Learning}, volume 139 of \emph{Proceedings of Machine Learning Research}, pp.\  11112--11122. PMLR, 18--24 Jul 2021.
\newblock URL \url{https://proceedings.mlr.press/v139/wen21c.html}.

\bibitem[Weng et~al.(2025)Weng, An, Ma, Qi, Luo, Yang, Dong, and Huang]{weng2025clusteringpropertiesselfsupervisedlearning}
Xi~Weng, Jianing An, Xudong Ma, Binhang Qi, Jie Luo, Xi~Yang, Jin~Song Dong, and Lei Huang.
\newblock Clustering properties of self-supervised learning, 2025.
\newblock URL \url{https://arxiv.org/abs/2501.18452}.

\bibitem[Xue et~al.(2024)Xue, Gan, Ni, Joshi, and Mirzasoleiman]{xue2024investigating}
Yihao Xue, Eric Gan, Jiayi Ni, Siddharth Joshi, and Baharan Mirzasoleiman.
\newblock Investigating the benefits of projection head for representation learning.
\newblock In \emph{The Twelfth International Conference on Learning Representations}, 2024.
\newblock URL \url{https://openreview.net/forum?id=GgEAdqYPNA}.

\bibitem[Yeh et~al.(2022)Yeh, Hong, Hsu, Liu, Chen, and LeCun]{10.1007/978-3-031-19809-0_38}
Chun-Hsiao Yeh, Cheng-Yao Hong, Yen-Chi Hsu, Tyng-Luh Liu, Yubei Chen, and Yann LeCun.
\newblock Decoupled contrastive learning.
\newblock In \emph{Computer Vision – ECCV 2022: 17th European Conference, Tel Aviv, Israel, October 23–27, 2022, Proceedings, Part XXVI}, pp.\  668–684, Berlin, Heidelberg, 2022. Springer-Verlag.
\newblock ISBN 978-3-031-19808-3.
\newblock \doi{10.1007/978-3-031-19809-0_38}.
\newblock URL \url{https://doi.org/10.1007/978-3-031-19809-0_38}.

\bibitem[You et~al.(2017)You, Gitman, and Ginsburg]{you2017large}
Yang You, Igor Gitman, and Boris Ginsburg.
\newblock Large batch training of convolutional networks.
\newblock \emph{arXiv preprint arXiv:1708.03888}, 2017.

\bibitem[Yuan et~al.(2022)Yuan, Wu, Qiu, Du, Zhang, Zhou, and Yang]{DBLP:conf/icml/YuanWQDZZY22}
Zhuoning Yuan, Yuexin Wu, Zi{-}Hao Qiu, Xianzhi Du, Lijun Zhang, Denny Zhou, and Tianbao Yang.
\newblock Provable stochastic optimization for global contrastive learning: Small batch does not harm performance.
\newblock In Kamalika Chaudhuri, Stefanie Jegelka, Le~Song, Csaba Szepesv{\'{a}}ri, Gang Niu, and Sivan Sabato (eds.), \emph{International Conference on Machine Learning, {ICML} 2022, 17-23 July 2022, Baltimore, Maryland, {USA}}, volume 162 of \emph{Proceedings of Machine Learning Research}, pp.\  25760--25782. {PMLR}, 2022.
\newblock URL \url{https://proceedings.mlr.press/v162/yuan22b.html}.

\bibitem[Zbontar et~al.(2021)Zbontar, Jing, Misra, LeCun, and Deny]{zbontar2021barlow}
Jure Zbontar, Li~Jing, Ishan Misra, Yann LeCun, and St{\'e}phane Deny.
\newblock Barlow twins: Self-supervised learning via redundancy reduction.
\newblock In \emph{International conference on machine learning}, pp.\  12310--12320. PMLR, 2021.

\bibitem[Zhai et~al.(2023)Zhai, Mustafa, Kolesnikov, and Beyer]{Zhai_2023_ICCV}
Xiaohua Zhai, Basil Mustafa, Alexander Kolesnikov, and Lucas Beyer.
\newblock Sigmoid loss for language image pre-training.
\newblock In \emph{Proceedings of the IEEE/CVF International Conference on Computer Vision (ICCV)}, pp.\  11975--11986, October 2023.

\bibitem[Zimmermann et~al.(2021)Zimmermann, Sharma, Schneider, Bethge, and Brendel]{zimmermann2021contrastive}
Roland~S Zimmermann, Yash Sharma, Steffen Schneider, Matthias Bethge, and Wieland Brendel.
\newblock Contrastive learning inverts the data generating process.
\newblock In \emph{International Conference on Machine Learning}, pp.\  12979--12990. PMLR, 2021.

\end{thebibliography}
\bibliographystyle{iclr2026_conference}

\newpage
\appendix


\section{Additional Experiments}\label{app:experiments}
{\bf Datasets and augmentations.\enspace} CIFAR10 and CIFAR100 both consist of 50000 training images and 10000 validation images with 10 classes and 100 classes, respectively, uniformly distributed across the dataset, i.e., CIFAR10 has 5000 samples per class and CIFAR100 has 500 samples per class. Mini-ImageNet also has 5000 test images on top of 50000 train and 10000 validation images, with 100 of 1000 classes from ImageNet-1K~\citep{5206848} (at the original resolution). Tiny-ImageNet contains 100000 images downsampled to $64\times64$, with total 200 classes from IM-1K. Each class has 500 training, 50 validation, and 50 test images.

We use standard augmentations as proposed in SimCLR~\citep{pmlr-v119-chen20j}. For experiments on Mini-ImageNet, we use the following pipeline: random resized cropping to \(224 \times 224\), random horizontal flipping, color jittering (brightness, contrast, saturation: $0.8$; hue: $0.2$), random grayscale conversion (\(p=0.2\)), and Gaussian blur (applied with probability $0.1$ using a \(3 \times 3\) kernel and \(\sigma = 1.5\)). For Tiny-ImageNet, we drop saturation to $0.4$ and hue to $0.1$ due to low resolution images. For CIFAR datasets, we adopt a similar pipeline with appropriately scaled parameters. The crop size is adjusted to \(32 \times 32\), and the color jitter parameters are scaled to saturation $0.4$, and hue $0.1$.

\subsection{Effect of number of classes on alignment}\label{app:n_way_align}
In addition to the linear CKA results reported in the main text (Fig.~\ref{fig:alignment_classes}), we also evaluate representational similarity using RSA. The corresponding RSA values are presented in Fig.~\ref{fig:alignment_classes_app}, providing a complementary perspective on alignment across varying numbers of classes.

\begin{figure}[h]
  \centering
  \setlength{\tabcolsep}{1.5pt} 
\begin{tabular}{@{}ccccc@{}}
\includegraphics[width=0.195\linewidth]{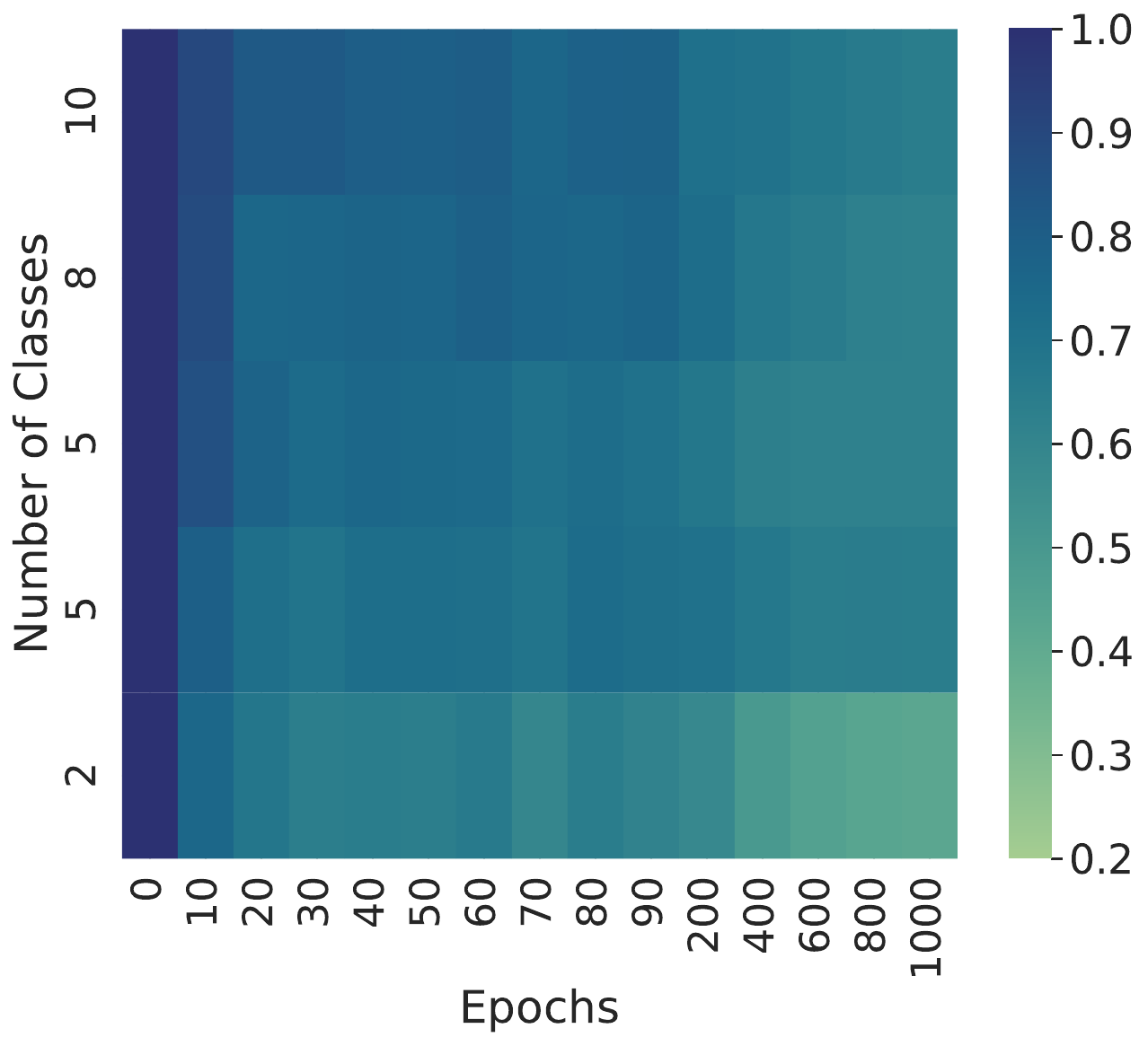} &
\includegraphics[width=0.195\linewidth]{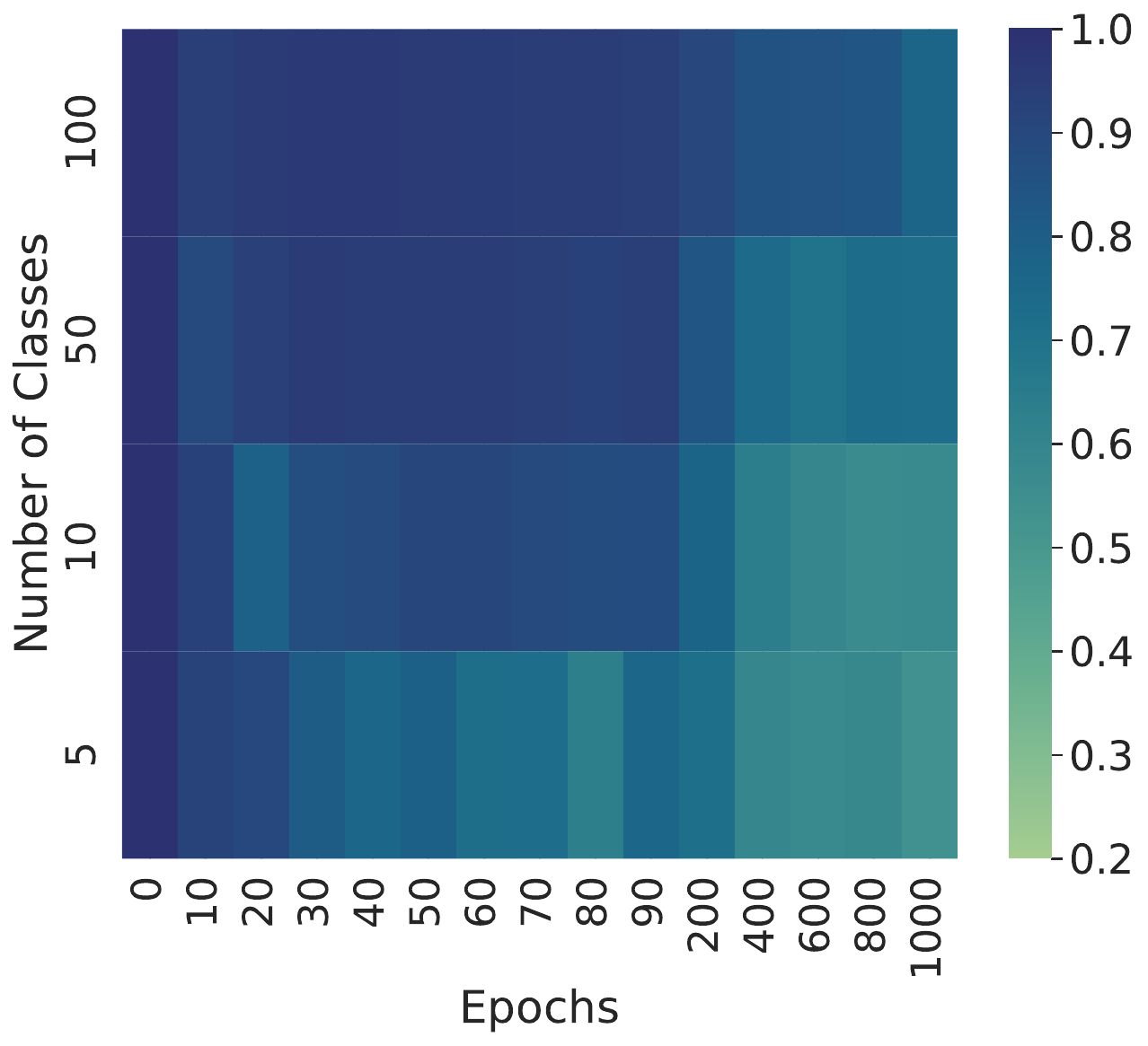} &
\includegraphics[width=0.195\linewidth]{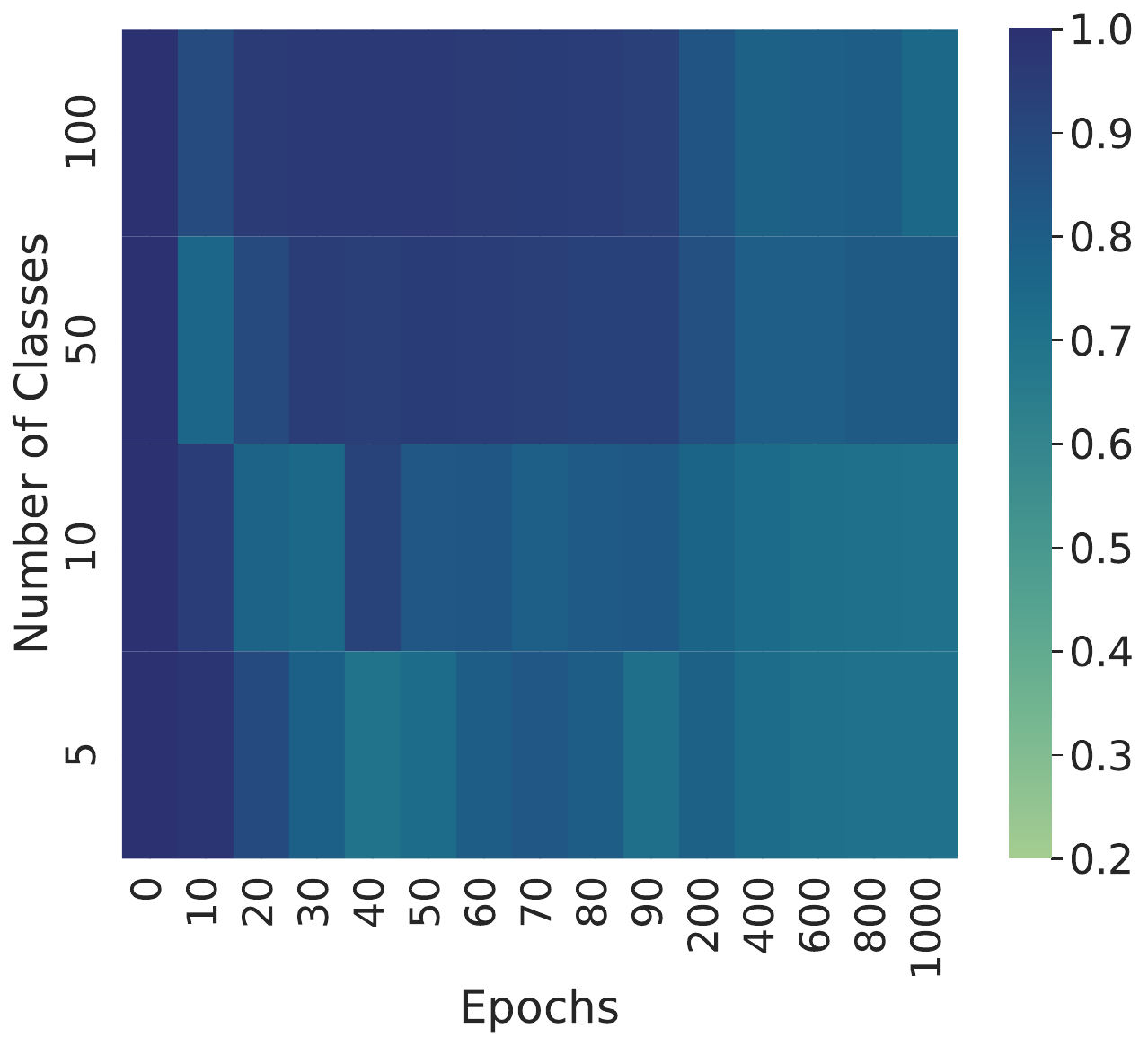} &
\includegraphics[width=0.195\linewidth]{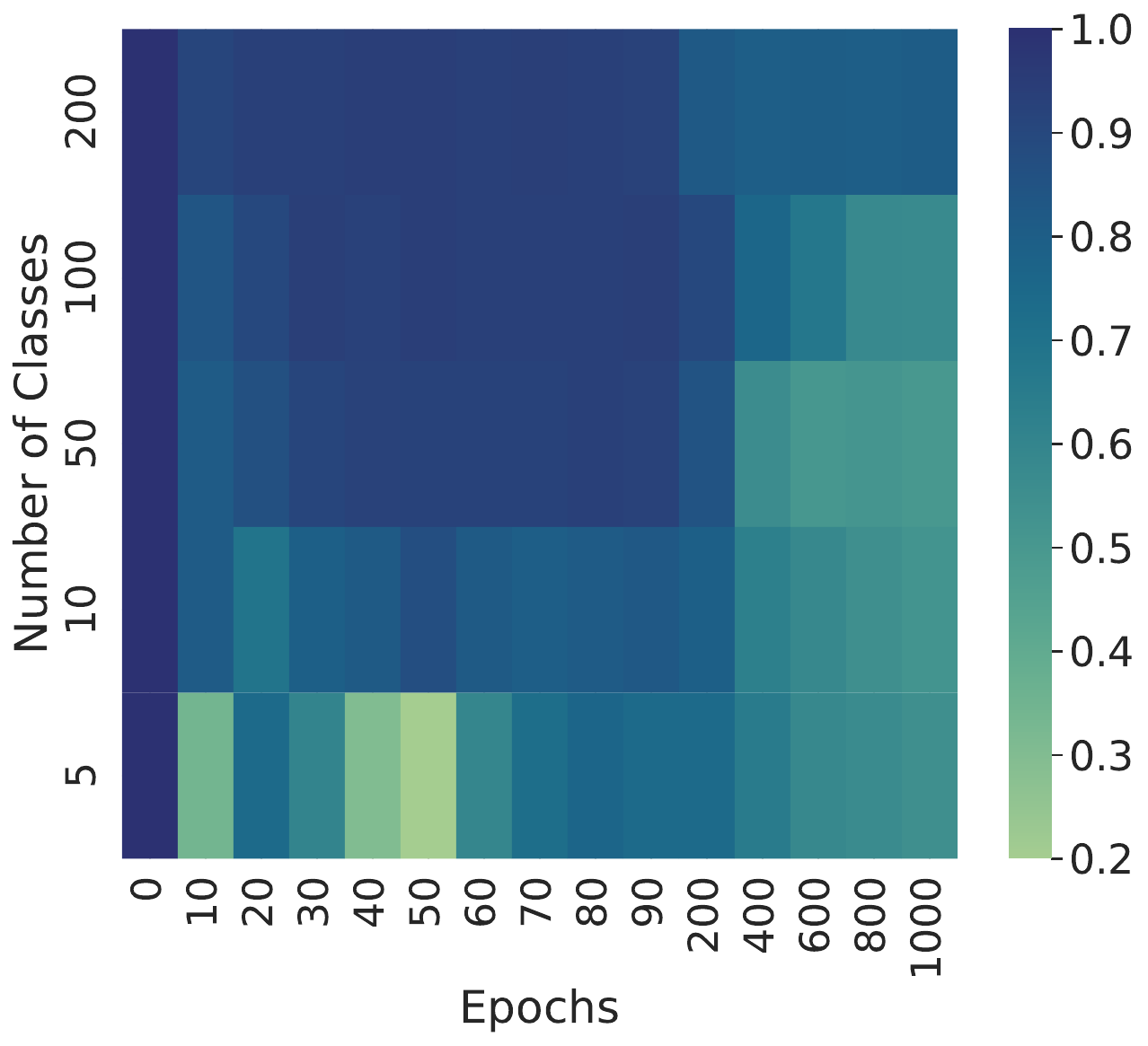} &
\includegraphics[width=0.195\linewidth]{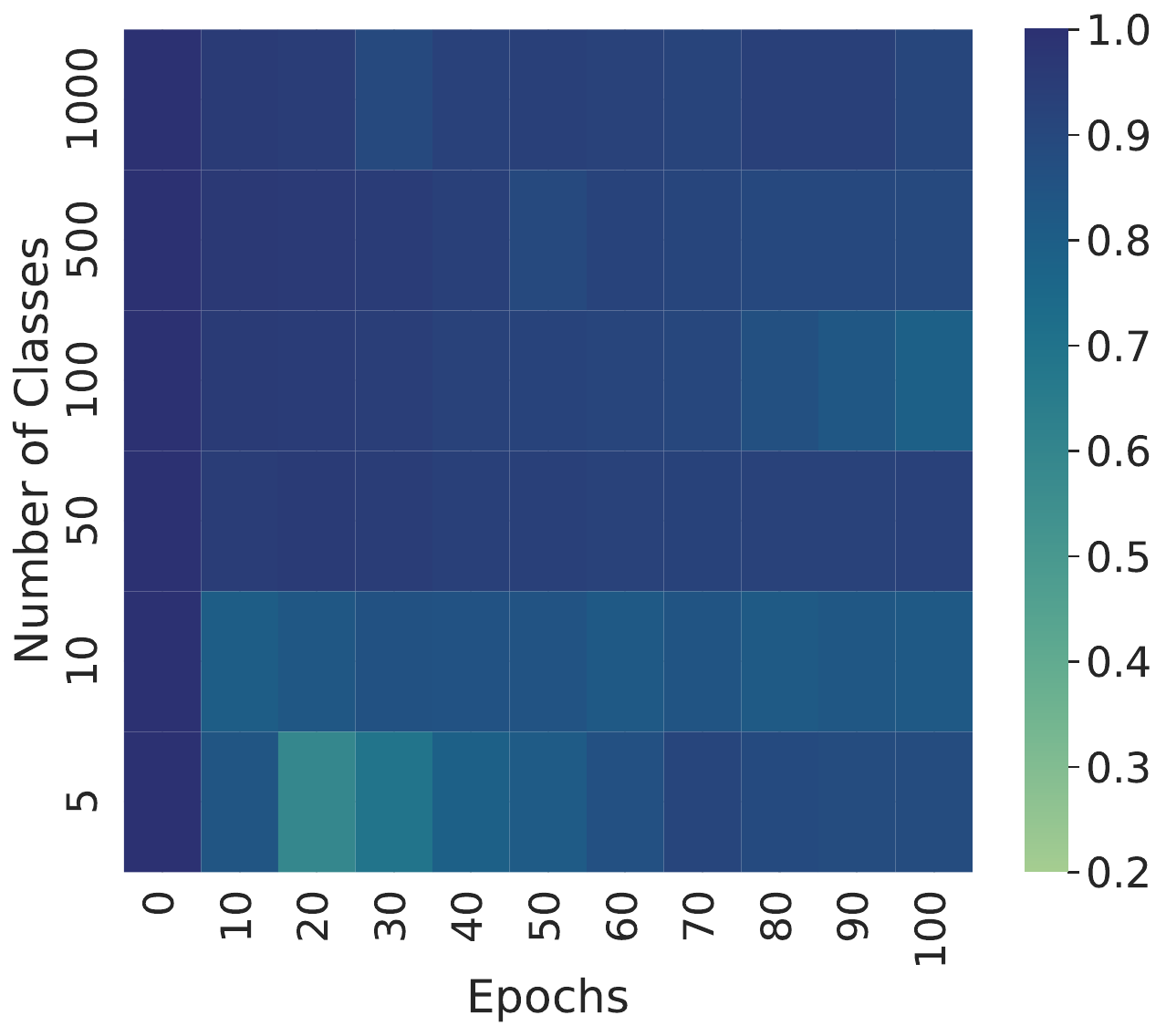} \\
\includegraphics[width=0.195\linewidth]{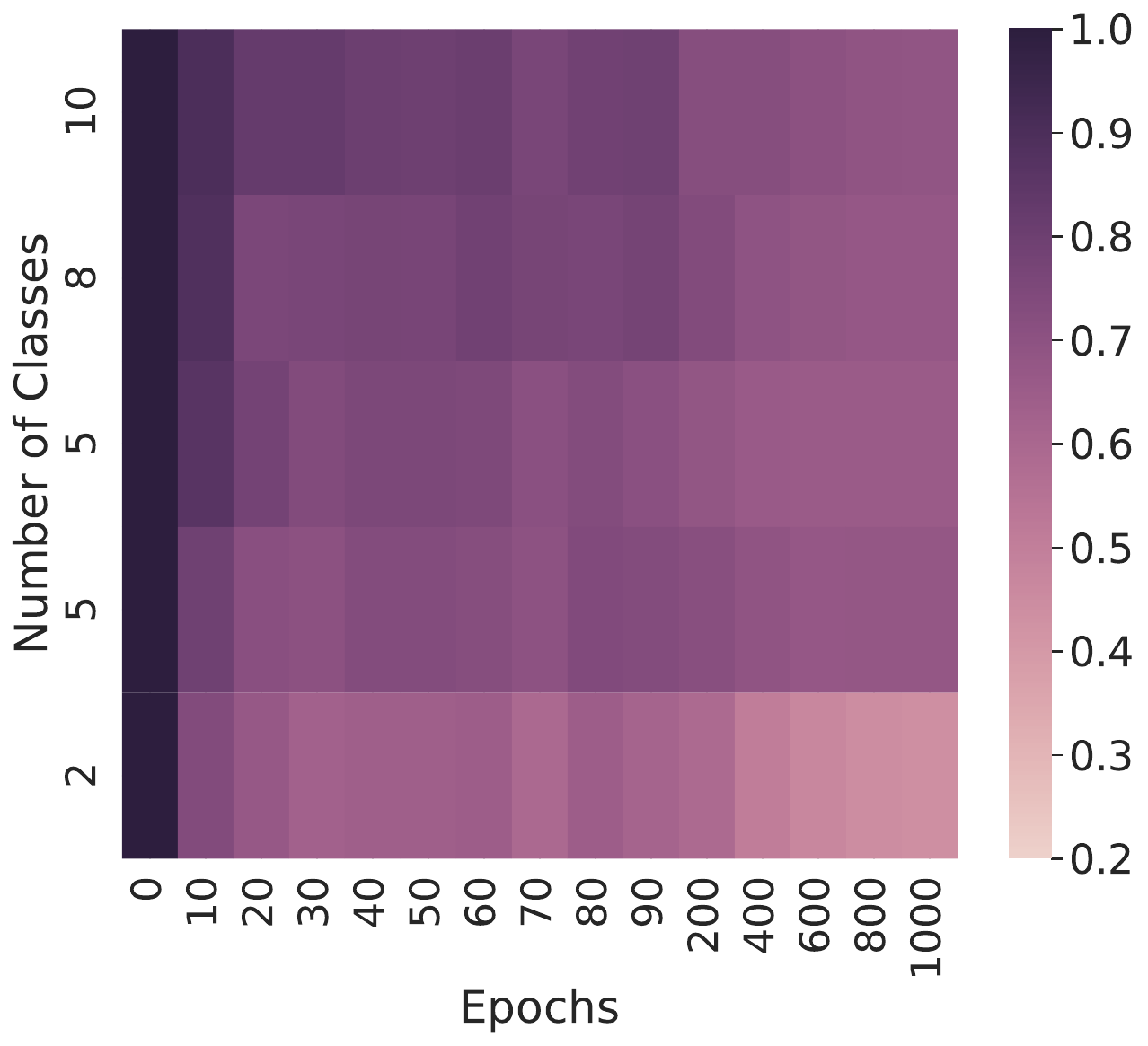} &
\includegraphics[width=0.195\linewidth]{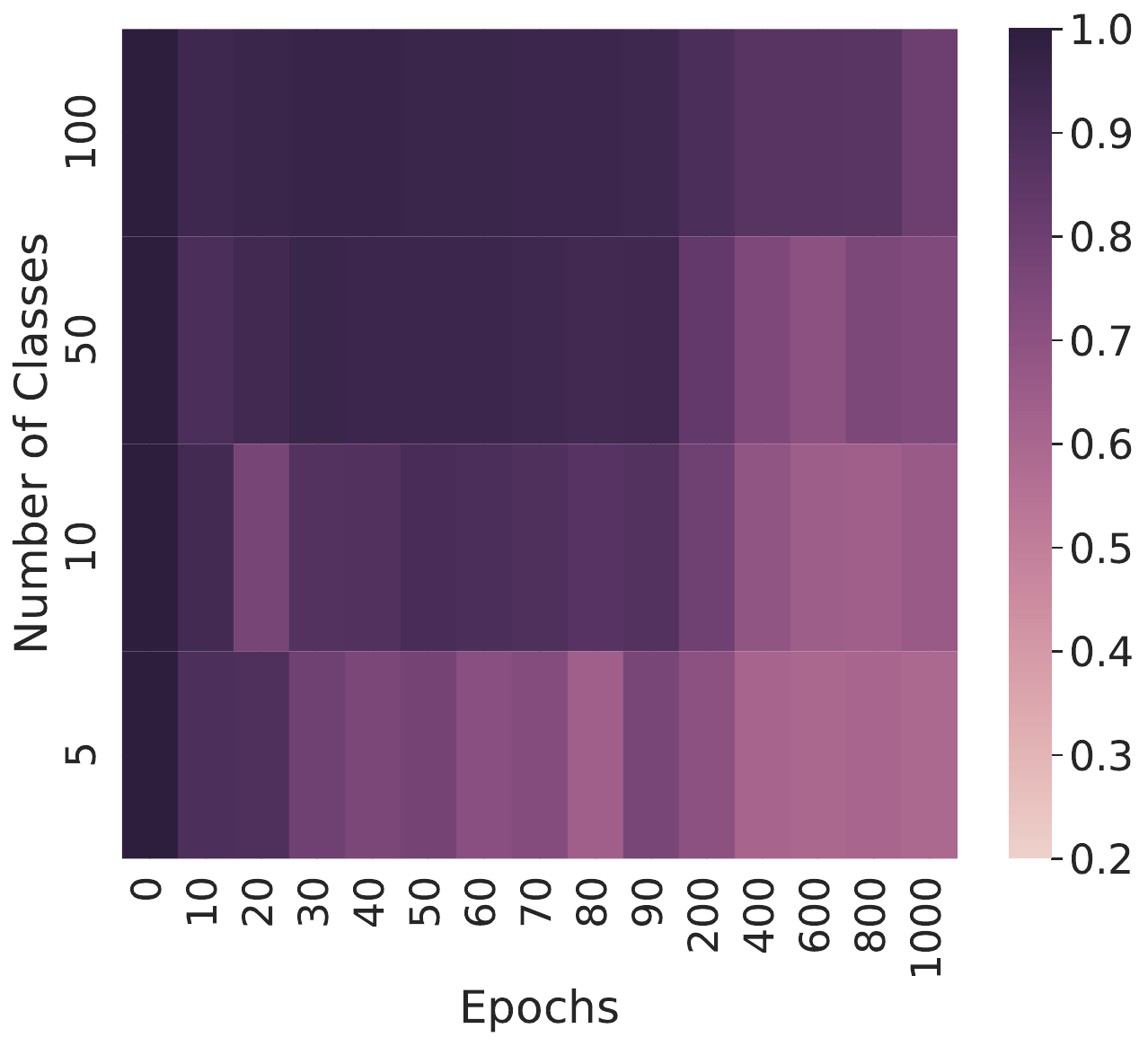} &
\includegraphics[width=0.195\linewidth]{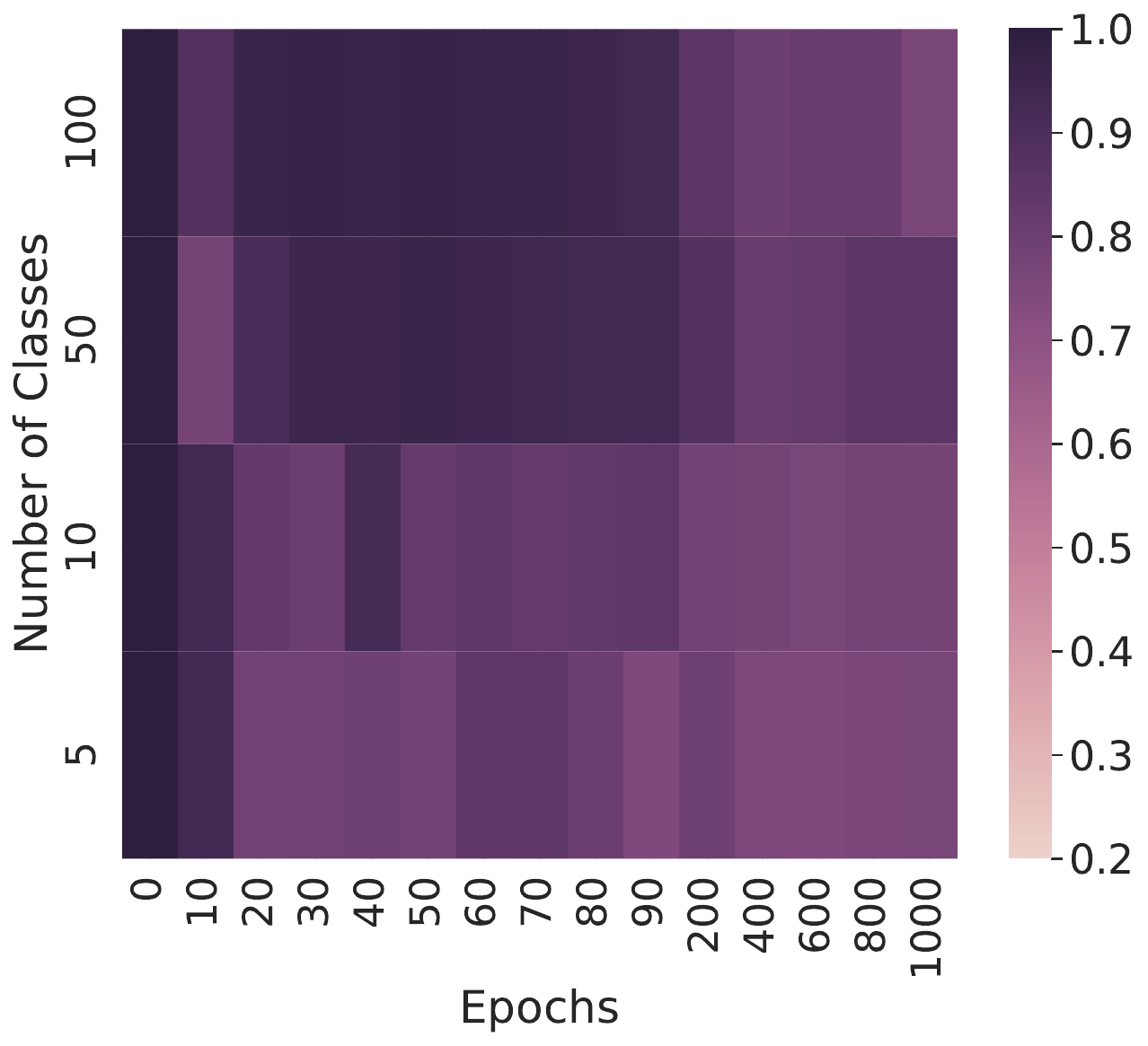} &
\includegraphics[width=0.195\linewidth]{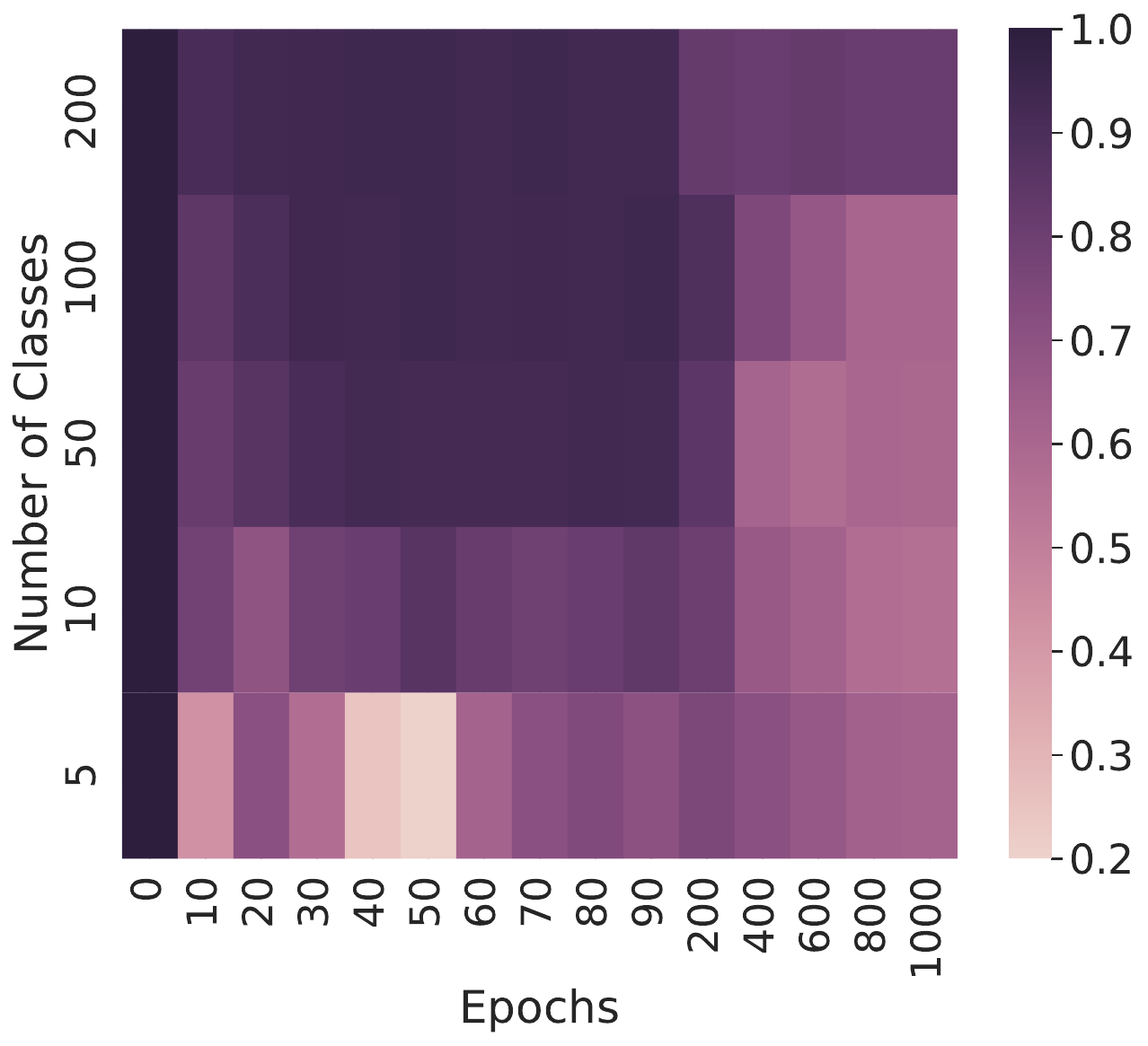} &
\includegraphics[width=0.195\linewidth]{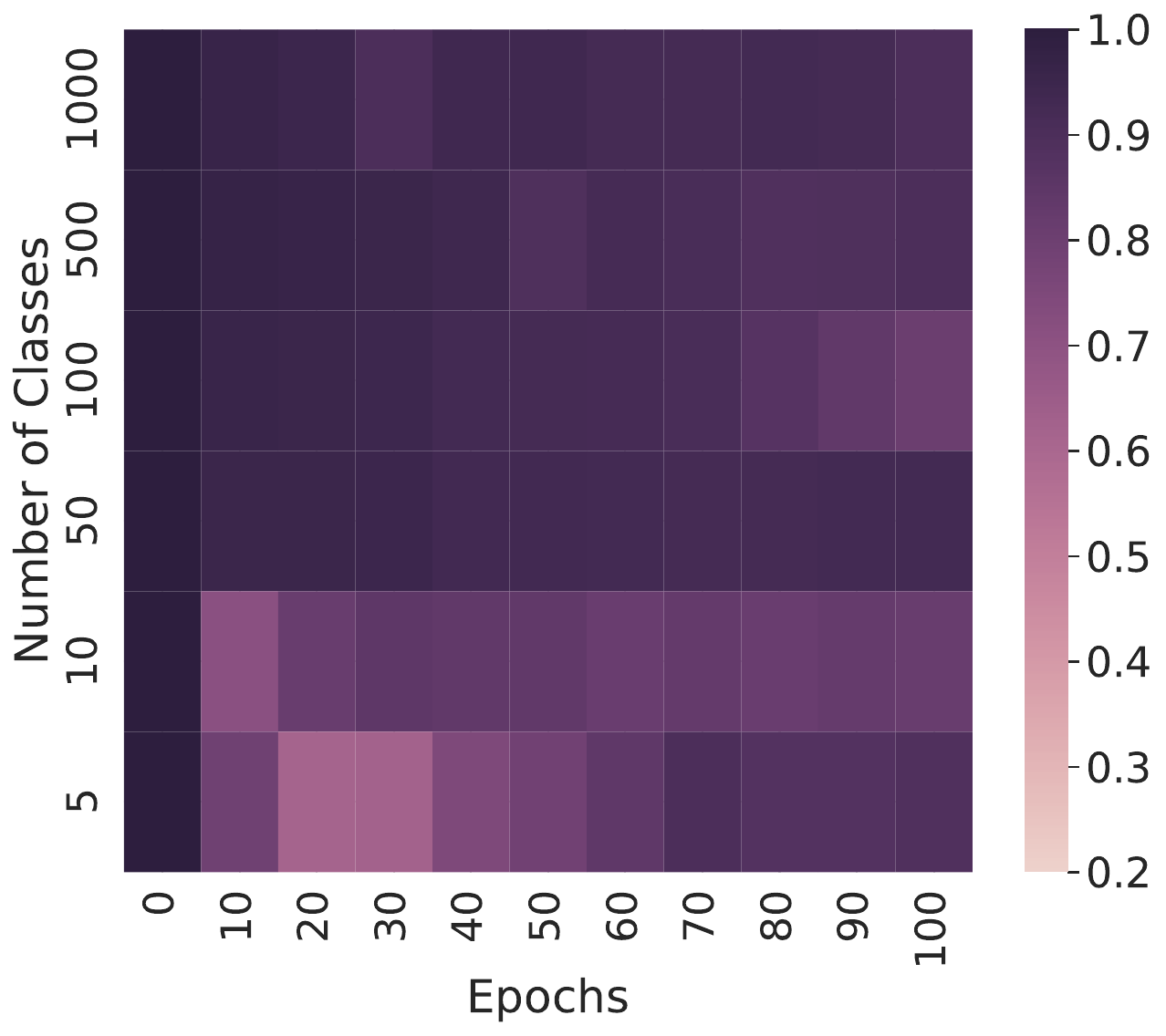} \\
{\small CIFAR-10} & {\small CIFAR-100} & {\small Mini-ImageNet} & {\small Tiny-ImageNet} & {\small ImageNet-1k}
\end{tabular}
  \vspace{-0.5em}
\caption{{\bf CL–NSCL alignment (RSA) increases with the number of training classes.} See Sec.~\ref{sec:exp_results} and Fig.~\ref{fig:alignment_classes} for experimental details.}
  \label{fig:alignment_classes_app}
\end{figure}

\subsection{Fig.~\ref{fig:teaser} Methodology}
We explain how to generate the plots comparing alignment in weight-space and representation-space. The two plots on the left visualize the direction of learning for each model. Each vector represents the change in model's state from initialization (epoch 0) to epoch 1000. 

{\bf Model states.\enspace} We consider CL and NSCL models trained on CIFAR100, corresponding to epoch 0 and epoch 1000-a total of four models. 

{\bf Weight space.\enspace} This plot shows how the raw parameters evolve during training. For all four models, we first flatten all the weights into a massive vector which gives us four points in a very high dimensional space (order of $10^7$). To visualize these points, we perform Principal Component Analysis (PCA) on all four vectors combined and fit them to a 3D space. This creates a shared 3D coordinate system. We transform all four points into this space and we get ${p_{\CL}^0, p_{\CL}^{1000}, p_{\NS}^0, p_{\NS}^{1000}}$. Using these points, we create two vectors: ($v_{\CL}, v_{\NS}$), and create polar plot using the final vectors and the calculated angle between them ($85.7^\circ$).

{\bf Representation space.\enspace} This plot shows how model's alignment for a specific class evolved. We pick one class from our dataset (CIFAR100) and randomly sample 100 images. We use the same samples for all four models to extract their corresponding features, say $\mathrm{Z} \in \mathbb R^{100 \times d}$, where $d$ is the projection dimension. We concatenate total 400 representations (100 from each model) and perform PCA to learn a shared 3D coordinate system. The representations are transformed to this shared space ($\mathbb R^{100 \times d} \rightarrow \mathbb R^{100 \times 3}$) and averaged to a single 3D point for each model. Just like before, a polar plot is created using the vectors and angle between them ($27.8^\circ$).

{\bf Similarity metrics.\enspace} We report RSA and CKA values computed between DCL and NSCL models trained on CIFAR100. Additionally, we show their average weight gap as detailed in Sec.~\ref{sec:exp_results}. It is evident that models stay aligned in representation space but diverge in weight space.

\section{Parameter-Space Coupling}\label{app:params}

To complement the analysis in Sec.~\ref{sec:theory}, we compare the two trajectories in parameter space. Let $e_t = \|w^{\CL}_t - w^{\NS}_t\|$ denote the parameter drift at step $t$. We would like to bound it as a function of the number of training iterations, batch size, and learning rate scheduling. We use classic techniques that can be found at~\citep{10.1162/153244302760200704,10.5555/3045390.3045520,pmlr-v75-mou18a,KuzborskijLampert2017DataDependentStability}.

{\bf Optimization.\enspace} In order to isolate the effect of the loss, we optimize both objectives (CL and NSCL) with standard mini-batch SGD under a single coupled protocol: at step \(t\) we draw a batch \(\mathcal{B}_t=\{(x_{j},x'_{j},y_{j})\}_{j=1}^{B}\) with replacement, where each \(x'_j\sim\alpha(x_j)\) (e.g., random crop/resize, horizontal flip, color jitter, Gaussian blur); we average per-anchor terms to form either \(\bar{\ell}^{\mathrm{CL}}_{\mathcal{B}_t}(w)\) or \(\bar{\ell}^{\mathrm{NS}}_{\mathcal{B}_t}(w)\) using cosine similarity (optionally temperature-scaled), hence bounded in \([-1,1]\); and we update \(w_{t+1}=w_t-\eta_t\nabla \bar{\ell}_{\mathcal{B}_t}(w_t)\) with prescribed \(\eta_t>0\). We then run two coupled SGD trajectories from the same initialization \(w^{\CL}_0=w^{\NS}_0\) that share the \emph{same} batches and augmentations \((\mathcal B_t)_{t=0}^{T-1}\) and differ only by NSCL’s exclusion of same-class negatives:
\[
\begin{aligned}
w^{\CL}_{t+1}&=w^{\CL}_t-\eta_t\,\nabla \bar{\ell}^{\CL}_{\mathcal B_t}\bigl(w^{\CL}_t\bigr),\qquad
w^{\NS}_{t+1}=w^{\NS}_t-\eta_t\,\nabla \bar{\ell}^{\NS}_{\mathcal B_t} \bigl(w^{\NS}_t\bigr),\;\; t=0,\dots,T-1.
\end{aligned}
\]

Throughout the analysis, we make standard assumptions on the smoothness of the loss functions and the scale of gradients.

{\bf Assumptions.\enspace} To control the dynamics, we impose two standard conditions on the geometry of the batch objectives and the scale of pairwise gradients.

\begin{assumption}[Uniform smoothness]\label{asm:smoothness}
For every batch $\mathcal{B}$, the functions $w\mapsto \bar{\ell}^{\CL}_{\mathcal B}(w)$ and $w\mapsto \bar{\ell}^{\NS}_{\mathcal B}(w)$ are $\beta$-smooth with the same constant $\beta>0$:
\[
\left\|\nabla \phi(w)-\nabla \phi(v)\right\| ~\le~ \beta \,\|w-v\|
\quad\text{for all }v,w\in\mathbb{R}^p,\ \phi\in\{\bar{\ell}^{\CL}_{\mathcal B},\bar{\ell}^{\NS}_{\mathcal B}\}.
\]
\end{assumption}

\begin{assumption}[Bounded pairwise gradients]\label{asm:pairwise-G}
There exists $G>0$, independent of $\mathcal B$ and $t$, such that for all $w$ and all pairs $(u,v)$ appearing in any denominator term,
\[
\left\|\nabla_w\,\mathrm{sim}\left(f_w(u),f_w(v)\right)\right\| ~\le~ G.
\]
\end{assumption}

We quantify drift between the coupled trajectories under shared randomness in the \emph{nonconvex $\beta$-smooth} regime. Throughout, the only data-dependent term is $\Delta_{C,\delta}(B;\tau)$, which decreases with more classes and larger batches.

\begin{restatable}{theorem}{nonconvex}\label{thm:nonconvex}
Fix $B,T\in\mathbb N$, $\delta\in(0,1)$, and temperature $\tau>0$. 
Suppose Assumptions~\ref{asm:smoothness}--\ref{asm:pairwise-G} hold. Then, with probability at least $1-\delta$,
\[
e_T ~\le~ \frac{G}{\beta\,\tau}\,\Delta_{C,\delta}(B;\tau)\,
\Bigl(\exp\Bigl(\beta\sum_{t=0}^{T-1}\eta_t\Bigr)-1\Bigr).
\]
\end{restatable}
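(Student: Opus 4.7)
The plan is a standard discrete Grönwall / SGD-stability argument: combine the per-step gradient-gap bound from Lem.~\ref{lem:gap-param} with the $\beta$-smoothness of Assumption~\ref{asm:smoothness} to produce an affine recurrence in $e_t:=\|w^{\CL}_t-w^{\NS}_t\|$, and then unroll it. The only probabilistic ingredient is the batch-composition concentration already embedded in Lem.~\ref{lem:gap-param}, which (via the union bound absorbed into $\epsilon_{B,\delta}$) yields with probability at least $1-\delta$ the uniform-in-$w$, uniform-in-$t\in\{0,\dots,T-1\}$ control
\[
\bigl\|\nabla\bar{\ell}^{\CL}_{\mathcal B_t}(w)-\nabla\bar{\ell}^{\NS}_{\mathcal B_t}(w)\bigr\|~\le~\tfrac{G}{\tau}\,\Delta_{C,\delta}(B;\tau)~=:~c,
\]
where the $G$-factor enters from Assumption~\ref{asm:pairwise-G}. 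I would condition on this good event throughout the remainder of the proof.

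Next I would bound a single step of the drift. Since $w^{\CL}_0=w^{\NS}_0$ and both trajectories use the \emph{same} batch $\mathcal B_t$ and stepsize $\eta_t$, the triangle inequality applied to the coupled updates gives
\[
e_{t+1}~\le~e_t+\eta_t\,\bigl\|\nabla\bar{\ell}^{\CL}_{\mathcal B_t}(w^{\CL}_t)-\nabla\bar{\ell}^{\NS}_{\mathcal B_t}(w^{\NS}_t)\bigr\|.
\]
Adding and subtracting $\nabla\bar{\ell}^{\NS}_{\mathcal B_t}(w^{\CL}_t)$ splits the inner quantity into an \emph{objective-gap} term, bounded by $c$ via the uniform estimate above, and a \emph{stability} term, bounded by $\beta\,e_t$ via Assumption~\ref{asm:smoothness}. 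Combining these yields the affine recurrence
\[
e_{t+1}~\le~(1+\beta\eta_t)\,e_t~+~\eta_t\,c,\qquad e_0=0.
\]

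Finally I would unroll with the classical affine shift: define $a_t:=e_t+c/\beta$; a direct substitution shows $a_{t+1}\le(1+\beta\eta_t)\,a_t$, and using $1+x\le e^x$,
\[
a_T~\le~a_0\prod_{t=0}^{T-1}(1+\beta\eta_t)~\le~\tfrac{c}{\beta}\,\exp\Bigl(\beta\sum_{t=0}^{T-1}\eta_t\Bigr).
\]
Subtracting $c/\beta$ and substituting $c=\tfrac{G}{\tau}\Delta_{C,\delta}(B;\tau)$ produces exactly the stated bound. The genuinely hard part (concentration of batch composition on the $1-\delta$ event, together with the softmax-reweighting bookkeeping that gives the factor $\Delta_{C,\delta}(B;\tau)$) is already absorbed into Lem.~\ref{lem:gap-param}; everything that remains is the routine smoothness-plus-discrete-Grönwall computation sketched above.
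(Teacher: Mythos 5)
Your proof is correct and follows essentially the same route as the paper: condition on the high-probability batch-composition event implicit in Lem.~\ref{lem:gap-param}, split the one-step gap into a $\beta$-smoothness term and a uniform gradient-gap term (you insert $\nabla\bar\ell^{\NS}_{\mathcal B_t}(w_t^{\CL})$ where the paper inserts the CL-gradient map $T_t$, but under Assumption~\ref{asm:smoothness} either choice yields the factor $1+\eta_t\beta$), arrive at the identical affine recurrence $e_{t+1}\le(1+\beta\eta_t)e_t+\eta_t c$, and unroll. The only cosmetic difference is the final unrolling: the paper uses a telescoping bound $\eta_t e^{\beta S_{t+1}}\le\beta^{-1}(e^{\beta S_t}-e^{\beta S_{t+1}})$, whereas you use the equivalent affine shift $a_t:=e_t+c/\beta$; both land on the same closed form.
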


The bound scales linearly with $G$ and $\Delta_{C,\delta}(B;\tau)$, but crucially it is amplified by the exponential factor $\exp(\beta\sum_t\eta_t)$. Unless the step sizes are aggressively annealed, this term grows rapidly with training time. Even though $\Delta_{C,\delta}(B;\tau)$ improves with $C$ and $B$ (e.g., for $C{=}1000$, $B{=}512$, $\delta{=}0.01$, we obtain $\Delta_{C,\delta}(B;\tau)\approx 0.01$ so that the reweightings of the steps differ by about one percent), the exponential accumulation can still overwhelm this small per-step gap. 

In other words, parameter-space coupling guarantees only that the two runs do not drift apart too quickly in weight space. But because the weights may follow very different trajectories even when representations remain similar, this control is too weak to yield meaningful statements about representational alignment. This motivates our next step: shifting the analysis to \emph{similarity space}, where we can obtain bounds that remain stable throughout training and translate directly into guarantees on metrics such as CKA and RSA.

{\bf Proof idea.\enspace} With high probability over batches (Cor.~\ref{cor:comp-hp}), every anchor’s denominator is dominated by negatives up to $\epsilon_{B,\delta}$ fluctuations. This keeps the (temperature–$\tau$) softmax reweighting gap between CL and NSCL small. In particular, Lem.~\ref{lem:gap-param} shows that the per-batch parameter gradients differ uniformly as
\[
\bigl\|\nabla\bar\ell^{\CL}_{\mathcal B_t}(w)-\nabla\bar\ell^{\NS}_{\mathcal B_t}(w)\bigr\|
~\le~ \frac{G}{\tau}\,\Delta_{C,\delta}(B;\tau).
\]

By $\beta$-smoothness of each batch loss, each step can expand distances by at most a factor 
$(1+\beta\eta_t)$. Combining this smoothness expansion with the uniform gradient-gap bound 
yields the following recurrence:
\[
e_{t+1}~\le~(1+\beta\eta_t)\,e_t  ~+~\eta_t\,\frac{G}{\tau}\,\Delta_{C,\delta}(B;\tau),
\]
where the first term propagates the previous error (with amplification controlled by curvature), 
and the second injects the new discrepancy introduced by the CL–NSCL gap at temperature $\tau$.

Unrolling over $T$ steps and applying the discrete Gr\"onwall inequality gives the exponential-type bound
\[
e_T ~\le~ \frac{G}{\beta\,\tau}\,\Delta_{C,\delta}(B;\tau)\,
\Bigl(\exp\Bigl(\beta\sum_{t=0}^{T-1}\eta_t\Bigr)-1\Bigr).
\]
Thus, cumulative drift scales with the reweighting gap and is amplified exponentially 
with the total step size; smaller $\tau$ tightens the softmax and increases the constants (via both $1/\tau$ and $\mathrm e^{2/\tau}$ inside $\Delta_{C,\delta}$), so keeping $\sum_t \eta_t$ moderate is especially important.

\section{Why gradient descent in similarity space is a faithful surrogate}
\label{app:sigma-just}

We now explain why running gradient descent directly in similarity space closely tracks the dynamics induced by gradient descent in parameter space.  
 
When parameters move from $w_t$ to $w_{t+1}$, the induced change in the similarity matrix can be approximated by a linear expansion:
\begin{equation}
\Sigma(w_{t+1}) - \Sigma(w_t) ~\approx~ J_t (w_{t+1} - w_t), 
\qquad J_t ~:=~ J(w_t),
\end{equation}
where $J(w) := \partial \Sigma / \partial w$ is the Jacobian. The error in this expansion, denoted $R_t$, is quadratic in the step size:
\begin{equation}\label{eq:sigma-lin}
\Sigma(w_{t+1}) - \Sigma(w_t) ~=~ J_t (w_{t+1} - w_t) + R_t.
\end{equation}
  
By the chain rule, the gradient in parameter space can be written as follows:
\[
\nabla_w \bar\ell(w_t) ~=~ J_t^\top \nabla_\Sigma \bar\ell(\Sigma(w_t)) ~=~ J_t^\top \widehat G_t,
\]
where $\widehat G_t := \nabla_\Sigma \bar\ell(\Sigma(w_t))$. Substituting this into the update rule gives
\begin{equation}\label{eq:precond}
\Sigma(w_{t+1}) - \Sigma(w_t)
~=~ -\eta_t\, P_t \widehat G_t + R_t,
\qquad P_t ~:=~ J_t J_t^\top ~\succeq~ 0.
\end{equation}
Thus, parameter descent acts like similarity descent, but with a preconditioning matrix $P_t$, plus the remainder $R_t$.
 
Assume there exist constants $L_\Sigma, M_\Sigma > 0$ such that
\[
\|J(w)\|_{2\to 2} ~\le~ L_\Sigma, 
\qquad 
\|\Sigma(w+\Delta w) - \Sigma(w) - J(w)\Delta w\|_F ~\le~ \frac{M_\Sigma}{2}\|\Delta w\|_2^2.
\]
Then $\|P_t\|_{2\to2} \le L_\Sigma^2$ and, with $\Delta w_t := -\eta_t \nabla_w \bar\ell(w_t)$,
\begin{equation}\label{eq:R-bound-appendix}
\|R_t\|_F ~\le~ \frac{M_\Sigma}{2}\eta_t^2 \|\nabla_w \bar\ell(w_t)\|_2^2 ~=:~ \frac{M_\Sigma}{2}\eta_t^2 \Xi_t.
\end{equation}
 
Let $\widehat\Sigma_t := \Sigma(w_t)$ be the similarity trajectory induced by parameter descent. Define $\widetilde\Sigma_t$ as the trajectory of explicit similarity descent:
\[
\widetilde\Sigma_{t+1} ~=~ \widetilde\Sigma_t - \eta_t \widetilde G_t,
\qquad 
\widetilde G_t ~:=~ \nabla_\Sigma \bar\ell(\widetilde\Sigma_t),
\]
with $\widehat\Sigma_0 = \widetilde\Sigma_0$.  
Let $E_t := \|\widehat\Sigma_t - \widetilde\Sigma_t\|_F$ and $C_\Sigma := \sup_t \|P_t - I\|_{2\to2} \le L_\Sigma^2 + 1$.  
Using \eqref{eq:precond}, adding and subtracting $-\eta_t \widehat G_t$, and applying the temperature–$\tau$ bounds \eqref{eq:hess-lip} and \eqref{eq:R-bound-appendix}, one obtains
\begin{equation}\label{eq:E-rec}
E_{t+1} \;\le\; \Bigl(1 + \frac{\eta_t}{2\tau^2 B}\Bigr) \, E_t  ~+~ \eta_t C_\Sigma \|\widehat G_t\|_F  ~+~ \frac{M_\Sigma}{2}\eta_t^2 \Xi_t.
\end{equation}
Unrolling this recursion from $E_0=0$ and using $\prod_u (1+\alpha_u) \le \exp(\sum_u \alpha_u)$ yields
\begin{equation}\label{eq:E-unrolled}
\|\widehat\Sigma_T - \widetilde\Sigma_T\|_F
\;\le\; \exp\Bigl(\frac{1}{2\tau^2 B}\sum_{t=0}^{T-1} \eta_t\Bigr)
\left[
C_\Sigma \sum_{t=0}^{T-1} \eta_t \|\widehat G_t\|_F
~+~ \frac{M_\Sigma}{2}\sum_{t=0}^{T-1} \eta_t^2 \Xi_t
\right].
\end{equation}
By bounding $\|\widehat G_t\|_F$ via \eqref{eq:GhatF}, namely $\|\widehat G_t\|_F \le \frac{1}{\tau}\sqrt{\frac{2}{B}}$, this simplifies to
\begin{equation}\label{eq:E-unrolled-bs}
\|\widehat\Sigma_T - \widetilde\Sigma_T\|_F
\;\le\; \exp\Bigl(\frac{1}{2\tau^2 B}\sum^{T-1}_{t=0} \eta_t\Bigr)
\left[
\frac{\sqrt{2}\,C_\Sigma}{\tau\sqrt{B}} \sum^{T-1}_{t=0} \eta_t
~+~ \frac{M_\Sigma}{2} \sum^{T-1}_{t=0} \eta_t^2 \Xi_t
\right].
\end{equation}

To summarize, the difference between the two trajectories stays small when step sizes are not too aggressive (moderate cumulative step size), the schedule is square-summable (so $\sum_t \eta_t^2$ remains finite), and the batch size $B$ is not too small. The temperature $\tau$ modulates both curvature (via $1/\tau^2$) and gradient magnitudes (via $1/\tau$); smaller $\tau$ tightens the softmax and makes the coupling more sensitive to step size.

\section{Technical Tools and Proofs}\label{app:appendix-main}

\subsection{Notation and basic softmax facts}\label{app:notation-softmax}
Let $S=\{(x_i,y_i)\}_{i=1}^N$ be class-balanced with $C$ classes and $N=Cn$ (each class has $n$ points). 
For parameters $w$, let $z_i=f_w(x_i)$ and define the bounded similarity matrix  
\[
\Sigma(w)_{ij} ~:=~ \mathrm{sim}\bigl(z_i,z_j\bigr)\in[-1,1].
\]
At step $t$, draw a mini-batch $\mathcal B_t=\{(x_{j_s},x'_{j_s},y_{j_s})\}_{s=1}^B$ with replacement, using independent augmentations $x'_{j_s}\sim\alpha(x_{j_s})$. 
For an \emph{anchor} $i\in\{j_1,\dots,j_B\}$, let $D_i$ be its denominator index set, and let $D_i^{\negative} := \{k\in D_i:\ y_k\neq y_i\}$ (and similarly $D^{\positive}_i$) denote the subset restricted to negatives 
(e.g., in two-view SimCLR, $D_i$ consists of all $2B$ views except the anchor itself).  

Define the anchor’s logit vector $s_i(w) := \bigl(\Sigma(w)_{i,k}\bigr)_{k\in D_i}$ and the corresponding softmax distributions with temperature $\tau>0$ (default $1$):
\[
p_i=\softmax \bigl(s_i(w)/\tau\bigr),
\qquad
q_i=\softmax \bigl((s_i(w))_{D_i^{\negative}}/\tau\bigr).
\]
Let $i'$ denote the positive (augmented) index for anchor $i$.  

For contrastive learning (CL) and negatives-only supervised contrastive learning (NSCL), the per-anchor and batch losses are
\[
\ell^{\CL}_i(s_i)~=~-\log p_{i,i'},
\qquad
\ell^{\NS}_i(s_i)~=~-\log q_{i,i'},
\]
\[
\bar\ell^{\CL}_{\mathcal B_t}~=~\frac{1}{B}\sum_{i\in\{j_1,\dots,j_B\}}\ell^{\CL}_i(s_i),
\qquad
\bar\ell^{\NS}_{\mathcal B_t}~=~\frac{1}{B}\sum_{i\in\{j_1,\dots,j_B\}}\ell^{\NS}_i(s_i).
\]

Since $\Sigma(w)_{ij}\in[-1,1]$, each exponential term inside the softmax lies in
\[
\exp\bigl(\Sigma(w)_{ij}/\tau\bigr)\in[\mathrm e^{-1/\tau},\mathrm e^{1/\tau}],
\]
a fact used below to control softmax mass ratios.

\begin{lemma}[Anchor-block orthogonality]\label{lem:block-orth}
Fix a step $t$ and batch $\mathcal B_t$. For each anchor $i\in\mathcal B_t$, let $D_i$ be the set of indices appearing in $i$'s denominator and define the per-anchor gradient $g_i\in\mathbb{R}^{\mathcal I_t}$ by
\[
g_i ~:=~ \nabla_{s_i}\ell_i \quad\text{placed on the coordinates } \{(i,k):k\in D_i\}\subset \mathcal I_t,
\]
with zeros elsewhere (here $\mathcal I_t$ is the set of all coordinates touched at step $t$). If $i\neq j$, then $g_i$ and $g_j$ have disjoint supports, and hence
\[
\langle g_i,\,g_j\rangle_F  ~=~ 0 .
\]
Consequently, for the batch gradient $G=\frac{1}{B}\sum_{i\in\mathcal B_t} g_i$,
\begin{equation}\label{eq:block-orth-sum}
\|G\|_F^2 ~=~ \frac{1}{B^2}\sum_{i\in\mathcal B_t}\|g_i\|_F^2 .
\end{equation}
\end{lemma}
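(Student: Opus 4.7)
The statement is essentially a structural observation about where each per-anchor gradient ``lives'' inside the matrix of coordinates touched at step $t$, so the plan is to make that support structure explicit and then read off both claims.

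First, I would unpack the definition of $g_i$: by construction, $g_i$ is the gradient of the per-anchor loss $\ell_i$ with respect to the logit vector $s_i=(\Sigma_{i,k})_{k\in D_i}$, embedded into $\mathbb{R}^{\mathcal I_t}$ by placing its entries on the coordinates $\{(i,k):k\in D_i\}$ and zero elsewhere. Thus the support of $g_i$ lies entirely in ``row $i$'' of the coordinate grid, i.e.\ in $\{(i,k):k\in D_i\}$. For $i\neq j$, the rows $i$ and $j$ are disjoint index sets in $\mathcal I_t$, so $\mathrm{supp}(g_i)\cap\mathrm{supp}(g_j)=\emptyset$. The Frobenius inner product is the coordinate-wise sum
\[
\langle g_i,g_j\rangle_F ~=~ \sum_{(a,b)\in\mathcal I_t} (g_i)_{a,b}\,(g_j)_{a,b},
\]
and on every coordinate at least one factor vanishes, giving $\langle g_i,g_j\rangle_F=0$.

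Second, for the batch gradient identity, I would expand
\[
\|G\|_F^2 ~=~ \Bigl\langle \tfrac{1}{B}\sum_{i\in\mathcal B_t} g_i,\ \tfrac{1}{B}\sum_{j\in\mathcal B_t} g_j\Bigr\rangle_F
~=~ \frac{1}{B^2}\sum_{i,j\in\mathcal B_t}\langle g_i,g_j\rangle_F,
\]
and invoke the previous step to kill every $i\neq j$ term, yielding \eqref{eq:block-orth-sum}. A minor bookkeeping point to address is that $\mathcal B_t$ is drawn with replacement, so the same anchor index could in principle appear more than once; in that case repeated copies share the same support and one either treats the sum as over the multiset (which still only contributes diagonal terms when indices coincide) or over distinct anchors with appropriate multiplicities. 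Either convention gives the stated identity, since the orthogonality argument only relies on distinctness of row indices.

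\textbf{Main obstacle.} There is no real obstacle: the lemma is a direct consequence of the per-anchor gradient being supported in a single row of $\Sigma$, which follows immediately from the fact that $\ell_i$ depends on $\Sigma$ only through the $i$-th row entries indexed by $D_i$. The only subtlety worth flagging in the write-up is the ``with replacement'' convention for $\mathcal B_t$, which does not affect the conclusion but should be mentioned to avoid ambiguity in later applications (e.g.\ when this lemma is used to combine reweighting contributions across anchors ``in quadrature'' in the proof of Thm.~\ref{thm:sim-coupling}).
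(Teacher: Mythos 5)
Your proof is correct and mirrors the paper's: you note each $g_i$ lives on the row-$i$ coordinates, distinct anchors therefore have disjoint supports, and the cross terms in the expansion of $\|G\|_F^2$ drop out. Your aside on the with-replacement convention is slightly off (if two batch slots truly shared a row index, those cross terms would \emph{not} vanish), but the standard reading---anchors indexed by distinct batch slots so rows are distinct by construction---is what the paper intends, and your core argument is unaffected.
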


\begin{proof}
By construction, $g_i$ is supported only on coordinates $\{(i,k):k\in D_i\}$, while $g_j$ is supported only on $\{(j,k):k\in D_j\}$. For $i\neq j$ these sets are disjoint, so every coordinatewise product is zero, yielding $\langle g_i,g_j\rangle_F=0$. Expanding the square for $G$,
\[
\|G\|_F^2
~=~\Bigl\langle \frac{1}{B}\sum_i g_i,\; \frac{1}{B}\sum_j g_j\Bigr\rangle_F
~=~\frac{1}{B^2}\sum_i \|g_i\|_F^2 + \frac{1}{B^2}\sum_{i\neq j}\langle g_i,g_j\rangle_F
~=~\frac{1}{B^2}\sum_i \|g_i\|_F^2,
\]
where the cross terms vanish by orthogonality.
\end{proof}

\begin{lemma}[Softmax Hessian and gradient Lipschitzness]\label{lem:sigma-smooth}
Fix a step $t$ and batch $\mathcal B_t$. Let $\mathcal I_t$ be the set of coordinates $(i,k)$ that appear in any anchor’s denominator at step $t$, and view $\bar\ell_{\mathcal B_t}$ (either CL or NSCL) as a function of the restricted similarity entries $\Sigma\in\mathbb{R}^{\mathcal I_t}$. For each anchor $i$, write $s_i=\{\Sigma(i,k):(i,k)\in \mathcal I_t\}$ and $p_i=\softmax(s_i/\tau)$. Then:
\[
\nabla^2_{s_i}\ell_i(s_i)~=~\frac{1}{\tau^2}\,J(s_i),\quad J(s_i):=\mathrm{Diag}(p_i)-p_i p_i^\top,
\qquad
\bigl\|\nabla^2 \bar\ell_{\mathcal B_t}(\Sigma)\bigr\|_{2\to 2} ~\le~ \frac{1}{2\tau^2 B}.
\]
Consequently, for all $\Sigma,\widetilde\Sigma\in\mathbb{R}^{\mathcal I_t}$,
\begin{equation}\label{eq:hess-lip}
\bigl\|\nabla_\Sigma \bar\ell_{\mathcal B_t}(\Sigma)-\nabla_\Sigma \bar\ell_{\mathcal B_t}(\widetilde\Sigma)\bigr\|_F ~\le~ \frac{1}{2\tau^2 B}\,\|\Sigma-\widetilde\Sigma\|_F.
\end{equation}
\end{lemma}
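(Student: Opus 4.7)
My plan proceeds in three stages: (i) derive the per-anchor Hessian by a standard softmax computation; (ii) use anchor-block orthogonality (Lem.~\ref{lem:block-orth}) to write the batch Hessian in block-diagonal form; and (iii) bound the spectral norm of each block by a sharp $1/2$ inequality on $J(p)$. Gradient Lipschitzness will then follow from a routine mean-value argument.

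For (i), I would write $\ell^{\CL}_i(s_i) = -\,s_{i,i'}/\tau + \log\sum_{k\in D_i} e^{s_{i,k}/\tau}$. The linear term contributes nothing to the Hessian, and the log-sum-exp term yields $(1/\tau^2)\bigl(\mathrm{Diag}(p_i)-p_i p_i^\top\bigr)$ by the standard softmax calculation (chain rule applied twice, using that $\partial p_j/\partial s_k = (1/\tau)(p_j\delta_{jk} - p_j p_k)$). The NSCL anchor loss is identical up to the normalization set being $D_i^-$ instead of $D_i$, giving $(1/\tau^2)\bigl(\mathrm{Diag}(q_i)-q_i q_i^\top\bigr)$ on the $D_i^-$ block and zero elsewhere (the positive coordinate $i'$ enters only linearly). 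In both cases the per-anchor Hessian has the form $(1/\tau^2) J$ with $J=\mathrm{Diag}(p)-pp^\top$ for some probability vector $p$.

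For (ii), since $\ell_i$ depends only on the slice $s_i=(\Sigma_{i,k})_{k\in D_i}$ and different anchors touch disjoint coordinates by Lem.~\ref{lem:block-orth}, the sum $\nabla^2_\Sigma \bar\ell_{\mathcal B_t}(\Sigma)=\frac{1}{B}\sum_i \nabla^2_\Sigma \ell_i$ is block-diagonal with blocks $(1/(\tau^2 B))\,J(p_i)$. Its operator norm equals the maximum block operator norm, so it suffices to show $\|J(p)\|_{2\to 2}\le 1/2$ for any probability vector $p$. I would prove this by a direct row/column-sum computation: column $j$ of $J(p)$ has entries $p_i(\delta_{ij}-p_j)$ whose absolute values sum to $p_j(1-p_j)+p_j\sum_{i\ne j}p_i=2p_j(1-p_j)\le 1/2$, so $\|J(p)\|_{1\to 1}=\|J(p)\|_{\infty\to\infty}\le 1/2$ (using symmetry of $J(p)$). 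Combined with the Riesz--Thorin/Schur interpolation inequality $\|M\|_{2\to 2}\le \sqrt{\|M\|_{1\to 1}\,\|M\|_{\infty\to\infty}}$, this yields $\|J(p)\|_{2\to 2}\le 1/2$ and hence $\|\nabla^2 \bar\ell_{\mathcal B_t}\|_{2\to 2}\le 1/(2\tau^2 B)$.

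Finally, \eqref{eq:hess-lip} follows from the fundamental theorem of calculus along $\Sigma_s=\widetilde\Sigma+s(\Sigma-\widetilde\Sigma)$: $\|\nabla \bar\ell(\Sigma)-\nabla \bar\ell(\widetilde\Sigma)\|_F \le \sup_{s\in[0,1]}\|\nabla^2 \bar\ell(\Sigma_s)\|_{2\to 2}\,\|\Sigma-\widetilde\Sigma\|_F \le (1/(2\tau^2 B))\,\|\Sigma-\widetilde\Sigma\|_F$, where Frobenius and Euclidean norms coincide on the vectorized index set $\mathcal I_t$. The main obstacle is the sharp constant in $\|J(p)\|_{2\to 2}\le 1/2$: the naive envelope $J(p)\preceq \mathrm{Diag}(p)$ only gives $\max_k p_k$, which can be close to $1$; the tight factor $1/2$ comes from the symmetry of $J(p)$ together with the $2p_j(1-p_j)\le 1/2$ row/column-sum identity, and this sharp constant is what propagates into the $1/(2\tau^2 B)$ coupling rate used in Thm.~\ref{thm:sim-coupling}.
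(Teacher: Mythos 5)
Your proposal is correct and follows essentially the same route as the paper: per-anchor log-sum-exp Hessian $(1/\tau^2)(\mathrm{Diag}(p_i)-p_ip_i^\top)$, block-diagonality across anchors via Lem.~\ref{lem:block-orth}, the row/column-sum bound $\max_j 2p_j(1-p_j)\le 1/2$ on $J(p)$, and a mean-value argument for the Lipschitz claim. The only cosmetic difference is that you invoke Riesz--Thorin/Schur interpolation $\|M\|_{2\to2}\le\sqrt{\|M\|_{1\to1}\|M\|_{\infty\to\infty}}$ to pass to the spectral norm, whereas the paper uses the equivalent observation that for the symmetric matrix $J$ the $2\to2$ norm equals the spectral radius, which is dominated by $\|J\|_{\infty\to\infty}$; both are standard and land on the same sharp constant.
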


\begin{proof}
With temperature $\tau>0$, for an anchor $i$ we have $p_i=\softmax(s_i/\tau)$ and
\[
\nabla_{s_i}\ell_i(s_i) ~=~\frac{1}{\tau}\,(p_i-e_{i'})\quad\Longrightarrow\quad
\nabla^2_{s_i}\ell_i(s_i) ~=~\frac{1}{\tau^2}\,\nabla_{s_i}p_i ~=~\frac{1}{\tau^2}\,J(s_i),
\]
where $J(s_i):=\mathrm{Diag}(p_i)-p_i p_i^\top$. Bound $\|J(s_i)\|_{2\to 2}$ via the infinity norm:
\begin{align*}
\|J(s_i)\|_{2\to 2}
~&\le~ \|J(s_i)\|_\infty \\
~&=~ \max_r \sum_{\ell} |J_{r\ell}| \\
~&=~ \max_r \Bigl(p_{i,r}(1-p_{i,r})+\sum_{\ell\neq r} p_{i,r}p_{i,\ell}\Bigr) \\
~&=~ \max_r 2p_{i,r}(1-p_{i,r}) ~\le~ \tfrac12,
\end{align*}
since $x(1-x)\le 1/4$ for $x\in[0,1]$.

The batch loss is an average over anchors, so its Hessian is block-diagonal across anchors with a prefactor $1/B$:
\[
\nabla^2 \bar\ell_{\mathcal B_t}(\Sigma)
~=~\frac{1}{B}\,\mathrm{blkdiag} \Bigl(\tfrac{1}{\tau^2}J(s_i)\Bigr)_{i\in\mathcal B_t}
~=~\frac{1}{\tau^2 B}\,\mathrm{blkdiag}\bigl(J(s_i)\bigr)_{i\in\mathcal B_t}.
\]
Hence
\[
\bigl\|\nabla^2 \bar\ell_{\mathcal B_t}(\Sigma)\bigr\|_{2\to 2}
=\frac{1}{\tau^2 B}\max_{i}\|J(s_i)\|_{2\to 2}
\;\le\; \frac{1}{2\tau^2 B}.
\]
By the mean-value (integral) form for vector fields,
\[
\nabla_\Sigma \bar\ell_{\mathcal B_t}(\Sigma)-\nabla_\Sigma \bar\ell_{\mathcal B_t}(\widetilde\Sigma)
~=~ \int_0^1 \nabla^2 \bar\ell_{\mathcal B_t}\bigl(\widetilde\Sigma+\theta(\Sigma-\widetilde\Sigma)\bigr)\,[\Sigma-\widetilde\Sigma]\;d\theta,
\]
and therefore
\[
\bigl\|\nabla_\Sigma \bar\ell_{\mathcal B_t}(\Sigma)-\nabla_\Sigma \bar\ell_{\mathcal B_t}(\widetilde\Sigma)\bigr\|_F
~\le~ \sup_{\theta\in[0,1]}\bigl\|\nabla^2 \bar\ell_{\mathcal B_t}(\Sigma_\theta)\bigr\|_{2\to 2}\,\|\Sigma-\widetilde\Sigma\|_F
~\le~ \frac{1}{2\tau^2 B}\,\|\Sigma-\widetilde\Sigma\|_F,
\]
as claimed.
\end{proof}

\begin{lemma}[Per-anchor gradient norm and batch average]\label{lem:anchor-grad-and-batch}
For an anchor $i$, let $s_i$ be the vector of logits in its denominator and $p_i=\softmax(s_i/\tau)$.
Let $i'$ denote the (unique) positive index (for NSCL, if $i'$ is not in the denominator, set $p_{i,i'}:=0$ in the display below). Then
\begin{equation}\label{eq:anchor-grad-norm}
\|\nabla_{s_i}\ell_i\|_2^2
~=~ \frac{1}{\tau^2}\Bigl[(1-p_{i,i'})^2+\sum_{k\ne i'} p_{i,k}^2\Bigr]
\;\le\; \frac{2}{\tau^2},
\end{equation}
hence $\|\nabla_{s_i}\ell_i\|_2\le \sqrt{2}/\tau$. Moreover, by block orthogonality across anchors,
\begin{equation}\label{eq:GhatF}
\Bigl\|\frac{1}{B}\sum_{i\in\mathcal B_t}\nabla_{s_i}\ell_i\Bigr\|_F^2
~=~\frac{1}{B^2}\sum_{i\in\mathcal B_t}\|\nabla_{s_i}\ell_i\|_2^2
\;\le\; \frac{2}{\tau^2 B}
\quad\Longrightarrow\quad
\Bigl\|\frac{1}{B}\sum_{i\in\mathcal B_t}\nabla_{s_i}\ell_i\Bigr\|_F \;\le\; \frac{1}{\tau}\sqrt{\frac{2}{B}}.
\end{equation}
\end{lemma}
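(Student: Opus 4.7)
The plan is to compute $\nabla_{s_i}\ell_i$ coordinate by coordinate and then invoke Lem.~\ref{lem:block-orth} for the batch inequality. First I would write the per-anchor loss in the unified form $\ell_i(s_i) = -s_{i,i'}/\tau + \log\sum_{k\in D_i}\exp(s_{i,k}/\tau)$, which covers both CL (where $i'\in D_i$) and NSCL (where $i'\notin D_i$). Differentiating gives $\partial_{s_{i,k}}\ell_i = (p_{i,k} - \mathbf{1}\{k=i'\})/\tau$ for $k\in D_i$, plus the standalone contribution $\partial_{s_{i,i'}}\ell_i = -1/\tau$ in the NSCL case (since there $s_{i,i'}$ appears only in the numerator). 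Under the stated convention $p_{i,i'}:=0$ when $i'\notin D_i$, both regimes collapse to the same bookkeeping, and summing squared coordinates yields $\|\nabla_{s_i}\ell_i\|_2^2 = \tau^{-2}\bigl[(1-p_{i,i'})^2 + \sum_{k\ne i'}p_{i,k}^2\bigr]$, which is the claimed identity.

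To bound this by $2/\tau^2$ I would invoke two elementary inequalities. First, $(1-p_{i,i'})^2\le 1-p_{i,i'}$ because $1-p_{i,i'}\in[0,1]$. Second, each $p_{i,k}\in[0,1]$ so $p_{i,k}^2\le p_{i,k}$, giving $\sum_{k\ne i'}p_{i,k}^2 \le \sum_{k\ne i'}p_{i,k}$. In the CL case $\sum_{k\ne i'}p_{i,k} = 1-p_{i,i'}$ and the total is at most $2(1-p_{i,i'})\le 2$; in the NSCL case $p_{i,i'}=0$ while $\sum_{k\ne i'}p_{i,k}=\sum_{k\in D_i}p_{i,k}=1$, so the total is $1+1=2$. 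In both regimes one concludes $\|\nabla_{s_i}\ell_i\|_2\le \sqrt{2}/\tau$.

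The batch-level statement is then immediate from Lem.~\ref{lem:block-orth}. Each $\nabla_{s_i}\ell_i$, placed on its support $\{(i,k):k\in D_i\}\subset\mathcal I_t$ and zero elsewhere, has support disjoint from every $\nabla_{s_j}\ell_j$ with $j\ne i$. Applying \eqref{eq:block-orth-sum} to $G=\tfrac{1}{B}\sum_{i\in\mathcal B_t}\nabla_{s_i}\ell_i$ gives $\|G\|_F^2 = B^{-2}\sum_{i\in\mathcal B_t}\|\nabla_{s_i}\ell_i\|_2^2$, and substituting the per-anchor bound $\le 2/\tau^2$ at each of the $B$ summands yields $\|G\|_F^2\le 2/(\tau^2 B)$; taking square roots delivers the announced estimate.

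The proof is essentially routine calculus plus the already-established block-orthogonality, so there is no real obstacle; the only subtlety will be carefully handling the NSCL case, where the positive logit $s_{i,i'}$ contributes only through the numerator. This requires separating its derivative from the log-sum-exp contribution and then reabsorbing it cleanly via the convention $p_{i,i'}:=0$, after which the two regimes give identical formulas and the bound by $2$ reduces to a one-line application of $x^2\le x$ on $[0,1]$.
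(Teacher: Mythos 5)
Your proof is correct and takes essentially the same approach as the paper: compute the softmax gradient coordinate by coordinate, use $p_{i,k}\in[0,1]$ to bound the squared norm by $2/\tau^2$ (you apply $p^2\le p$ elementwise where the paper uses $\sum_k p_k^2\le(\sum_k p_k)^2$; both give the same constant), and invoke Lem.~\ref{lem:block-orth} for the $1/B$ batch factor. One small bookkeeping divergence worth flagging: in the NSCL case with $i'\notin D_i$ you include the derivative $-1/\tau$ with respect to the positive logit $s_{i,i'}$ as a coordinate of $\nabla_{s_i}\ell_i$, which is what makes the display~\eqref{eq:anchor-grad-norm} hold as an exact equality under the convention $p_{i,i'}:=0$; the paper's own proof instead takes $s_i$ to be the denominator logits only (consistent with the lemma's ``vector of logits in its denominator''), so the NSCL gradient is $\tfrac{1}{\tau}q_i$ supported on negatives with $\|\nabla_{s_i}\ell_i\|_2^2=\tfrac{1}{\tau^2}\sum_j q_{i,j}^2\le\tfrac{1}{\tau^2}$, strictly smaller than the right-hand side of \eqref{eq:anchor-grad-norm}---either reading delivers the required $\le 2/\tau^2$ and leaves the batch bound unchanged.
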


\begin{proof}
For CL, the loss is $-\log p_{i,i'}$ with $p_i=\softmax(s_i/\tau)$. By the standard softmax–cross-entropy derivative with temperature,
\[
\nabla_{s_i}\ell_i  ~=~ \frac{1}{\tau}\,(p_i - e_{i'}),
\]
so
\[
\|\nabla_{s_i}\ell_i\|_2^2
= \frac{1}{\tau^2} \left[(1-p_{i,i'})^2+\sum_{k\ne i'} p_{i,k}^2\right]
\le \frac{1}{\tau^2} \left[(1-p_{i,i'})^2+\Big(\sum_{k\ne i'} p_{i,k}\Big)^2\right]
= \frac{2}{\tau^2}(1-p_{i,i'})^2
\le \frac{2}{\tau^2},
\]
since $p_i$ is a probability vector and $\sum_{k\ne i'}p_{i,k}=1-p_{i,i'}$.

For NSCL, two cases. If $i'\in D_i$, the same computation applies (the target index is present), hence the same bound holds. If $i'\notin D_i$ (negatives-only denominator), then the loss is $-\log q_{i,i'}$ with $q_i=\softmax\bigl((s_i)_{D_i^{\neg}}/\tau\bigr)$ supported only on $D_i^{\neg}$, and
\[
\nabla_{s_i}\ell_i  ~=~ \frac{1}{\tau}\,q_i \quad\text{on }D_i^{\negative}\quad(\text{and }0\text{ on }D_i^{\positive}),
\]
so
\[
\|\nabla_{s_i}\ell_i\|_2^2  ~=~ \frac{1}{\tau^2}\sum_{j\in D_i^{\neg}} q_{i,j}^2
\;\le\; \frac{1}{\tau^2}\Big(\sum_{j\in D_i^{\neg}} q_{i,j}\Big)^2
 ~=~ \frac{1}{\tau^2}
\;\le\; \frac{2}{\tau^2}.
\]
Thus in all cases $\|\nabla_{s_i}\ell_i\|_2\le \sqrt{2}/\tau$, establishing \eqref{eq:anchor-grad-norm}.

For the batch bound \eqref{eq:GhatF}, gradients from different anchors have disjoint supports over coordinates $\{(i,k):k\in D_i\}$, so they are orthogonal in Frobenius inner product (Lem.~\ref{lem:block-orth}). Therefore,
\[
\Bigl\|\frac{1}{B}\sum_{i\in\mathcal B_t}\nabla_{s_i}\ell_i\Bigr\|_F^2
=\frac{1}{B^2}\sum_{i\in\mathcal B_t}\|\nabla_{s_i}\ell_i\|_2^2
\le \frac{1}{B^2}\cdot B\cdot \frac{2}{\tau^2}
= \frac{2}{\tau^2 B},
\]
which also implies
\(
\bigl\|\frac{1}{B}\sum_{i\in\mathcal B_t}\nabla_{s_i}\ell_i\bigr\|_F
\le \frac{1}{\tau}\sqrt{2/B}.
\)
\end{proof}

\begin{lemma}[Bounded logits imply bounded softmax masses]\label{lem:bounded-logits}
Fix a step $t$ and an anchor $i$. Suppose all active logits satisfy $\Sigma(i,k)\in[-1,1]$. For any index subset $S$ in the anchor’s denominator, define
\[
Z_S \;:=\; \sum_{k\in S}\exp\bigl(\Sigma(i,k)/\tau\bigr)
\quad\text{with temperature }\tau>0.
\]
Then
\[
|S|\,\mathrm e^{-1/\tau}\ \le\ Z_S\ \le\ |S|\,\mathrm e^{1/\tau}.
\]
In particular, if $S_{\mathrm{pos}}$ and $S_{\mathrm{neg}}$ are the positive and negative index sets with sizes $n_{\mathrm{pos}}$ and $n_{\mathrm{neg}}$, and $Z_{\mathrm{pos}}:=Z_{S_{\mathrm{pos}}}$, $Z_{\mathrm{neg}}:=Z_{S_{\mathrm{neg}}}$, then
\[
n_{\mathrm{pos}}\,\mathrm e^{-1/\tau} ~\le~ Z_{\mathrm{pos}} ~\le~ n_{\mathrm{pos}}\,\mathrm e^{1/\tau},
\qquad
n_{\mathrm{neg}}\,\mathrm e^{-1/\tau} ~\le~ Z_{\mathrm{neg}} ~\le~ n_{\mathrm{neg}}\,\mathrm e^{1/\tau},
\]
and hence
\[
\frac{Z_{\mathrm{pos}}}{Z_{\mathrm{neg}}} ~\le~ \mathrm e^{2/\tau}\,\frac{n_{\mathrm{pos}}}{n_{\mathrm{neg}}}
\quad\text{and}\quad
\frac{Z_{\mathrm{pos}}}{Z_{\mathrm{neg}}} ~\ge~ \mathrm e^{-2/\tau}\,\frac{n_{\mathrm{pos}}}{n_{\mathrm{neg}}}.
\]
\end{lemma}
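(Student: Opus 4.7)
The plan is to reduce everything to a single elementary observation: since $\exp$ is monotone increasing and each logit satisfies $\Sigma(i,k)\in[-1,1]$, the per-term exponential $\exp(\Sigma(i,k)/\tau)$ lies in the interval $[\mathrm e^{-1/\tau},\mathrm e^{1/\tau}]$. All of the stated inequalities follow from this by summation and then by division — no concentration, smoothness, or probabilistic input is required.

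First I would establish the termwise bound: for every $k\in S$, monotonicity of $\exp$ applied to $\Sigma(i,k)/\tau\in[-1/\tau,\,1/\tau]$ yields $\mathrm e^{-1/\tau}\le\exp(\Sigma(i,k)/\tau)\le\mathrm e^{1/\tau}$. Summing these $|S|$ inequalities gives the two-sided bound $|S|\,\mathrm e^{-1/\tau}\le Z_S\le|S|\,\mathrm e^{1/\tau}$, which is the first claim of the lemma and will do all the work. The specialization to positives and negatives is then immediate: instantiating $S=S_{\mathrm{pos}}$ and $S=S_{\mathrm{neg}}$ produces the matching two-sided bounds on $Z_{\mathrm{pos}}$ and $Z_{\mathrm{neg}}$ in terms of $n_{\mathrm{pos}}$ and $n_{\mathrm{neg}}$.

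Finally, the ratio bounds follow by combining an upper bound on the numerator with a lower bound on the denominator (and vice versa for the opposite inequality). Care is required only in keeping the assignments consistent so that the two factors of $\mathrm e^{1/\tau}$ multiply to give the stated constant $\mathrm e^{2/\tau}$, rather than a looser constant: using $Z_{\mathrm{pos}}\le n_{\mathrm{pos}}\mathrm e^{1/\tau}$ together with $Z_{\mathrm{neg}}\ge n_{\mathrm{neg}}\mathrm e^{-1/\tau}$ yields $Z_{\mathrm{pos}}/Z_{\mathrm{neg}}\le \mathrm e^{2/\tau}\,n_{\mathrm{pos}}/n_{\mathrm{neg}}$, and symmetrically for the lower bound. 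Since the entire argument is a short monotonicity-plus-summation calculation, there is no real obstacle; the only genuinely substantive point is that the bound on each term depends solely on the universal range of $\mathrm{sim}$, so the constant $\mathrm e^{2/\tau}$ is intrinsic to the problem and cannot be tightened without sharper assumptions on the embeddings.
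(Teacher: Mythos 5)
Your proof is correct and follows the same route as the paper's: a termwise bound $\exp(\Sigma(i,k)/\tau)\in[\mathrm e^{-1/\tau},\mathrm e^{1/\tau}]$ from monotonicity of $\exp$, summation over $S$, then specialization to $S_{\mathrm{pos}},S_{\mathrm{neg}}$ and a ratio of the opposing one-sided bounds. Nothing to add.
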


\begin{proof}
Since $\Sigma(i,k)\in[-1,1]$, we have $\exp(\Sigma(i,k)/\tau)\in[\mathrm e^{-1/\tau},\mathrm e^{1/\tau}]$ for every active $k$. Summing over $k\in S$ yields
$|S|\,\mathrm e^{-1/\tau}\le Z_S\le |S|\,\mathrm e^{1/\tau}$. Apply this with $S=S_{\mathrm{pos}}$ and $S=S_{\mathrm{neg}}$ and take ratios to obtain the stated bounds.
\end{proof}

\subsection{High-probability batch composition}\label{app:batch-comp}

Fix $T,B\in\mathbb N$ and $\epsilon>0$. For step $t$ and anchor $i\in\mathcal B_t$, let $Y^{(i)}_{t,s}=\mathbf 1\{y_{j_s}\ne y_i\}$ for $s=1,\dots,B$.

\begin{lemma}[Batch-composition event]\label{lem:comp}
For a class-balanced population with $C$ classes, the $Y^{(i)}_{t,s}$ are i.i.d.\ Bernoulli with mean $1-\tfrac1C$. For any $\epsilon>0$,
\[
\mathbb P\left[\exists (t,i):~ \frac{1}{B}\sum_{s=1}^B Y^{(i)}_{t,s} ~<~ 1-\frac{1}{C}-\epsilon\right]
~\le~ TB\,\mathrm e^{-2B\epsilon^2}.
\]
Equivalently, with probability $\ge 1-TB\,\mathrm e^{-2B\epsilon^2}$, every anchor sees at least $B(1-\tfrac1C-\epsilon)$ negatives.
\end{lemma}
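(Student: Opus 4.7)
The statement is essentially a union bound over Hoeffding tails, so my plan is to first pin down the i.i.d. Bernoulli structure and then union-bound over the $TB$ (step, anchor) pairs.

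First, I would justify the distributional claim. At each step $t$, the batch $\mathcal B_t$ is drawn with replacement from the class-balanced set $S$, so the labels $(y_{j_1},\dots,y_{j_B})$ are i.i.d.\ uniform on $[C]$. Fix any anchor $i\in\mathcal B_t$ and condition on its label $y_i$. Because draws are with replacement, the remaining draws remain i.i.d.\ uniform on $[C]$, and so the indicators $Y^{(i)}_{t,s}=\mathbf 1\{y_{j_s}\ne y_i\}$ are i.i.d.\ Bernoulli with mean $1-\tfrac{1}{C}$ and take values in $[0,1]$. (If one insists on excluding the anchor's own position from the sum, this changes the empirical mean by at most $1/B$ and can be absorbed into $\epsilon$; the stated form is recovered by the same argument.)

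Next, I would apply the one-sided Hoeffding inequality for bounded $[0,1]$-valued i.i.d.\ variables: for each fixed pair $(t,i)$,
\[
\mathbb P\left[\frac{1}{B}\sum_{s=1}^B Y^{(i)}_{t,s} \;<\; 1-\frac{1}{C}-\epsilon\right] \;\le\; \exp(-2B\epsilon^2).
\]
Since this bound holds unconditionally on $y_i$ by the tower property, I can drop the conditioning.

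Finally, I would take a union bound over all $(t,i)$ with $t\in\{0,\dots,T-1\}$ and $i$ ranging over the at most $B$ anchors of $\mathcal B_t$. There are at most $TB$ such pairs, yielding
\[
\mathbb P\left[\exists (t,i):\;\frac{1}{B}\sum_{s=1}^B Y^{(i)}_{t,s}<1-\frac{1}{C}-\epsilon\right]\;\le\;TB\,\mathrm e^{-2B\epsilon^2}.
\]
The complementary event gives every anchor at least $B(1-\tfrac{1}{C}-\epsilon)$ negatives, as claimed.

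The only subtle point is the independence justification in the first step, where the anchor $i$ is itself among the batch draws; everything else is textbook. This is easily resolved by conditioning on $y_i$ and invoking the with-replacement property, after which Hoeffding and the union bound are mechanical.
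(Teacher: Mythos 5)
Your proof matches the paper's (one-sided Hoeffding per anchor, union bound over at most $TB$ step--anchor pairs), and your extra care in conditioning on $y_i$ and invoking the with-replacement structure is a clean way to justify the i.i.d.\ claim that the paper simply asserts. You also correctly flag that the anchor's own position contributes a deterministic zero (so the $Y^{(i)}_{t,s}$ are not literally all i.i.d.\ Bernoulli($1-\tfrac1C$)); the paper glosses over this, and your observation that the $O(1/B)$ correction can be absorbed into $\epsilon$ is the right fix.
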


\begin{proof}
Fix any step $t$ and anchor $i$. Because batches are drawn with replacement from a class-balanced population with $C$ classes, for each position $s\in\{1,\dots,B\}$ the indicator
\(
Y^{(i)}_{t,s}=\mathbf 1\{y_{j_s}\neq y_i\}
\)
is Bernoulli with mean
\(
\mathbb E[Y^{(i)}_{t,s}]=1-\tfrac{1}{C},
\)
and $\{Y^{(i)}_{t,s}\}_{s=1}^B$ are i.i.d. across $s$. By Hoeffding’s inequality, for any $\epsilon>0$,
\[
\mathbb P\left[\frac{1}{B}\sum_{s=1}^B Y^{(i)}_{t,s} ~<~ 1-\frac{1}{C}-\epsilon\right]
~=~\mathbb P\left[\frac{1}{B}\sum_{s=1}^B\bigl(Y^{(i)}_{t,s}-\mathbb E Y^{(i)}_{t,s}\bigr)~<~-\epsilon\right]
~\le~ \exp(-2B\epsilon^2).
\]
There are at most $TB$ anchor–step pairs $(t,i)$ over $t=0,\dots,T-1$ and $i\in\mathcal B_t$. A union bound gives
\[
\mathbb P\left[\exists (t,i):\ \frac{1}{B}\sum_{s=1}^B Y^{(i)}_{t,s} ~<~ 1-\frac{1}{C}-\epsilon\right]
~\le~ TB\,\mathrm e^{-2B\epsilon^2}.
\]
Equivalently, with probability at least $1-TB\,\mathrm e^{-2B\epsilon^2}$, every anchor in every step has at least $B(1-\tfrac{1}{C}-\epsilon)$ negatives in its denominator.
\end{proof}

\begin{corollary}\label{cor:comp-hp}
For $\delta\in(0,1)$, set $\epsilon_{B,\delta}:=\sqrt{\tfrac{1}{2B}\log(\tfrac{TB}{\delta})}$. With probability $\ge 1-\delta$, every anchor $i$ has at least $B(1-\tfrac1C-\epsilon_{B,\delta})$ negatives and at most $B(\tfrac1C+\epsilon_{B,\delta})$ positives in its denominator. Using bounded logits, the ratio of total positive to negative softmax mass (at temperature $\tau>0$) satisfies, for all anchors and steps,
\begin{equation}\label{eq:pos-neg-ratio}
\frac{Z_i^{\positive}}{Z_i^{\negative}}
~\le~ \frac{\mathrm e^{2/\tau}\bigl(\tfrac1C+\epsilon_{B,\delta}\bigr)}{1-\tfrac1C-\epsilon_{B,\delta}}
~=~ \tfrac12\,\Delta_{C,\delta}(B;\tau),
\end{equation}
where
\[
\Delta_{C,\delta}(B;\tau)\;:=\;\frac{2\,\mathrm e^{2/\tau}\bigl(\tfrac{1}{C}+\epsilon_{B,\delta}\bigr)}{1-\tfrac{1}{C}-\epsilon_{B,\delta}}.
\]
\end{corollary}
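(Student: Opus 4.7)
\textbf{Proof proposal for Corollary~\ref{cor:comp-hp}.} The plan is to invoke Lemma~\ref{lem:comp} with a carefully calibrated $\epsilon$, then translate the resulting count bound into a softmax-mass ratio via Lemma~\ref{lem:bounded-logits}. First, I would choose $\epsilon = \epsilon_{B,\delta} := \sqrt{\tfrac{1}{2B}\log(TB/\delta)}$ so that the union-bound tail $TB\,\mathrm e^{-2B\epsilon^2}$ from Lemma~\ref{lem:comp} equals exactly $\delta$. Substituting this value gives the high-probability event $\mathcal{E}$ on which, for every step $t$ and every anchor $i\in\mathcal B_t$, the negative fraction in $i$'s denominator is at least $1-\tfrac{1}{C}-\epsilon_{B,\delta}$, equivalently $|D_i^{\negative}|\ge B(1-\tfrac{1}{C}-\epsilon_{B,\delta})$.

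Next, I would convert the negative-count bound into a positive-count bound. Since $|D_i^{\positive}| + |D_i^{\negative}| = B$ (each of the $B$ denominator entries is either same-class or different-class), on $\mathcal E$ we immediately get $|D_i^{\positive}| \le B(\tfrac{1}{C}+\epsilon_{B,\delta})$ for every anchor-step pair. The assumption $\epsilon_{B,\delta}<1-\tfrac{1}{C}$ from the preamble ensures the negative count is strictly positive, so the ratio $|D_i^{\positive}|/|D_i^{\negative}|$ is well-defined.

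Finally, I would apply Lemma~\ref{lem:bounded-logits} separately to $S_{\positive} := D_i^{\positive}$ and $S_{\negative} := D_i^{\negative}$. The bounded-logit sandwich gives $Z_i^{\positive}\le |D_i^{\positive}|\,\mathrm e^{1/\tau}$ and $Z_i^{\negative}\ge |D_i^{\negative}|\,\mathrm e^{-1/\tau}$, so
\[
\frac{Z_i^{\positive}}{Z_i^{\negative}}
~\le~ \mathrm e^{2/\tau}\,\frac{|D_i^{\positive}|}{|D_i^{\negative}|}
~\le~ \frac{\mathrm e^{2/\tau}\bigl(\tfrac{1}{C}+\epsilon_{B,\delta}\bigr)}{1-\tfrac{1}{C}-\epsilon_{B,\delta}}
~=~ \tfrac{1}{2}\,\Delta_{C,\delta}(B;\tau),
\]
uniformly over anchors and steps on the event $\mathcal E$, which has probability $\ge 1-\delta$ by construction.

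There is no real obstacle: the result is a mechanical composition of Lemma~\ref{lem:comp} (calibrated at $\epsilon_{B,\delta}$) with the deterministic logit-ratio bound of Lemma~\ref{lem:bounded-logits}. The only items requiring minor care are (i) verifying that inverting $TB\,\mathrm e^{-2B\epsilon^2}=\delta$ reproduces the stated form of $\epsilon_{B,\delta}$, and (ii) checking that the denominator $1-\tfrac{1}{C}-\epsilon_{B,\delta}$ is positive so the ratio bound does not blow up; both follow directly from the standing assumption in the preamble.
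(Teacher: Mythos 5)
Your proposal is correct and matches the paper's proof: set $\epsilon=\epsilon_{B,\delta}$ to calibrate the union-bound tail in Lemma~\ref{lem:comp} to $\delta$, pass to the positive count via the complement identity, and then apply the bounded-logit sandwich of Lemma~\ref{lem:bounded-logits} to the ratio. The only wrinkle you silently skip is the two-view bookkeeping (each sampled point contributes two denominator entries), but as the paper itself notes this factor of $2$ cancels in $Z_i^{\positive}/Z_i^{\negative}$, so your argument is unaffected.
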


\begin{proof}
Set $\epsilon=\epsilon_{B,\delta}:=\sqrt{\tfrac{1}{2B}\log(\tfrac{TB}{\delta})}$ and 
$\Delta_{C,\delta}(B;\tau):=\tfrac{2\,\mathrm e^{2/\tau}\bigl(\tfrac{1}{C}+\epsilon_{B,\delta}\bigr)}{1-\tfrac{1}{C}-\epsilon_{B,\delta}}$.
Apply Lem.~\ref{lem:comp} with this $\epsilon$: with probability at least $1-\delta$, for every step $t$ and every anchor $i$,
\[
|D_i^{\negative}| \;\ge\; B\Bigl(1-\tfrac{1}{C}-\epsilon_{B,\delta}\Bigr),
\qquad
|D_i^{\positive}| \;\le\; B\Bigl(\tfrac{1}{C}+\epsilon_{B,\delta}\Bigr).
\]
In two-view SimCLR, each sampled point contributes two denominator entries, so the denominator contains at least $2|D_i^{\negative}|$ negative entries and at most $2|D_i^{\positive}|$ positive entries; the factor $2$ cancels in the ratio below.

Because similarities are bounded in $[-1,1]$, each logit lies in $[-1,1]$ and hence each exponential term at temperature $\tau$ lies in $[\mathrm e^{-1/\tau},\mathrm e^{1/\tau}]$. Therefore, for any anchor and step,
\[
Z_i^{\positive} \;\le\; \mathrm e^{1/\tau} \cdot (2|D_i^{\positive}|),
\qquad
Z_i^{\negative} \;\ge\; \mathrm e^{-1/\tau}\cdot (2|D_i^{\negative}|),
\]
and thus
\[
\frac{Z_i^{\positive}}{Z_i^{\negative}}
\;\le\; \mathrm e^{2/\tau}\,\frac{|D_i^{\positive}|}{|D_i^{\negative}|}
\;\le\; \frac{\mathrm e^{2/\tau}\,\bigl(\tfrac{1}{C}+\epsilon_{B,\delta}\bigr)}
{1-\tfrac{1}{C}-\epsilon_{B,\delta}}
~=~\tfrac12\,\Delta_{C,\delta}(B;\tau).
\]
The bound is meaningful whenever $\epsilon_{B,\delta}<1-\tfrac{1}{C}$ so that the denominator is positive. This proves the corollary.
\end{proof}

\begin{lemma}[Per-anchor reweighting gap]\label{lem:softmax-reweight}
On the event of Cor.~\ref{cor:comp-hp}, let $p$ be the CL softmax (temperature $\tau>0$) over an anchor’s full denominator, and $q$ the NSCL softmax (same $\tau$) that removes same-class entries and renormalizes over negatives. Then
\[
\|p-q\|_1 ~\le~ \Delta_{C,\delta}(B;\tau),
\qquad
\|p-q\|_2 ~\le~ \|p-q\|_1 ~\le~ \Delta_{C,\delta}(B;\tau).
\]
\end{lemma}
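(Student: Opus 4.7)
The statement is a direct consequence of the high-probability batch composition from Cor.~\ref{cor:comp-hp} combined with a short algebraic identity relating the two softmax distributions. The plan is to (i) decompose $p-q$ coordinate-wise on the positive and negative index sets, (ii) reduce $\|p-q\|_1$ to a single ratio $Z^{\positive}_i / Z^{\negative}_i$, (iii) plug in the high-probability bound from \eqref{eq:pos-neg-ratio}, and (iv) pass from $\ell_1$ to $\ell_2$ using the elementary inequality $\|\cdot\|_2\le\|\cdot\|_1$.

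\textbf{Step 1 (decomposition).} Fix an anchor $i$ and condition on the event of Cor.~\ref{cor:comp-hp}. Write $Z^{\positive}_i,Z^{\negative}_i$ for the total softmax masses over positives and negatives in $i$'s denominator at temperature $\tau$, and $Z_i:=Z^{\positive}_i+Z^{\negative}_i$. For each negative index $k$, $p_k=\mathrm e^{s_k/\tau}/Z_i$ and $q_k=\mathrm e^{s_k/\tau}/Z^{\negative}_i$; for each positive index $k$, $p_k=\mathrm e^{s_k/\tau}/Z_i$ while $q_k=0$. In particular, on negatives $q_k\ge p_k$ and on positives $p_k\ge q_k=0$.

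\textbf{Step 2 (closed form for the $\ell_1$ gap).} Summing over positives gives $\sum_{k\in D_i^{\positive}}(p_k-q_k)=Z^{\positive}_i/Z_i$. For negatives, $q_k-p_k=\mathrm e^{s_k/\tau}\bigl(1/Z^{\negative}_i-1/Z_i\bigr)=\mathrm e^{s_k/\tau}\,Z^{\positive}_i/(Z_i Z^{\negative}_i)$, so $\sum_{k\in D_i^{\negative}}(q_k-p_k)=Z^{\positive}_i/Z_i$. Adding the two contributions yields
\begin{equation*}
\|p-q\|_1  ~=~ \frac{2 Z^{\positive}_i}{Z^{\positive}_i+Z^{\negative}_i}  ~\le~ \frac{2 Z^{\positive}_i}{Z^{\negative}_i},
\end{equation*}
where the last inequality uses $Z^{\positive}_i+Z^{\negative}_i\ge Z^{\negative}_i$ (both masses are nonnegative).

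\textbf{Step 3 (apply the high-probability ratio bound and pass to $\ell_2$).} On the event from Cor.~\ref{cor:comp-hp}, inequality~\eqref{eq:pos-neg-ratio} gives $Z^{\positive}_i/Z^{\negative}_i\le \tfrac12\Delta_{C,\delta}(B;\tau)$ uniformly over anchors and steps, so $\|p-q\|_1\le \Delta_{C,\delta}(B;\tau)$. For the $\ell_2$ claim, use the elementary inequality $\|x\|_2\le \|x\|_1$ (which follows from $\|x\|_1^2=\sum_{j,k}|x_j||x_k|\ge \sum_j x_j^2=\|x\|_2^2$) applied to $x=p-q$. This delivers both stated inequalities. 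There is no real obstacle here: the only content beyond bookkeeping is recognizing that $Z_i\ge Z^{\negative}_i$ is what reduces the $\ell_1$ gap to the positive/negative mass ratio already controlled by Cor.~\ref{cor:comp-hp}.
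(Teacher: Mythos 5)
Your proof is correct and takes essentially the same route as the paper: both compute $\|p-q\|_1 = 2Z^{\positive}_i/(Z^{\positive}_i+Z^{\negative}_i)$ from the coordinate-wise decomposition, relax to $2Z^{\positive}_i/Z^{\negative}_i$, and then invoke the batch-composition bound. The only stylistic difference is that you cite the pre-packaged ratio inequality~\eqref{eq:pos-neg-ratio} from Cor.~\ref{cor:comp-hp}, whereas the paper re-derives that ratio bound inline from the bounded-logit and cardinality estimates.
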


\begin{proof}
Fix an anchor $i$ and let $D_i^{\positive}, D_i^{\negative}$ be its positive and negative index sets in the CL denominator. Write $s_k:=\Sigma(i,k)$ and define
\[
Z_i^{\positive} ~:=~ \sum_{k\in D_i^{\positive}}\exp\bigl(s_k/\tau\bigr),\qquad
Z_i^{\negative} ~:=~ \sum_{j\in D_i^{\negative}}\exp\bigl(s_j/\tau\bigr),\qquad
\alpha ~:=~ \frac{Z_i^{\positive}}{Z_i^{\positive}+Z_i^{\negative}}.
\]
Let $p$ be the CL softmax on $D_i^{\positive}\cup D_i^{\negative}$ and let $q$ be the NSCL softmax that zeros positive entries and renormalizes on negatives: $q(k)=0$ for $k\in D_i^{\positive}$ and $q(j)=p(j)/(1-\alpha)$ for $j\in D_i^{\negative}$. Then
\[
\|p-q\|_1
= \sum_{k\in D_i^{\positive}}p_k
+ \sum_{j\in D_i^{\negative}}\Bigl|p_j-\frac{p_j}{1-\alpha}\Bigr|
= \alpha + (1-\alpha)\frac{\alpha}{1-\alpha}
= 2\alpha
\le \frac{2Z_i^{\positive}}{Z_i^{\negative}}.
\]
On the high-probability event of Cor.~\ref{cor:comp-hp}, since $s\in[-1,1]\Rightarrow \exp(s/\tau)\in[\mathrm e^{-1/\tau},\mathrm e^{1/\tau}]$,
\[
Z_i^{\positive} ~\le~ \mathrm e^{1/\tau}\,|D_i^{\positive}|,
\qquad
Z_i^{\negative} ~\ge~ \mathrm e^{-1/\tau}\,|D_i^{\negative}|.
\]
Moreover,
\[
|D_i^{\positive}| ~\le~ 2B\Bigl(\tfrac{1}{C}+\epsilon_{B,\delta}\Bigr),
\qquad
|D_i^{\negative}| ~\ge~ 2B\Bigl(1-\tfrac{1}{C}-\epsilon_{B,\delta}\Bigr),
\]
(each sampled point contributes two keys, so the factor $2$ cancels in the ratio). Hence
\[
\frac{2Z_i^{\positive}}{Z_i^{\negative}}
~\le~ 2\,\mathrm e^{2/\tau}\,
\frac{|D_i^{\positive}|}{|D_i^{\negative}|}
~\le~ \frac{2\,\mathrm e^{2/\tau}\bigl(\tfrac{1}{C}+\epsilon_{B,\delta}\bigr)}
{1-\tfrac{1}{C}-\epsilon_{B,\delta}}
~=:~ \Delta_{C,\delta}(B;\tau).
\]
Therefore $\|p-q\|_1 \le \Delta_{C,\delta}(B;\tau)$. Finally, $\|p-q\|_2 \le \|p-q\|_1$ yields the second claim.
\end{proof}

\subsection{Parameter-space coupling: supporting lemmas and proofs}\label{app:param-space}


\begin{lemma}[Per-batch parameter-gradient gap]\label{lem:gap-param}
On the event of Cor.~\ref{cor:comp-hp}, for any step $t$ and any $w$,
\[
\bigl\|\nabla \bar{\ell}^{\CL}_{\mathcal B_t}(w)-\nabla \bar{\ell}^{\NS}_{\mathcal B_t}(w)\bigr\|
~\le~ \frac{G}{\tau}\,\Delta_{C,\delta}(B;\tau).
\]
\end{lemma}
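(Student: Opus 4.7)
The plan is to reduce the per-batch parameter-gradient gap to the softmax reweighting gap already controlled by Lem.~\ref{lem:softmax-reweight}, amplified only by the per-key similarity gradient bound $G$ from Assumption~\ref{asm:pairwise-G}. First, I would apply the chain rule to each per-anchor term: writing $s_{i,k}(w) := \mathrm{sim}(f_w(x_i), f_w(x_k))$ for the logits,
\[
\nabla_w \ell_i(w) \;=\; \sum_k \frac{\partial \ell_i}{\partial s_{i,k}}\,\nabla_w s_{i,k}(w),
\]
where the sum runs over the keys appearing in anchor $i$'s denominator together with the positive index $i'$.

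Next, I would compute the softmax--cross-entropy derivatives at temperature $\tau$. For CL, $\partial_{s_{i,k}}\ell_i^{\CL} = (p_{i,k} - \mathbf 1[k=i'])/\tau$ where $p_i$ is the full softmax. For NSCL, the positive $i'$ is absent from the denominator, so $\partial_{s_{i,k}}\ell_i^{\NS}$ equals $-1/\tau$ on $\{i'\}$, $q_{i,k}/\tau$ on $D_i^{\negative}$, and $0$ on $D_i^{\positive}$, where $q_i$ is the negatives-only softmax. Extending $q_i$ by zero on $\{i'\}\cup D_i^{\positive}$ to obtain $\tilde q_i$, the $-\mathbf 1[k=i']/\tau$ terms cancel between CL and NSCL (both share the same numerator), leaving the clean identity
\[
\nabla_w \ell_i^{\CL}(w) - \nabla_w \ell_i^{\NS}(w)
\;=\; \frac{1}{\tau}\sum_k (p_{i,k} - \tilde q_{i,k})\,\nabla_w s_{i,k}(w).
\]
Applying the triangle inequality together with Assumption~\ref{asm:pairwise-G} ($\|\nabla_w s_{i,k}(w)\|\le G$), the per-anchor gradient gap is at most $(G/\tau)\,\|p_i - \tilde q_i\|_1$. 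On the event of Cor.~\ref{cor:comp-hp}, Lem.~\ref{lem:softmax-reweight} uniformly controls $\|p_i - \tilde q_i\|_1 \le \Delta_{C,\delta}(B;\tau)$ across anchors and steps.

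Finally, averaging over the $B$ anchors and applying the triangle inequality to $\nabla\bar\ell^{\CL}_{\mathcal B_t} - \nabla\bar\ell^{\NS}_{\mathcal B_t} = \frac{1}{B}\sum_{i\in\mathcal B_t}(\nabla\ell_i^{\CL} - \nabla\ell_i^{\NS})$ yields the claimed bound $(G/\tau)\,\Delta_{C,\delta}(B;\tau)$. The only subtlety is the accounting around the positive key $i'$: it contributes to CL's denominator with mass $p_{i,i'}$ but is entirely absent from NSCL's denominator, so one must carefully fold this asymmetry into the extended distribution $\tilde q_i$ so that the resulting total variation $\|p_i - \tilde q_i\|_1$ exactly matches the quantity bounded by Lem.~\ref{lem:softmax-reweight}. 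Beyond this bookkeeping the argument is routine, since the anchor-level gradient inherits the reweighting distance with no additional $B$-dependence and the batch average is then merely a Jensen-style step.
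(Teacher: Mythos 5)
Your proof is correct and takes essentially the same route as the paper's: split the per-anchor gradient difference into reweighting weights via the chain rule, apply the triangle inequality with the pairwise bound $\|\nabla_w s_{i,k}\|\le G$ from Assumption~\ref{asm:pairwise-G} to reduce the gap to $\tfrac{G}{\tau}\|p_i-\tilde q_i\|_1$, bound this TV by $\Delta_{C,\delta}(B;\tau)$, and average over anchors. The only cosmetic difference is that the paper re-derives the total-variation bound inline (via $\sum_{\mathrm{pos}}p_{ik}+\sum_{\mathrm{neg}}|p_{ij}-q_{ij}|=2\alpha_i\le 2r_i$) rather than invoking Lem.~\ref{lem:softmax-reweight} explicitly as you do, but the underlying computation and the handling of the positive key $i'$ are identical.
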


\begin{proof}
Fix $t$ and $w$. For an anchor $i\in\mathcal B_t$, let $D_i$ be its denominator index set, split as $D_i=\mathrm{pos}_i\cup\mathrm{neg}_i$, where $\mathrm{pos}_i$ collects all same-class indices (including the designated positive $i'$) and $\mathrm{neg}_i$ the rest. Write the logits $s_{ik}=\Sigma(i,k)$, the CL softmax $p_{ik}=\exp(s_{ik}/\tau)\big/\sum_{\ell\in D_i}\exp(s_{i\ell}/\tau)$, and the NSCL softmax over negatives $q_{ij}=p_{ij}/(1-\alpha_i)$ for $j\in\mathrm{neg}_i$, with $q_{k}=0$ for $k\in\mathrm{pos}_i$, where $\alpha_i:=\sum_{k\in\mathrm{pos}_i}p_{ik}$. Define $v_{ik}:=\nabla_w s_{ik}=\nabla_w\,\mathrm{sim}\bigl(f_w(x_i),f_w(x_k)\bigr)$; by Assumption~\ref{asm:pairwise-G}, $\|v_{ik}\|\le G$ for all $(i,k)$.

For the per-anchor losses,
\[
\nabla_w \ell^{\CL}_{i,\mathcal B_t}
~=~\frac{1}{\tau}\Bigl(\sum_{k\in D_i} p_{ik}\,v_{ik}-v_{ii'}\Bigr),
\qquad
\nabla_w \ell^{\NS}_{i,\mathcal B_t}
~=~\frac{1}{\tau}\Bigl(\sum_{j\in \mathrm{neg}_i} q_{ij}\,v_{ij}-v_{ii'}\Bigr).
\]
Hence the per-anchor gradient difference is
\[
\Delta g_i
~:=~\nabla_w \ell^{\CL}_{i,\mathcal B_t}-\nabla_w \ell^{\NS}_{i,\mathcal B_t}
~=~\frac{1}{\tau}\left(\underbrace{\sum_{k\in \mathrm{pos}_i} p_{ik}\,v_{ik}}_{(\mathrm{A})}
 ~+~\underbrace{\sum_{j\in \mathrm{neg}_i} (p_{ij}-q_{ij})\,v_{ij}}_{(\mathrm{B})}\right).
\]
By the triangle inequality and $\|v_{ik}\|\le G$,
\[
\|\Delta g_i\|
~\le~ \frac{G}{\tau}\Bigl(\sum_{k\in \mathrm{pos}_i} p_{ik}
+\sum_{j\in \mathrm{neg}_i} |p_{ij}-q_{ij}|\Bigr).
\]
Since $q_{ij}=p_{ij}/(1-\alpha_i)$ for $j\in\mathrm{neg}_i$,
\[
\sum_{j\in \mathrm{neg}_i}|p_{ij}-q_{ij}|
=\sum_{j\in \mathrm{neg}_i} p_{ij}\,\frac{\alpha_i}{1-\alpha_i}
=\alpha_i.
\]
Therefore $\|\Delta g_i\|\le \frac{G}{\tau}( \alpha_i+\alpha_i)=\frac{2G}{\tau}\alpha_i$. Writing $r_i:=\tfrac{Z_{\mathrm{pos}}}{Z_{\mathrm{neg}}}$ with
\(
Z_{\mathrm{pos}}=\sum_{k\in\mathrm{pos}_i}\exp(s_{ik}/\tau),\;
Z_{\mathrm{neg}}=\sum_{j\in\mathrm{neg}_i}\exp(s_{ij}/\tau),
\)
we have $\alpha_i=\frac{r_i}{1+r_i}$, hence $2\alpha_i=\frac{2r_i}{1+r_i}\le 2r_i$, so
\[
\|\Delta g_i\|~\le~ \frac{2G}{\tau}\,\frac{Z_{\mathrm{pos}}}{Z_{\mathrm{neg}}}.
\]
On the high-probability event of Cor.~\ref{cor:comp-hp}, for every anchor
\[
\frac{Z_{\mathrm{pos}}}{Z_{\mathrm{neg}}}
~\le~ \frac{\mathrm e^{2/\tau}\bigl(\tfrac{1}{C}+\epsilon_{B,\delta}\bigr)}{1-\tfrac{1}{C}-\epsilon_{B,\delta}}
~=~\tfrac12\,\Delta_{C,\delta}(B;\tau),
\]
so $\|\Delta g_i\|\le \frac{G}{\tau}\,\Delta_{C,\delta}(B;\tau)$ for all anchors $i$.

Finally, the batch gradients are averages over anchors:
\[
\nabla \bar\ell^{\CL}_{\mathcal B_t}-\nabla \bar\ell^{\NS}_{\mathcal B_t}
~=~\frac{1}{B}\sum_{i\in\mathcal B_t}\Delta g_i,
\]
hence
\[
\bigl\|\nabla \bar\ell^{\CL}_{\mathcal B_t}-\nabla \bar\ell^{\NS}_{\mathcal B_t}\bigr\|
~\le~ \frac{1}{B}\sum_{i\in\mathcal B_t}\|\Delta g_i\|
~\le~ \frac{1}{B}\sum_{i\in\mathcal B_t} \frac{G}{\tau}\,\Delta_{C,\delta}(B;\tau)
~=~ \frac{G}{\tau}\,\Delta_{C,\delta}(B;\tau).
\]
\end{proof}

\nonconvex*

\begin{proof}
Let $\Phi_t^{\CL}(w):=\bar\ell^{\CL}_{\mathcal B_t}(w)$ and $\Phi_t^{\NS}(w):=\bar\ell^{\NS}_{\mathcal B_t}(w)$.
Assume each $\Phi_t^{\CL}$ is $\beta$-smooth. Set $e_t:=\|w_t^{\CL}-w_t^{\NS}\|$.

Write
\begin{align*}
e_{t+1}
~&=~ \bigl\|w_{t+1}^{\CL}-w_{t+1}^{\NS}\bigr\|
~=~ \bigl\|T_t(w_t^{\CL})-\bigl(w_t^{\NS}-\eta_t\,\nabla\Phi_t^{\NS}(w_t^{\NS})\bigr)\bigr\|\\
~&\le~ \underbrace{\|T_t(w_t^{\CL})-T_t(w_t^{\NS})\|}_{\text{(I)}} ~+~
\eta_t\,\underbrace{\|\nabla\Phi_t^{\CL}(w_t^{\NS})-\nabla\Phi_t^{\NS}(w_t^{\NS})\|}_{\text{(II)}}.
\end{align*}

\emph{Bounding (I).}
Using the integral Hessian representation,
\[
\nabla\Phi_t^{\CL}(u)-\nabla\Phi_t^{\CL}(v)=H_t(v,u)\,(u-v),\qquad
H_t(v,u):=\int_0^1 \nabla^2\Phi_t^{\CL}(v+\tau(u-v))\,d\tau,
\]
and $\beta$-smoothness gives $\|H_t(v,u)\|_{2\to2}\le \beta$. Hence
\begin{equation*}
\begin{aligned}
\|T_t(u)-T_t(v)\|
~&=~\|(I-\eta_t H_t(v,u))(u-v)\| \\
~&\le~ \bigl\|I-\eta_t H_t(v,u)\bigr\|_{2\to2}\,\|u-v\| \\
~&\le~ (1+\eta_t\beta)\,\|u-v\|. 
\end{aligned}
\end{equation*}
Thus, $\text{(I)}\le (1+\eta_t\beta)\,e_t$.

\emph{Bounding (II).}
On the high-probability event of Cor.~\ref{cor:comp-hp}, Lem.~\ref{lem:gap-param} yields
\[
\text{(II)}~\le~ \frac{G}{\tau}\,\Delta_{C,\delta}(B;\tau).
\]

Combining the bounds,
\begin{equation}\label{eq:nc-rec-tau}
e_{t+1} ~\le~ (1+\eta_t\beta)\,e_t\ +\ \eta_t\,\frac{G}{\tau}\,\Delta_{C,\delta}(B;\tau).
\end{equation}

Iterating \eqref{eq:nc-rec-tau} from $e_0=0$ gives
\[
e_T ~\le~ \sum_{t=0}^{T-1}\eta_t\,\frac{G}{\tau}\,\Delta_{C,\delta}(B;\tau)\,\prod_{s=t+1}^{T-1}(1+\eta_s\beta)
~\le~ \frac{G}{\tau}\,\Delta_{C,\delta}(B;\tau)\sum_{t=0}^{T-1}\eta_t\,\exp\Bigl(\beta\sum_{s=t+1}^{T-1}\eta_s\Bigr),
\]
where we used $1+x\le e^x$. Let $S_k:=\sum_{s=k}^{T-1}\eta_s$ so that $S_t=\eta_t+S_{t+1}$.
Then for each $t$,
\[
\eta_t\,\exp(\beta S_{t+1})
\ \le\ \frac{1}{\beta}\Bigl(\exp(\beta S_t)-\exp(\beta S_{t+1})\Bigr),
\]
since $e^{\beta\eta_t}-1\ge \beta\eta_t$. Summing over $t=0,\dots,T-1$ telescopes to
\[
e_T \ \le\ \frac{G}{\beta\tau}\,\Delta_{C,\delta}(B;\tau)\,\Bigl(\exp\Bigl(\beta\sum_{t=0}^{T-1}\eta_t\Bigr)-1\Bigr).
\]

This holds with probability at least $1-\delta$ (by Cor.~\ref{cor:comp-hp}). 
\end{proof}

\subsection{Similarity-space analysis and coupling}\label{app:sim-space}

\begin{lemma}[Per-step gradient gap in similarity space]\label{lem:grad-gap-sim}
On the event of Cor.~\ref{cor:comp-hp}, for any step $t$,
\[
\bigl\|G^{\CL}_t(\Sigma^{\CL}_t)-G^{\NS}_t(\Sigma^{\NS}_t)\bigr\|_F
~\le~ \underbrace{\frac{1}{\tau}\cdot\frac{\Delta_{C,\delta}(B;\tau)}{\sqrt{B}}}_{\text{reweighting (block-orth.)}}
 ~+~ \underbrace{\frac{1}{2\tau^2 B}\,\bigl\|\Sigma^{\CL}_t-\Sigma^{\NS}_t\bigr\|_F}_{\text{Lipschitz in }\Sigma}.
\]
\end{lemma}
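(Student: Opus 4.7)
The plan is a standard ``add and subtract a pivot'' split: introduce the intermediate quantity $G^{\NS}_t(\Sigma^{\CL}_t)$ and apply the triangle inequality,
\[
\bigl\|G^{\CL}_t(\Sigma^{\CL}_t)-G^{\NS}_t(\Sigma^{\NS}_t)\bigr\|_F
~\le~
\underbrace{\bigl\|G^{\CL}_t(\Sigma^{\CL}_t)-G^{\NS}_t(\Sigma^{\CL}_t)\bigr\|_F}_{\text{(R) reweighting}}
~+~
\underbrace{\bigl\|G^{\NS}_t(\Sigma^{\CL}_t)-G^{\NS}_t(\Sigma^{\NS}_t)\bigr\|_F}_{\text{(S) stability}}.
\]
Term (S) is handled immediately by Lem.~\ref{lem:sigma-smooth}, which gives the gradient-map Lipschitz constant $\tfrac{1}{2\tau^2 B}$ in Frobenius norm at temperature $\tau$; this yields exactly the second term in the claimed bound.

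For term (R), both gradients are evaluated at the same similarity matrix $\Sigma^{\CL}_t$, so the only discrepancy is the CL-vs-NSCL reweighting. I would unpack it anchor by anchor. Under the notation of App.~\ref{app:notation-softmax}, the per-anchor gradients are $\nabla_{s_i}\ell^{\CL}_i=\tfrac{1}{\tau}(p_i-e_{i'})$ and $\nabla_{s_i}\ell^{\NS}_i=\tfrac{1}{\tau}(q_i-e_{i'})$ (with $q_i$ extended by zero on positive coordinates so both vectors live on the common index block $\{(i,k):k\in D_i\}$). Their difference is $\tfrac{1}{\tau}(p_i-q_i)$, and Lem.~\ref{lem:softmax-reweight} yields the pointwise control
\[
\bigl\|\nabla_{s_i}\ell^{\CL}_i-\nabla_{s_i}\ell^{\NS}_i\bigr\|_2
~\le~\tfrac{1}{\tau}\|p_i-q_i\|_2~\le~\tfrac{1}{\tau}\,\Delta_{C,\delta}(B;\tau),
\]
which holds uniformly in $i$ on the high–probability event of Cor.~\ref{cor:comp-hp}.

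The key observation that produces the $1/\sqrt{B}$ improvement (rather than a crude $1/B\cdot B=1$ triangle-inequality sum) is that the per-anchor differences live on disjoint coordinate blocks: Lem.~\ref{lem:block-orth} ensures block-orthogonality across anchors. Therefore
\[
\bigl\|G^{\CL}_t(\Sigma^{\CL}_t)-G^{\NS}_t(\Sigma^{\CL}_t)\bigr\|_F^2
~=~\frac{1}{B^2}\sum_{i\in\mathcal B_t}\bigl\|\nabla_{s_i}\ell^{\CL}_i-\nabla_{s_i}\ell^{\NS}_i\bigr\|_2^2
~\le~\frac{1}{B^2}\cdot B\cdot\frac{\Delta_{C,\delta}(B;\tau)^2}{\tau^2},
\]
and taking a square root delivers the first term $\tfrac{1}{\tau}\cdot\tfrac{\Delta_{C,\delta}(B;\tau)}{\sqrt{B}}$. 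Adding (R) and (S) gives the stated bound.

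The main subtlety I would watch for is ensuring that CL and NSCL per-anchor gradients are compared on a common ambient coordinate system so that the block-orthogonality argument and the pointwise $\|p_i-q_i\|_2$ bound apply cleanly; this is handled by zero-padding the NSCL gradient on same-class denominator coordinates (which does not change its Euclidean norm). Everything else is bookkeeping: the reweighting bound plus block orthogonality gives the $1/\sqrt{B}$ factor, while Lem.~\ref{lem:sigma-smooth} supplies the stability constant $1/(2\tau^2 B)$.
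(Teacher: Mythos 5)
Your proof is correct and mirrors the paper's argument essentially step for step: the same pivot $G^{\NS}_t(\Sigma^{\CL}_t)$, the same $\tfrac{1}{2\tau^2 B}$ Lipschitz bound from Lem.~\ref{lem:sigma-smooth} for the stability term, and the same combination of the per-anchor $\|p_i-q_i\|_2\le\Delta_{C,\delta}(B;\tau)$ bound (Lem.~\ref{lem:softmax-reweight}) with anchor-block orthogonality (Lem.~\ref{lem:block-orth}) to get the $1/\sqrt{B}$ factor in the reweighting term. Your remark about zero-padding the NSCL gradient onto the common coordinate block is a sensible explicit version of a bookkeeping step the paper leaves implicit, but it is not a departure from the paper's proof.
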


\begin{proof}
Add and subtract $G^{\NS}_t(\Sigma^{\CL}_t)$ and apply the triangle inequality:
\begin{equation}
\begin{aligned}
&\bigl\|G^{\CL}_t(\Sigma^{\CL}_t)-G^{\NS}_t(\Sigma^{\NS}_t)\bigr\|_F \\
&\le~ \underbrace{\bigl\|G^{\CL}_t(\Sigma^{\CL}_t)-G^{\NS}_t(\Sigma^{\CL}_t)\bigr\|_F}_{\text{(A)}}
~+~ \underbrace{\bigl\|G^{\NS}_t(\Sigma^{\CL}_t)-G^{\NS}_t(\Sigma^{\NS}_t)\bigr\|_F}_{\text{(B)}}.
\end{aligned}
\end{equation}

\emph{Term (B): Lipschitz in $\Sigma$.}
By the temperature-$\tau$ softmax–Hessian bound \eqref{eq:hess-lip},
\[
\text{(B)} ~\le~ \frac{1}{2\tau^2 B}\,\|\Sigma^{\CL}_t-\Sigma^{\NS}_t\|_F.
\]

\emph{Term (A): reweighting gap at fixed $\Sigma^{\CL}_t$.}
Decompose the batch gradient into anchor blocks:
\[
G^\circ_t(\Sigma)~=~\frac{1}{B}\sum_{i\in\mathcal B_t} g^\circ_{t,i}(\Sigma),
\qquad \circ\in\{\CL,\NS\},
\]
where each $g^\circ_{t,i}$ has support only on the coordinates of anchor $i$. For anchor $i$, with temperature $\tau$,
\(
g^{\CL}_{t,i}(\Sigma^{\CL}_t)=(1/\tau)(p_i-e_{i'}),\;
g^{\NS}_{t,i}(\Sigma^{\CL}_t)=(1/\tau)(q_i-e_{i'}),
\)
so $g^{\CL}_{t,i}(\Sigma^{\CL}_t)-g^{\NS}_{t,i}(\Sigma^{\CL}_t)=(1/\tau)(p_i-q_i)$ on that block. By block orthogonality (Lem.~\ref{lem:block-orth}),
\[
\text{(A)}
~=~\frac{1}{B}\Bigl\|\sum_{i\in\mathcal B_t} \frac{1}{\tau}(p_i-q_i)\Bigr\|_F
~=~\frac{1}{\tau B}\sqrt{\sum_{i\in\mathcal B_t}\|p_i-q_i\|_2^2}.
\]
On the event of Cor.~\ref{cor:comp-hp}, Lem.~\ref{lem:softmax-reweight} gives
$\|p_i-q_i\|_2\le \Delta_{C,\delta}(B;\tau)$ for every anchor, hence
\[
\text{(A)} ~\le~ \frac{1}{\tau B}\sqrt{B\,\Delta_{C,\delta}(B;\tau)^2}
~=~ \frac{1}{\tau}\cdot\frac{\Delta_{C,\delta}(B;\tau)}{\sqrt{B}}.
\]
Combining the bounds on (A) and (B) yields the claim.
\end{proof}

\SimCoupling*

\begin{proof}
Condition on the event of Cor.~\ref{cor:comp-hp} (which holds with probability at least $1-\delta$). Let
$D_t:=\|\Sigma^{\CL}_t-\Sigma^{\NS}_t\|_F$. From the coupled updates \eqref{eq:Sigma-descent},
\[
\Sigma^{\CL}_{t+1}-\Sigma^{\NS}_{t+1}
~=~\bigl(\Sigma^{\CL}_t-\Sigma^{\NS}_t\bigr)-\eta_t \left(G^{\CL}_t(\Sigma^{\CL}_t)-G^{\NS}_t(\Sigma^{\NS}_t)\right),
\]
hence
\[
D_{t+1}\ \le\ D_t+\eta_t\,\bigl\|G^{\CL}_t(\Sigma^{\CL}_t)-G^{\NS}_t(\Sigma^{\NS}_t)\bigr\|_F.
\]
Add and subtract $G^{\NS}_t(\Sigma^{\CL}_t)$ and apply Lem.~\ref{lem:grad-gap-sim} (reweighting gap $+$ Lipschitz with temperature $\tau$):
\[
\bigl\|G^{\CL}_t(\Sigma^{\CL}_t)-G^{\NS}_t(\Sigma^{\NS}_t)\bigr\|_F
~\le~ \frac{1}{\tau}\cdot\frac{\Delta_{C,\delta}(B;\tau)}{\sqrt{B}}
+\frac{1}{2\tau^2 B}\,D_t.
\]
Therefore,
\[
D_{t+1}~\le~ \Bigl(1+\frac{\eta_t}{2\tau^2 B}\Bigr)D_t ~+~\eta_t\,\frac{1}{\tau}\cdot\frac{\Delta_{C,\delta}(B;\tau)}{\sqrt{B}}.
\]
Let $\alpha_t:=\tfrac{\eta_t}{2\tau^2 B}$ and $\gamma_t:=\eta_t\,\tfrac{\Delta_{C,\delta}(B;\tau)}{\tau\sqrt{B}}$. With $D_0=0$ (shared initialization), the discrete Gr\"onwall/product form gives
\[
D_T ~\le~ \sum_{s=0}^{T-1}\gamma_s\prod_{u=s+1}^{T-1}(1+\alpha_u)
~\le~ \exp\Bigl(\sum_{u=0}^{T-1}\alpha_u\Bigr)\,\sum_{s=0}^{T-1}\gamma_s,
\]
using $\prod_u(1+\alpha_u)\le \exp(\sum_u\alpha_u)$. Substituting $\alpha_t,\gamma_t$ yields
\[
D_T ~\le~ \exp\Bigl(\frac{1}{2\tau^2 B}\sum_{t=0}^{T-1}\eta_t\Bigr)\;
\frac{1}{\tau\sqrt{B}}\Bigl(\sum_{t=0}^{T-1}\eta_t\Bigr)\,\Delta_{C,\delta}(B;\tau),
\]
as desired.
\end{proof}

{\bf Consequences for CKA and RSA.\enspace}

\CKAlowerbound*

\begin{proof}
Let $A_T:=\|K^{\CL}_T\|_F>0$ and $\Delta_{K,T}:=\|K^{\CL}_T-K^{\NS}_T\|_F$, where all norms are Frobenius.
Then
\begin{equation}
\begin{aligned}
\langle K^{\CL}_T,K^{\NS}_T\rangle
&= \big\langle K^{\CL}_T,\,K^{\CL}_T+(K^{\NS}_T-K^{\CL}_T)\big\rangle\\
&= \|K^{\CL}_T\|_F^2+\big\langle K^{\CL}_T,\,K^{\NS}_T-K^{\CL}_T\big\rangle 
~\ge~ A_T^2 - A_T\,\Delta_{K,T},   
\end{aligned}
\end{equation}
by Cauchy–Schwarz. By the triangle inequality, $\|K^{\NS}_T\|_F\le A_T+\Delta_{K,T}$. Hence
\[
\CKA_T
~=~\frac{\langle K^{\CL}_T,K^{\NS}_T\rangle}{\|K^{\CL}_T\|_F\,\|K^{\NS}_T\|_F}
~\ge~ \frac{A_T^2-A_T\Delta_{K,T}}{A_T(A_T+\Delta_{K,T})}
~=~ \frac{1-\Delta_{K,T}/A_T}{1+\Delta_{K,T}/A_T}.
\]
Next, $K^{\circ}_T=H\Sigma^{\circ}_T H$ with the centering projector $H=I-\tfrac1N\mathbf 1\mathbf 1^\top$, so $\Delta_{K,T}=\|H(\Sigma^{\CL}_T-\Sigma^{\NS}_T)H\|_F\le \|\Sigma^{\CL}_T-\Sigma^{\NS}_T\|_F$ because $\|H\|_{2\to2}=1$. By Thm.~\ref{thm:sim-coupling}, with probability at least $1-\delta$,
\[
\|\Sigma^{\CL}_T-\Sigma^{\NS}_T\|_F
\ \le\
\exp\Bigl(\frac{1}{2\tau^2 B}\sum_{t=0}^{T-1}\eta_t\Bigr)\;
\frac{1}{\tau\sqrt{B}}\Bigl(\sum_{t=0}^{T-1}\eta_t\Bigr)\,\Delta_{C,\delta}(B;\tau).
\]
Combining the last two equations yields the lower bound on $\CKA_T$ with probability at least $1-\delta$.
\end{proof}

\RSAlowerbound*

\begin{proof}
Let $M=\binom{N}{2}$ and let $C:=I-\tfrac{1}{M}\mathbf 1\mathbf 1^\top$ be the centering projector in $\mathbb R^M$.
Write $a_c:=Ca_T$ and $b_c:=Cb_T$. Then
\[
\RSA_T~=~\frac{\langle a_c,b_c\rangle}{\|a_c\|_2\,\|b_c\|_2}.
\]
For any nonzero $u$ and any $v$ in an inner-product space,
\[
\langle u,v\rangle~=~\langle u,u+(v-u)\rangle~=~\|u\|_2^2+\langle u,v-u\rangle
~\ge~ \|u\|_2^2-\|u\|_2\,\|v-u\|_2,
\]
and $\|v\|_2\le \|u\|_2+\|v-u\|_2$. Therefore,
\[
\frac{\langle u,v\rangle}{\|u\|_2\,\|v\|_2}
\ \ge\ \frac{1-\|v-u\|_2/\|u\|_2}{1+\|v-u\|_2/\|u\|_2}.
\]
Apply this with $u=a_c$ and $v=b_c$ to obtain
\[
\RSA_T\ ~\ge~ \frac{1-\|b_c-a_c\|_2/\|a_c\|_2}{1+\|b_c-a_c\|_2/\|a_c\|_2}.
\]
Since $C$ is an orthogonal projector, $\|b_c-a_c\|_2=\|C(b_T-a_T)\|_2\le \|b_T-a_T\|_2$.
By construction of the RDM vectors,
\[
b_T-a_T ~=~ -\,\mathrm{vec} \left(\mathrm{off}\bigl(\Sigma^{\NS}_T-\Sigma^{\CL}_T\bigr)\right),
\]
so $\|b_T-a_T\|_2=\|\mathrm{off}(\Sigma^{\NS}_T-\Sigma^{\CL}_T)\|_F\le \|\Sigma^{\NS}_T-\Sigma^{\CL}_T\|_F$.
Finally, by Thm.~\ref{thm:sim-coupling}, with probability at least $1-\delta$,
\[
\|\Sigma^{\NS}_T-\Sigma^{\CL}_T\|_F
~\le~
\exp\Bigl(\frac{1}{2\tau^2 B}\sum_{t=0}^{T-1}\eta_t\Bigr)\;
\frac{1}{\tau\sqrt{B}}\Bigl(\sum_{t=0}^{T-1}\eta_t\Bigr)\,\Delta_{C,\delta}(B;\tau).
\]
Combining the last three displays yields the stated $(1-r)/(1+r)$ lower bound on $\RSA_T$ after substituting $\|a_c\|_2=\sqrt{M}\,\sigma_{D,T}$.
\end{proof}

\end{document}